\DeclareMathOperator*{\argmax}{arg\,max}
\DeclareMathOperator*{\argmin}{arg\,min}
\newcommand{\indep}{\rotatebox[origin=c]{90}{$\models$}}
\newtheorem{theorem}{Theorem}[section]
\newtheorem{lemma}[theorem]{Lemma}
\newtheorem{definition}[theorem]{Definition}
\begin{document}

% If your paper is accepted and the title of your paper is very long,
% the style will print as headings an error message. Use the following
% command to supply a shorter title of your paper so that it can be
% used as headings.
%
%\runningtitle{I use this title instead because the last one was very long}

% If your paper is accepted and the number of authors is large, the
% style will print as headings an error message. Use the following
% command to supply a shorter version of the authors names so that
% they can be used as headings (for example, use only the surnames)
%
%\runningauthor{Surname 1, Surname 2, Surname 3, ...., Surname n}

\twocolumn[

\aistatstitle{Competing AI: How does competition feedback affect machine learning?}

\aistatsauthor{ Antonio A. Ginart \And Eva Zhang  \And  Yongchan Kwon \And James Zou}
\aistatsauthor{\texttt{\{tginart,evazhang,yckwon,jamesz\}@stanford.edu}}
\aistatsaddress{ Stanford University, Palo Alto, CA } ]

\begin{abstract}
% \small
\vspace{-12pt}
This papers studies how competition affects machine learning (ML) predictors. As ML becomes more ubiquitous, it is often deployed by companies to compete over customers. For example, digital platforms like Yelp use ML to predict user preference and make recommendations. A service that is more often queried by users, perhaps because it more accurately anticipates user preferences, is also more likely to obtain additional user data (e.g. in the form of a Yelp review). Thus, competing predictors cause feedback loops whereby a predictor's performance impacts what training data it receives and biases its predictions over time. We introduce a flexible model of competing ML predictors that enables both rapid experimentation and theoretical tractability. We show with empirical and mathematical analysis that competition causes predictors to specialize for specific sub-populations at the cost of worse performance over the general population. We further analyze the impact of predictor specialization on the overall prediction quality experienced by users. We show that having too few or too many competing predictors in a market can hurt the overall prediction quality. Our theory is complemented by experiments  on several real datasets using popular learning algorithms, such as neural networks and nearest neighbor methods.
\vspace{-11pt}
\end{abstract}

% \vspace{-8pt}
\section{Introduction}
% \vspace{-5pt}

This paper studies what happens when machine learning (ML) predictors compete against each other. ML systems are deployed in ever more ubiquitous applications ranging from commerce to healthcare. It is becoming increasingly common for competing companies in similar markets to use ML to improve their services and attract customers or users. For example, platforms like Yelp\footnote{\tiny https://blog.yelp.com/2019/08/yelp-is-releasing-a-new-personalized-app-experience} and Tripadvisor\footnote{\tiny https://www.tripadvisor.com/engineering/personalized-recommendations-for-experiences-using-deep-learning/} both use ML to predict user preferences and make personalized recommendations for restaurants and other experiences. A user is more likely to use Yelp over Tripadvisor if they believe Yelp will give them a better recommendation than Tripadvisor (and vice-versa). Many users leave reviews, likes, or other forms of engagement on the platform that they end up using. Finally, the platform can use this feedback as new data to improve their predictive algorithms. The catch is that this form of user data is not an unbiased sample from the general population of users. Rather, it is biased by the fact that users that leave Yelp reviews are more likely to use Yelp more than, say, Tripadvisor. 

%A user could check both websites, and each site separately makes a prediction/recommendation for this user. Then the user selects her preferred recommendation -- say it comes from Yelp. She might then leave a review for this item on Yelp (but not on Tripadvisor). Yelp would benefit by gaining the data from this user, and could subsequently use this data to improve its ML predictor. \tony{I think this explanation is too literal and may contribute to the confusion of the reader later on in section 2}

Competing ML predictors can emerge in diverse settings. Competing search engines predict the most relevant web links given a user's search query. Competing lenders use their ML predictors to assess client credit and offer loan packages. In the ML-as-a-service industry, companies routinely compete to sell their ML algorithms to clients. While the details of the competition vary across settings, a key characteristic is that competition generates temporal dynamics and feedback loops for the learning algorithms. A predictor's performance at one time instance could impact the training data it (or its competitor) observes. Training sets are no longer independent samples from the general population distribution (this is the statistical definition of sampling bias). In turn, this affects the performance and bias of the predictor over time. 

In this paper, we propose a model of competing predictors that captures the key features of these interactions and feedback loops. We investigate several common classes of predictors, including neural networks and nearest-neighbor models. Through experiments and theoretical analysis, we demonstrate that competition leads to specialization: while predictors perform better for specific sub-populations, they perform worse on the general population distribution compared to when there is no competition. Moreover, we show that the quality-of-service experienced by users in this ecosystem of ML predictors is non-monotonic with respect to the number of competing predictors. The quality-of-service for users is diminished when there are too few or too many competing predictors. There is an optimal number of competing predictors that provides the best quality-of-service for users. This optimal number depends on several factors. One critical factor is how well the users can individually identify the predictor that's best suited for them.

\vspace{-7pt}
\paragraph{Contributions} As ML systems become ever more widely used, often by competing companies, it is increasingly important to model and characterize the effects of competition on ML. This topic is under-explored in ML. We summarize our main contributions as follows: 

\begin{enumerate}
% \vspace{-5pt}

    \item We introduce a novel model for competing predictors, which enables both large-scale experiments and theoretical analysis. Our model is generally useful for exploring statistical, algorithmic, and economic phenomena concerning the feedback dynamics between populations of competing predictors and users.
    
    \item Through empirical and theoretical analysis, we show that user decisions create a feedback loop through which each ML predictor specializes toward a particular sub-population over time; often at the cost of worse performance over the general population of users.
    
    \item We analyze the effect of competition on the quality-of-service for the users. We show that the overall quality can be non-monotonic in the number of competing ML predictors.  

\end{enumerate}

% \vspace{-6pt}
\section{Model for competing predictors}\label{sec:model}
% \vspace{-6pt}

We assume there is some supervised ML task that requires algorithms to make predictions for users. The prediction task corresponds to a general population distribution $\mathcal{D}$. For $(x,y) \sim \mathcal{D}$ we can think of $x \in \mathcal{X}$ as representing the relevant user attributes or features and $y \in \mathcal{Y}$ as the predictive target. We have $k$ competing predictors, $\{A^{(1)}, \dots, A^{(k)}\}$. Predictor $i$ has an initial batch of training data $D^{(i)}_0$ that are independently and identically distributed (i.i.d.) samples from $\mathcal{D}$. The initial training data $D^{(i)}_0$ corresponds to the data that each predictor starts with---e.g. data from an initial pilot. We typically think of $|D^{(i)}_0|$ as small. We refer to this initial data as \emph{seed data}. Let $D^{(i)}_t$ denote the dataset that the $i$-th predictor has up to and including time $t$.  $A^{(i)}_t$ is the predictor that is trained on $D^{(i)}_{t-1}$.  At each time $t$, a new sample $(x_t,y_t) \sim \mathcal{D}$ is drawn, representing the $t$-th user in some user stream. Each predictor outputs $\hat{y}^{(i)}_t = A^{(i)}_t(x_t)$. Then, the user selects one of the $k$ predictors as a \emph{winner}, denoted by $w_t$.  The winning predictor $w_t$ gets the datum $(x_t, y_t)$: $D^{(w_t)}_t = D^{(w_t)}_{t-1} \cup \{(x_t, y_t)\}$ and $D^{(i)}_t = D^{(i)}_{t-1}$ for $i \neq w_t$. We can think of predictors as agents seeking to maximize their query rate and users as agents seeking to maximize the accuracy of the predictor they select. We model predictors with both common parametric and non-parametric ML algorithms. For simplicity, in our experiments and theory we will consider competitions in which predictors are \emph{symmetric}, meaning they use the same learning algorithm. We proceed to describe our user model.

%From a classical game-theoretic perspective, this problem seems not only intractable but a poor avenue for understanding real-world phenomena because the rationality and information sets of both predictors and users are limited. so we will instead focus on models 

%= \mathbf{SELECT}(y_t, \{\hat{y}^{(i)}_t\}_{i=1}^k) \in [k]$. 

\vspace{-7pt}
\paragraph{A flexible model for user choice}
We would like to model how a user chooses among the set of competing predictors. For starters, assume that $\mathcal{Y}$ in the prediction task is categorical and the \emph{prediction quality}, $q(y_t, \hat{y}_t) = \mathbf{1}\{y_t = \hat{y}_t\}$, is binary. We consider the case when users do not have prior biases towards any predictor. Instead, we stipulate that the probability that a user selects  predictor should only depend on the tuple $\mathbf{q}_t = (q(y_t, \hat{y}^{(1)}_t),...,q(y_t, \hat{y}^{(k)}_t))$, meaning that user selection probability is only a function of the prediction quality. We denote the user selection operation $\mathbf{SELECT}$. The $\mathbf{SELECT}$ encodes the conditional distribution for $w_t$ over $[k]$ given $\mathbf{q}_t$ where $[k] := \{1, \dots, k\}$. We can think of $\mathbf{SELECT}$ as a randomized operation that outputs the winner, \textit{i.e.}, $w_t = \mathbf{SELECT}(\mathbf{q}_t)$. Equivalently, $w_t$ is a random variable parameterized by $\mathbf{q}_t$. Given that $w_t$ only depends on $\mathbf{q}_t$, the sole parameter that uniquely characterizes a user's choices is the difference in probability that the user selects a correct predictor over an incorrect predictor. We refer to this as the \emph{correctness advantage}, $\mathbf{P}_{\text{ADV}}$, in the system. For any $\mathbf{q}_t$ such that for some $i \neq j$, $y_t = \hat{y}_t ^{(i)}$ and $y_t \neq \hat{y}_t ^{(j)}$, we define \emph{correctness advantage}\footnote{If either $\hat{y}_t ^{(i)}=y_t$ or $\hat{y}_t ^{(i)}\neq y_t$ for all $i \in [k]$, then a user chooses a competitor uniformly at random.} as

\begin{align*}
    \mathbf{P}_{\text{ADV}} := \mathbf{Pr}(w_t = i | \mathbf{q}_t) / \mathbf{Pr}(w_t = j \mid \mathbf{q}_t).
\end{align*}

Without loss of generality, we can equivalently use the widely-used softmax parameterization for $\mathbf{P}_{\text{ADV}}$:

\begin{align*}
    \mathbf{Pr} \left( w_t=i \mid \mathbf{q}_t \right) = \frac{1}{Z} e^{\left(\alpha q(y_t,\hat{y}^{(i)}_t) \right)}
\end{align*}

where $Z = \sum_{j \in [k]} \exp{ \left(\alpha  q(y_t,\hat{y}^{(j)}_t) \right)}$ and thus $\mathbf{P}_{\text{ADV}} = \exp(\alpha)$. 

For simplicity, we use temperature parameter $\alpha$  in lieu of $\mathbf{P}_{\text{ADV}}$ throughout this work; this parametrization does not limit user behavior. To be clear, the user does not necessarily know the true $y_t$ (otherwise there may not be a need for the predictors). Moreover it is not necessary that the user observes all of the predictions $\hat{y}^{(i)}_t$ when making a selection. It is sufficient that the user has some side information on which predictors are likely to be correct. The degree of this correlation can be captured by $\alpha$. This model is simple and flexible, and it captures the essence of the interaction between predictors and users. We can view the temperature parameter $\alpha$ as indicating how informed the user selections are. When $\alpha = 0$, the user has zero information and uniformly at random selects a predictor. As $\alpha$ increases, the user is more likely to select the algorithm that makes the correct prediction. Therefore, $\alpha$ is a natural metric of \emph{information efficiency}  . In many settings, users might be more likely to select a predictor that makes a correct prediction than an incorrect predictor (\textit{i.e.} $\alpha \geq 0$).  
This might be because users have some private signals or experiences, and also because users typically want to pick the highest quality prediction. Because this is more realistic, we primarily focus on $\alpha \geq 0$ for our experiments and analysis. 

% Although computing $\mathbf{SELECT}$ does require the predictions $\{\hat{y}_t^{(i)}\}_{i=1}^k$ as input, this does not imply that the user necessarily observes all or even any of the predictions $\hat{y}^{(i)}_t$ when making a selection. This dependency merely captures the assumption that users have some degree of side-information about the correctness of the predictions and will act based on this side-information. 
% While our model satisfies basics assumptions about user behavior, it is also sufficiently generic to avoid the pitfall of overly extrapolating from the specific details of a narrower model. Crucially, it is also easily parameterized. 

%In the real-world, users have \emph{some} signal about the predictors, perhaps based on past experiences or hearsay. Because users are seeking to get the highest quality predictions, it makes sense they should be able to do better than random guessing. 
%This user bias is captured by $\alpha \geq 0$. As $\alpha$ increases, the user is more likely to select the algorithm with a correct prediction. 
%Naturally, correctness advantage, $\mathbf{P}_{\text{ADV}}$, is increasing in $\alpha$. 
For simplicity, we will largely deal with temperature $\alpha$ for the remainder of the paper, while remembering the direct connection between $\alpha$ and the correctness advantage. Another advantage of the softmax parameterization is that it easily generalizes to regression settings by replacing $q$ with any generic loss function $\ell$ (such as MSE). In the main text of this work, we will assume $\alpha$ is a system constant and thus is fixed for all users. In Appendix B, we further generalize and let $\alpha$ depend on the particular user that is making the selection by sampling each user's $\alpha$ parameter from a standard normal distribution. This reflects that individual users have varying amounts of prior information about the predictors. We found that this yields in highly similar results (refer to Appendix B).

One simplification that we make in our model is that only the selected predictor receives $(x_t, y_t)$. There are several possible modeling variation on this: for example, one could allow the non-selected predictors to add $x_t$ (not $y_t$) to its database and this could be used for semi-supervised learning. One could also allow the user selection to depend not just on the current predictions but also on predictor reputation. Additionally, one could assume that only some fraction of users actually leave feedback, which would mean that the winner observes $y_t$ only some fraction of the time. These are interesting directions for follow up exploration. In this paper, we make the simplifications in order to capture the key essence due to competition in purely supervised learning.

\vspace{-3pt}
\section{Experiments}
\vspace{-3pt}

We present simulations of competing learners in the supervised (Sec. 3.1) and collaborative filtering settings (Sec. 3.2). We investigate the effects of competition on the predictors and the users and empirically characterize predictor specialization and non-monotonicity of the quality-of-prediction.

\vspace{-2pt}
\subsection{Supervised Learning}
\vspace{-2pt}

We use several popular benchmark datasets for $\mathcal{D}$: \texttt{Postures} \citep{gardner20143d, Dua:2019}, \texttt{Adult Income} \citep{Dua:2019}, and \texttt{FashionMNIST} \citep{xiao2017}. For \texttt{Postures} and \texttt{Adult Income} in particular, each datum corresponds to data from one individual, which is particularly appropriate for our motivating competition setting.  
%In the place of $\mathcal{D}$, we use supervised learning benchmark datasets for predicting or classifying personal attributes (see below). 
We explore the effects of different information efficiency value $\alpha$. For each dataset, we fix a small number of i.i.d. seed samples (order $10^0$ - $10^2$) and run the simulation for a large number of rounds (order $10^3$ to $10^4$).  %We use Postures, Adult, and FashionMNIST benchmark datasets. %to Appendix for 
We perform our experiments with the widely-used multi-layer perceptron (MLP) as an example of parametric predictors and nearest-neighbors (NN) as an example of  non-parametric predictors. 
%opt to run our simulations with simple, generic choices of learning algorithms, so to focus on the competition between the predictors rather than the intricacies of a particular choice of learning algorithm. We choose a simple multi-layer perceptron (MLP) as a parametric classifier and nearest-neighbors (NN) as a non-parametric. 
In Appendix B we also report similar simulations conducted with a logistic regression model as well as full details of all the experiments. While there are many other classes of predictors to explore, we believe that the standard models used here cleanly capture the key insights.

\vspace{-5pt}
\paragraph{Competition drives predictor specialization}

%\james{The writing here needs work. I didn't have time to rewrite, so I reordered the paragraphs and sketched what you should say. }

\begin{figure*}[t]
 \begin{center}
  \begin{subfigure}[l]{0.99\textwidth}
    \includegraphics[width=\textwidth]{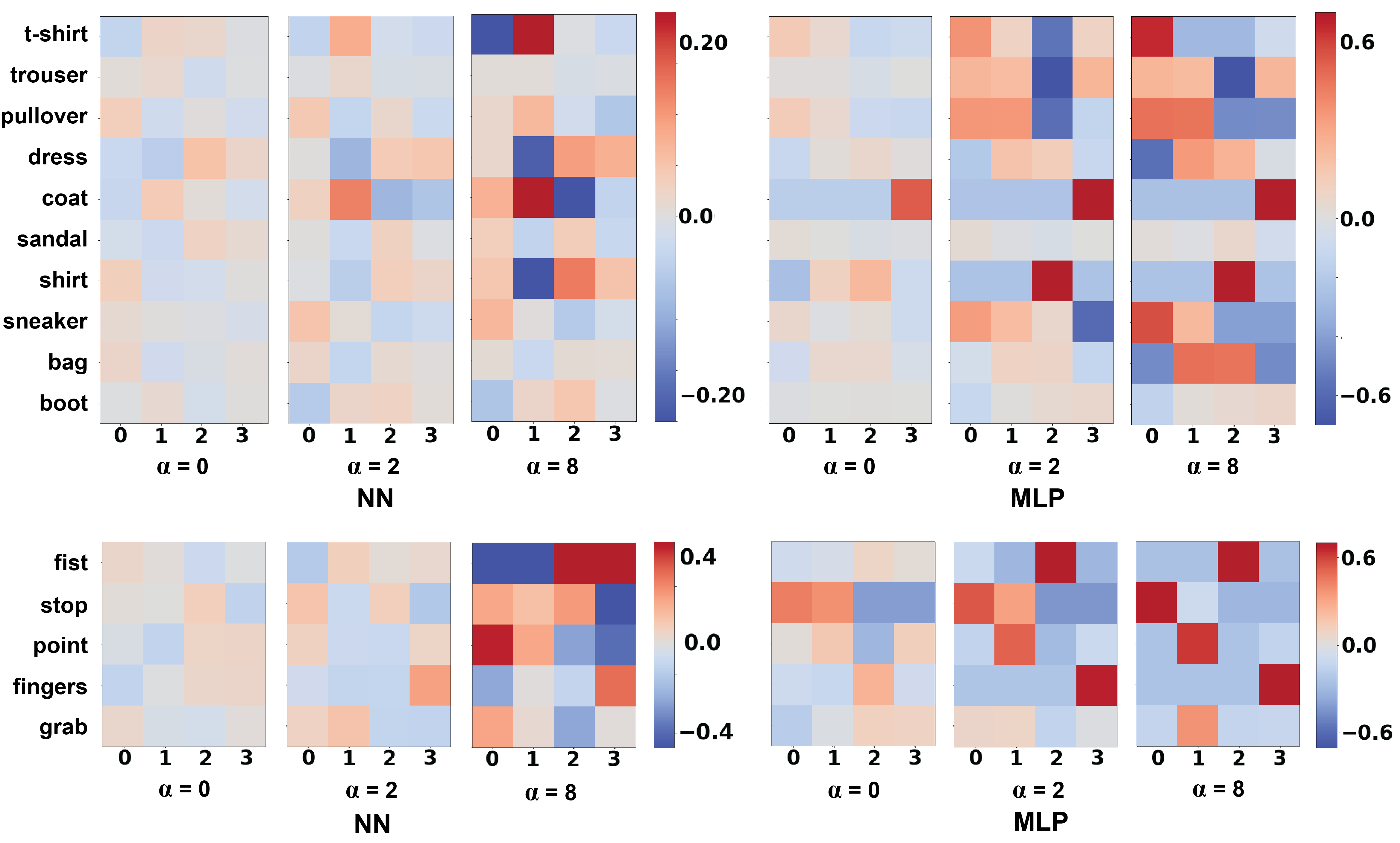}
   %\caption{\footnotesize NN, $\alpha = (0, 2, 8)$}
  % \label{fig:mf_perf}
  \end{subfigure}
  
\vspace{-3pt}  
\caption{  Predictor specialization heatmaps for FashionMNIST (top row) and Postures (bottom row) with NN (left column) and MLP (right column). For each dataset and algorithm we include heatmaps of $\alpha$ at low (0), medium (2), and high (8) values (left to right). Each heatmap is a \#(classes) $\times$  \#(predictors) grid. The $ij$-th block in a grid indicates the difference between the average class-conditional accuracy for the $i$-th class and the $j$-th predictor's class-conditional accuracy for the $i$-th class. \ Predictors are indexed by an arbitrary id number and classes are labeled on the left. Red (blue) indicates an accuracy that is higher (lower) than average, and white is average.
}
\label{fig:firm_spec_supervised}
 \end{center}
 \vspace{-12pt}
\end{figure*}

\begin{figure}[b!]
% \vspace{-10pt}
 \begin{center}
%   \begin{subfigure}[l]{0.99\columnwidth}
    \includegraphics[width=\columnwidth]{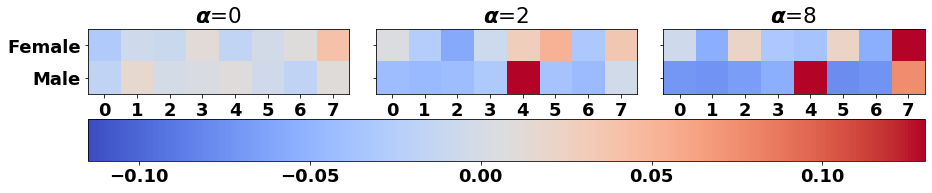}
  % \caption{\footnotesize NN, Postures}
%   \end{subfigure}
%  \vspace{-5pt}
\caption{ Predictor specialization heatmap for Adult with MLP with 8 competitors. Rows in the grid indicate male vs. female individuals. Red (blue) indicates an accuracy that is higher (lower) than average, and white is average.}
\vspace{-9pt}
\label{fig:gender_spec}
 \end{center}
 %\vspace{-12pt}
\end{figure}

We performed experiments with four competing predictors (similar results are seen for other number of predictors). In Fig.~\ref{fig:firm_spec_supervised} we present heatmaps indicating the accuracy of the four competing predictors on each of the label classes. Red (blue) indicates that a predictor is better (worse) than the average predictor on that class.

\begin{figure*}[t]
 \begin{center}
  \begin{subfigure}[l]{0.99\textwidth}
    \includegraphics[width=\textwidth]{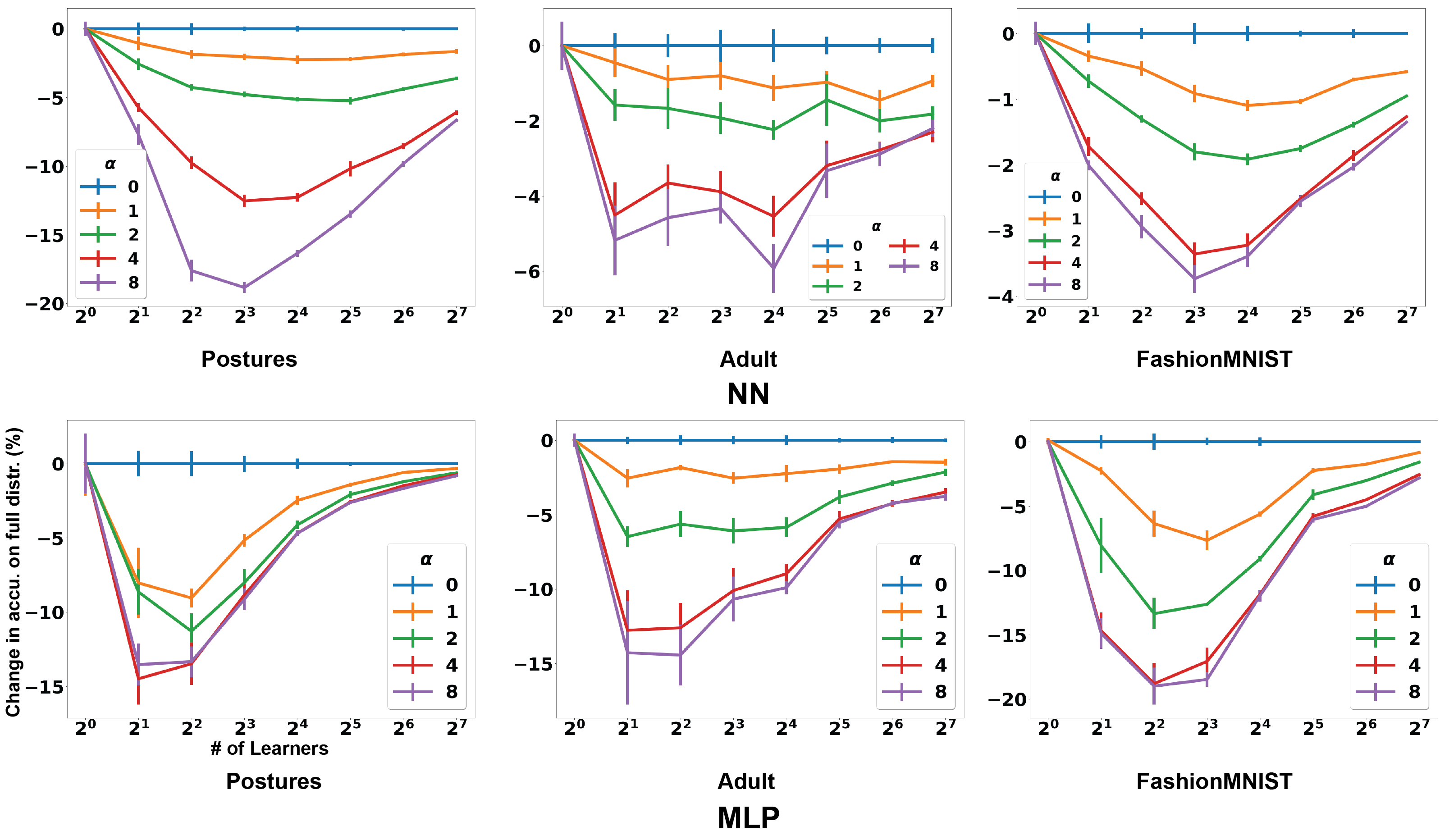}
  % \caption{\footnotesize NN, Postures}
  \end{subfigure}
 \vspace{-5pt}
\caption{  How specialization affects predictor performance: number of predictors (x-axes, log-scale) vs. change in accuracy over general population $\mathcal{D}$ (y-axes, in percentage) for NN and MLP on 3 datasets. To measure the effect of competition, change in accuracy is with respect to a baseline simulation in which winning predictors get an i.i.d. sample instead of the one that selected it to remove selection bias. Confidence intervals are standard error of the mean for 5 replicates.}
\vspace{-9pt}
\label{fig:cost_of_comp}
 \end{center}
 %\vspace{-12pt}
\end{figure*}

When $\alpha = 0$, the user uniformly at random selects a predictor and the lack of competition results in all of the predictors being close to average accuracy.  As $\alpha$ increases, we see a clear trend towards greater variations in class-conditional accuracy among the predictors, indicating  specialization. A stark example of this can be observed for the competing MLPs on the Postures dataset. For large $\alpha$, the four predictors specialize over the five classes such that each predictor strongly favors only one particular class (except for predictor 3 which favors two classes). Predictor 0 specializes in detecting \texttt{stop}, predictor 2 specializes in \texttt{fist}, predictor 3 specializes in  \texttt{fingers} and predictor 1 is split between \texttt{point} and \texttt{grab}. Outside of each predictor's specialty class, the performance is low across the board. The predictor specialization not only occurs over the classes but also within the features. An interpretable example of this is for the binary gender feature in the \texttt{Adult Income} dataset (Fig.~\ref{fig:gender_spec}). At $\alpha = 0$ all predictors are close to average accuracy. As $\alpha$ increases, we see that predictor 4 and eventually predictor 7 specialize in males versus females, respectively. This illustrates how competition could lead to ML algorithms that specialized to specific demographic groups.

Larger $\alpha$ creates a positive feedback loop that leads to specialization. Random variation in the initial training batches generates some heterogeneity in the predictors. Users are likely to select the predictor that is best suited for them with large $\alpha$. This leads that predictor to improve its model specifically for that sub-population.  In turn, this results in an increased likelihood that members of that sub-population select said predictor. 
While a common business strategy is for firms to intentionally specialize to particular sub-populations from the onset  \citep{balassa1989comparative, yang2015specialization}, the interesting aspect of the phenomena here is that specialization emerges naturally (and unintentionally) due to the competition over data. % implicitly and independently of the predictor deciding to specialize for a particular sub-population. In other words, this specialization is entirely unintended by the ML algorithm, and as shown in Fig. 2 \ref{fig:cost_of_comp}, can have detrimental effects and real-world consequences for the firm.

%From an economic perspective, this is similar to the notion of a \emph{collective intelligence}, like how certain decentralized markets can set clearing prices that balance supply and demand and or allocate production in an optimal or near-optimal manner.

 %\vspace{-12pt}

We next quantify how the competition affects the predictor's performance on the general population distribution, which is measured as its average accuracy over $\mathcal{D}$. Note that this $\mathcal{D}$ is different from the distribution of data points from a user at any particular time --- as we shall see next, the predictor does better on the latter distribution.
% Note that this $\mathcal{D}$ is different from the distribution of users that actually selects any given predictor at any particular time --- as we shall see next, the predictor does better on the latter distribution.
Fig.~\ref{fig:cost_of_comp} measures the change in accuracy over $\mathcal{D}$ compared to the $\alpha = 0$ baseline, which uses the same number of training samples but removes competition.

%with respect to the zero information ($\alpha = 0$) baseline, we can observe an additional penalty that is attributable to $\emph{which}$ samples each predictor observes, not just how many. We report this in Fig. \ref{fig:cost_of_comp}. 
There is a consistent trend that increasing the information efficiency $\alpha$ at any number of predictors results in lower accuracy on $\mathcal{D}$. The drop in accuracy is largest when there is an intermediate number of predictors. This is because the average number of samples each predictor receives decreases when there are more predictors, since the total number of rounds, or equivalently the total number of samples, is fixed. With fewer data points, there's less feedback to bias the predictor.
%This  is due to the fact that total number of rounds (and therefore data samples) is held fixed the average number of samples per predictor decreases with an increase in the number of predictors. Since the data itself is the vehicle for the biasing feedback loop, it is expected that the bias is smaller when each predictor wins less data and thus repeats the feedback loop fewer times. 
The decrease in accuracy for the overall distribution could be costly when the company tries to broaden its user-base to the entire $\mathcal{D}$. This is an important consequence of specialization.

%Based on the observed predictor specialization, it is natural to ask if the competition's feedback bias results in a decrease in individual model accuracy with respect to the general population. If so, this would be a cost of competition incurred on the learning algorithms.  We emphasize that this cost of competition is distinct from the obvious cost due to the division of the total number of observations (samples) among the population of predictors (see Appendix for these baselines). This particular cost manifests itself when a firm tries  to broaden its user-base. Furthermore, assuming the predictor could improve its accuracy outside of its specialty without impacting the accuracy within, this would be a strictly preferable. 

\vspace{-9pt}
\paragraph{Prediction quality for users}
%\james{Do we define Quality of Service? Maybe clearer to just call it overall user quality and define it explicitly (what time point is this quality evaluated?).}
We shift our focus to analyze the prediction quality experienced by the users. We define the \emph{prediction quality for users} as the average accuracy of the selected predictor averaged over all the rounds of competition: $\frac{1}{T} \sum_{t=1} ^T \mathbf{1}( \hat{y}_t ^{(w_t)} = y_t)$. Fig.~\ref{fig:user_non_monotone} shows how this quality varies as the number of predictors  (x-axes) and $\alpha$ (different colors) change for NN and MLP applied to three datasets. In each panel, the total number of datapoints (i.e. users) is fixed. 
 Prediction quality for users is consistently higher when users have more information (larger $\alpha$) when picking the predictor.

 \begin{figure*}[t]
 \begin{center}
    \begin{subfigure}[l]{0.99\textwidth}
    \includegraphics[width=\textwidth]{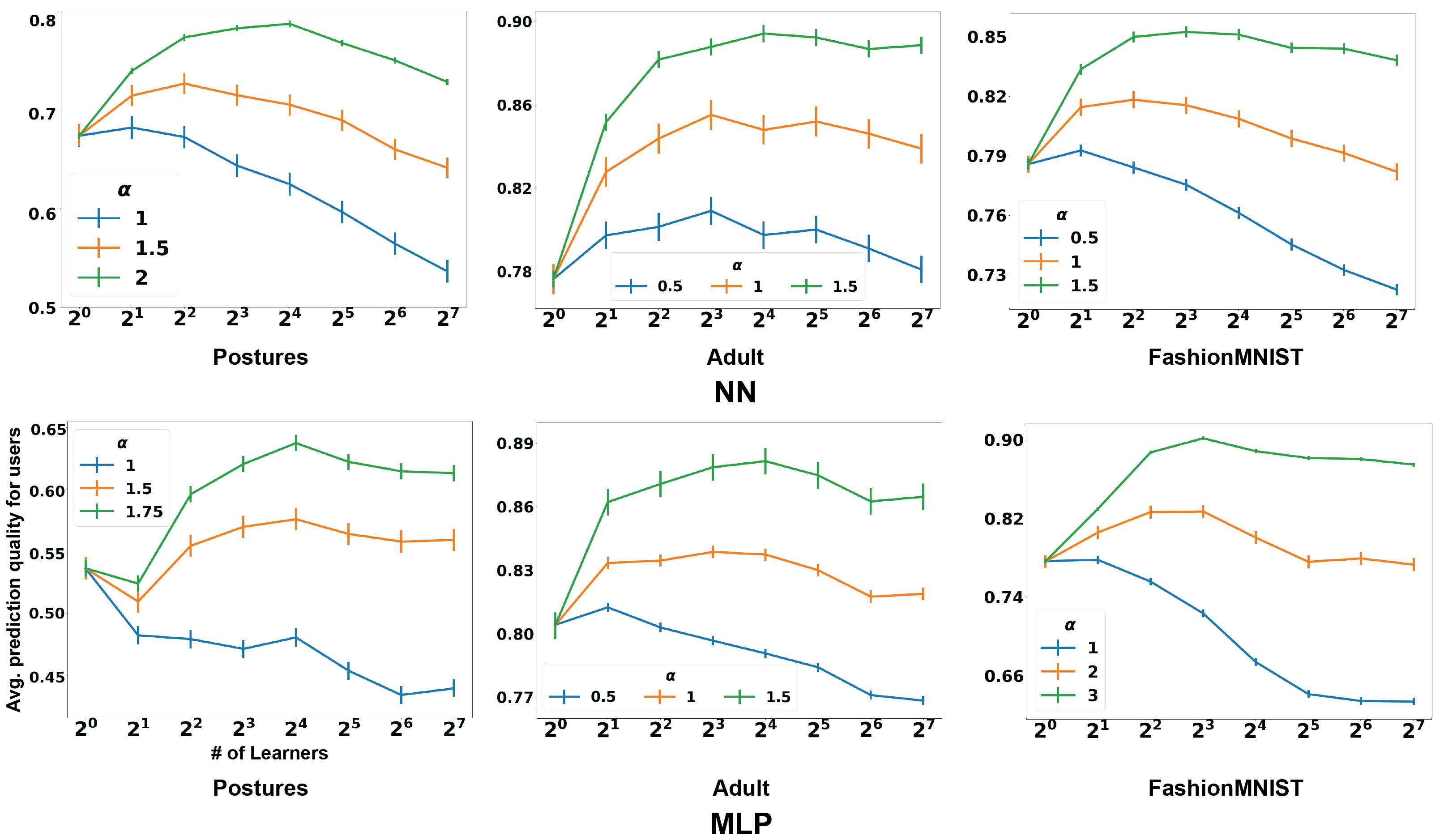}
  % \caption{\footnotesize NN, Postures}
  \end{subfigure}

\caption{  Prediction quality for users: number of predictors (log-scale) vs. avg. prediction quality for users with NN and MLP on 3 datasets. Prediction quality is averaged over all of the rounds in the simulation. Confidence intervals are standard error of the mean for 5 replicates.} 
  \label{fig:user_non_monotone}
 \end{center}
 \vspace{-15pt}
\end{figure*}

Interestingly, we find that the prediction quality for users can be non-monotonic. For example, in Postures data with competing NNs, the highest quality is achieved with 16 competing predictors; having too few or too many predictors decreases quality. The intuition for this phenomenon is as follows. When there is just one predictor, a user has no choice and changing $\alpha$ has no effect. With more predictors and relatively high information efficiency, each user can select the predictor that is likely to be accurate for it, and hence the prediction quality improves. However, when there are too many predictors, each predictor gets fewer training data (recall that the total number of data points is fixed). Hence none of the predictors is very accurate and the overall quality starts to decline. 
% In intermediate information regimes, we find that prediction quality is maximized when there are a few predictors, but not just one. This is interesting in that the lift in prediction quality caused by higher information efficiency is overall higher for few predictors but in the limit of a larger number, the data crunch results in lower prediction quality. 
In Sec.~4, we show this phenomena is a  mathematical consequence of the learning competition under some mild conditions. The prediction quality over a full range of information efficiencies depicting the monotone increasing and decreasing regimes (for near-infinite and near-zero $\alpha$) can be found in Appendix B.
% Equally plain is that prediction quality is decreasing in the number of predictors when users have zero information. Thus, the most interesting regime is when $\alpha$ is not too high or too low. Then, we see that prediction quality is tangibly non-monotone in the number of predictors and both one and many predictors are sub-optimal.  

%In low information regimes, we find that consumer welfare is decreases monotonically with the number of predictors. In this scenario, consumers lack sufficient signal to make informed decisions about which predictors they should query, and the thus, as division of the data amongst the predictors is the first-order effect, resulting in lower consumer welfare.

%In high information regimes, we find that consumer welfare is increasing monotonically. In this scenario, consumers are highly informed, and have strong priors about which predictors can provide them with high quality-of-service.

%\subsection{}

%\subsection{Supervised Learning}
\vspace{-3pt}
\subsection{Extension to Collaborative Filtering}
\vspace{-3pt}

%\james{Previous experiments capture the setting where each user is a single data point that appears once. Here we consider a collaborative filtering setting where each user contribute multiple data points.   }
%Thus far, we have presented simulations that model user information at a population level. Here, we instead treat both users and predictors as individual RL agents that learn over time in a multi-agent collaborative filtering setting. 
%We find that we can recover the key phenomena from the supervised learning case without explicitly parameterizing the information efficiency of user decisions. Instead, the phenomena arise organically out of the interactions and learning between the predictors and users. For this section, we will use predictors and recommenders interchangable.

%\james{Need more precise description of the setup. Explain what happens at each step $t$ like we did in Sec. 2}

%The setting used here is a natural generalization of the supervised competition to a collaborative filtering competition and a formal description can be found in the Appendix\footnote{Collaborative filtering competitions are also novel instances of our framework and provide a rich setting for exploration various questions outside of the scope of this particular work.}.

Previous experiments capture the setting where each user is a single data point that appears once. Here we experimentally investigate a collaborative filtering extension where each user contributes multiple data points. Collaborative filtering competitions follow the same structure as described in Sec. 2, with the primary differences being a new $\mathbf{SELECT}$ operation and allowing repeated samples from each user. As before, we have a set of $k$ competing recommenders. We also have a set of $m$ distinct users, $\{u^{(1)}, \dots, u^{(m)} \}$ that are seeking recommendations over a set of $r$ items (for simplicity, we assume that these items are shared across the recommenders). At each round, a uniformly at random user $u_t \in \{u^{(1)}, \dots, u^{(m)} \}$ selects one of $k$ recommenders: $w_t = \mathbf{SELECT}(u_t)$.
Then recommender $w_t$ recommends an item for $u_t$: $A_{t}^{(w_t)}(u_t) \in [r]$. There is a latent preference matrix $M \in [0,1]^{r \times m}$, where $M_{ij}$ is the probability that user $u^{(j)}$ interacts with the $i$-th item (pCTR). The ``winning" recommender, $w_t$ observes the interaction between a user and item as feedback. Precisely, recommender $w_t$ observes $(x_t,\Tilde{y}_t)$ where $x_t:=(i,j)$ is simply a pair of the item $i$ and the user $j$, and $\Tilde{y}_t \sim \mathbf{Bernoulli}(M_{ij})$ describes if there is an interaction when item $i$ is recommended to user $j$. As before: $D^{(w_t)}_t = D^{(w_t)}_{t-1} \cup \{ (x_t,\Tilde{y}_t) \}$ and $D^{(i)}_t = D^{(i)}_{t-1}$ for $i \neq w_t$.

Users want to maximize the preference scores of the items that they get recommended to them, and recommenders want to maximize the number of queries for items they receive from users. In our experiments, each user keeps track of the quality of past recommendations from each recommender and individually solves a multi-arm bandit \citep{MAL-068} problem with recommenders as arms when it is their turn to $\mathbf{SELECT}$. %Specifically, this entails that each users keeps running averages of the their preference in items recommended by each recommender. 
Each recommender similarly solves an  an online matrix factorization problem \citep{schafer2007collaborative} based on the observed user-item interactions using alternating least-squares \citep{hastie2015matrix}. 
%From the recommender's perspective, we treat the problem as an online matrix factorization and solve it with alternating least-squares \citet{hastie2015matrix}. Both users and recommenders use $\epsilon$-greedy exploration that is greedy in the limit of infinite exploration \citet{singh2000convergence}.  
We generate $M$ as the product of low-rank factors with i.i.d. Gaussian entries. %low-rank matrix with  by using i.i.d unit normals for its factors and then scale the matrix onto $[0,1]$. 
We run the simulations for $2 \times 10^5$ rounds. Appendix B contains the formal description of the model and details about the protocol and implementation.

Fig.~\ref{fig:cf_case_study} shows the collaborative filtering results. Fig.~\ref{fig:cf_case_study} (left) is analogous to Fig.~\ref{fig:cost_of_comp}; the y-axes quantifies how well each recommender performs over the general population distribution of users. This performance is measured as the expected probability that a randomly selected user decides to interact with the item suggested by this recommender. As in the setting of competing predictors, competition and specialization leads to a decrease in the performance of recommenders for the general user distribution. Fig.~\ref{fig:cf_case_study} (right) is analogous to Fig.~\ref{fig:user_non_monotone}; the y-axes there is the prediction quality experienced by the users.
We find a similar phenomenon as before:  having too few or too many recommenders can decrease the quality experienced by users. These collaborative filtering experiments demonstrate that the phenomena that competition leads to algorithmic specialization and that there is a sweet spot for the number of ML models can hold in diverse settings.     

%As is evident from Fig. \ref{fig:cf_case_study}, the key trends from the supervised learning competition continue to hold in a significantly more complex multi-agent online recommendation setting. Since users are represented by embedding vectors in this simulation, we can also test the hypothesis that similar users query the same recommenders. We do this by computing the per-query average cosine similarity between users for different recommenders. We find that this quantity is $10 \%$ higher than it would be if queries were random. This yields a notion of specialization that is distinct from the class-conditional specialization measured in the supervised competitions --- a notion that is compelling from a practical perspective, indicating that the clients of the same recommenders tend to be more similar in preferences than the clients of different recommenders. Prediction quality for users also displays the same non-monotonicty as the supervised learning case. Overall, our extension to collaborative filtering indicates that the phenomena we discover should be widely applicable to diverse competitive learning scenarios.

\begin{figure*}[h!]
 \begin{center}
 \vspace{-8pt}
     \begin{subfigure}[l]{0.95\textwidth}
    \includegraphics[width=\textwidth]{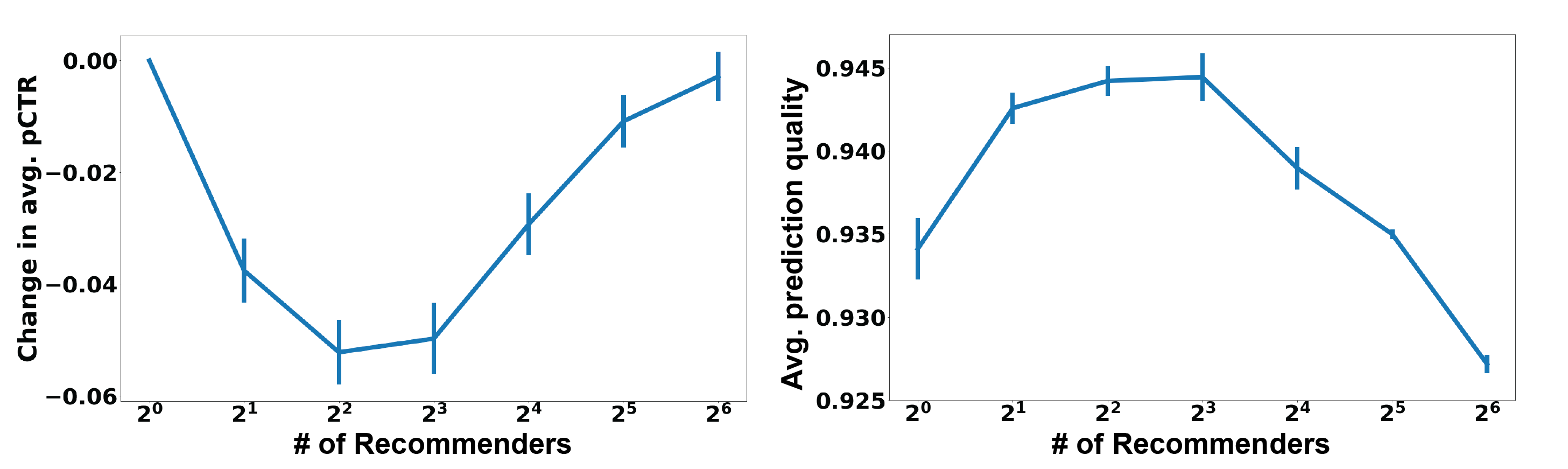}
  % \caption{\footnotesize NN, Postures}
  \end{subfigure}

\caption{  Collaborative filtering competition: Recommender pCTR over general population (left) and avg. prediction quality for users (right) for varying number of recommenders (log-scale). Change in pCTR (left) is with respect to an otherwise identical baseline simulation in which winning recommenders always get an i.i.d. user sample instead of the user that selected it. Prediction quality (right) is averaged out over all the round in the simulation. Confidence intervals are std. error of the mean from 5 replicates.} 

\label{fig:cf_case_study}
 \end{center}
 \vspace{-12pt}
\end{figure*}

\vspace{-3pt}
\section{Theoretical analysis}
\vspace{-3pt}

We carry out theoretical analysis to further understand and support our empirical findings. Here, we  assume a binary classification task for simplicity. Complete proofs for all claims are in Appendix C. The analysis in this section can be interpreted as formalizing sufficient conditions for the empirically observed effects of competition to emerge. %In Section 4.1, we give bounds for the average accuracy penalty incurred due to competition, which theoretically supports the effects shown in previous section. In Section 4.2, we affirm that under mild conditions, there must be an $\alpha$ for the oligopoly regime. 

%\begin{theorem} (Conjecture)

%Let $h_i^{(k)}$ be the hypothesis of agent $i$ at time $k$. Let $\Pi_{i}^{(k)}$ be the sampling distribution for agent $i$ at time $(k)$. Suppose predictors are using learning algorithm $A$. Let $D$ be the empirical dataset.

%If the hypothesis collective converges, it must be that 1) $h_i^* = \lim_{n \rightarrow \infty} A(\Pi_{i}^*)$ for all $i$. 2) $D_i^{k} \rightarrow \Pi_i^*$ as $k \rightarrow \infty$ for all $i$. 3) $P_{XY}$ is mixture of $\Pi_i^*$.

%\end{theorem}

%\begin{definition} (Probability matching decision)

%For a ground truth conditional $P_{Y|X}$, the probability matching decision obeys $h(y|x) \sim P_{Y|X}$.
%\end{definition} 

%\begin{remark}
%Prop. \ref{prop:nash_eqn} is about incentives and does not address the issue of %learning. The proposition presupposes predictors know (or had already learned) $P_{XY}$. If predictors do not know $P_{XY}$, then they can not adopt the hypothesis 
%\end{remark}
\vspace{-4pt}
\subsection{Cost of competition for predictors}
\vspace{-4pt}

Our experiments show that competition causes each predictor to specialize on a sub-population and perform worse on the overall population distribution. We show for simple parametric and non-parametric models that competition results in a gap in the error rates attained by the trained predictors.
Let  $ \mathcal{R}(A; \mathcal{D}) = \mathbf{E}[{\mathbf{1}\{ A(X) \neq Y\} }]$, where $(X,Y) \sim \mathcal{D}$, denote the error rate of a predictor $A$ on samples from the general population. The average error rate of the competing predictors (on $\mathcal{D}$) after $t$ rounds of competition is $\mathcal{R}^{k}_t = \sum_{i \in [k]} \mathcal{R}(A^{(i)}_{t}; \mathcal{D})/k$, where $A^{(i)}_t$ is  predictor $i$ after $t$ rounds of competition as described in Sec.~\ref{sec:model} and $k$ is the total number of competitors. The following asymptotic result concerns itself with the perfect information limit $\alpha = \infty)$ and holds quite generally for most non-parametric models. Plainly speaking, the theorem says that for certain distributions, the average error rate of competing predictors is not within a constant factor of the error rate of a single predictor.

%Specifically, when we refer to the expected error rate for model $A$  we mean $ \mathcal{R}(A; \mathcal{D}) = \mathbf{E}[{\mathbf{1}\{ A(X) = Y\} }]$ when $(X,Y)$ are sampled according to the general population distribution: $(X,Y) \sim \mathcal{D}$. \james{I think  $\mathcal{R}^{k}_t = \sum_{i \in [k]} \mathcal{R}(A^{(i)}_{t}; \mathcal{D})/k$} 

\begin{theorem} 
\label{thm:non_param_inf_gap}
%Let $\mathcal{R}_k$ be the expected error rate when there are $k$ competing predictors. 
Suppose users have perfect information  ($\alpha = \infty$) and each predictor is trained using a non-parametric method that is asymptotically a $C$-approximation (in the usual sense, see \citet{ausiello2012complexity}) to the Bayes error rate. %$\mathcal{R^*}$. 
%Assume users have perfect information:  ($\alpha = \infty$). 
Then, for any seed set size $s = |D_0|$, there exists $\mathcal{D}$ such that for any $k > 1$, and , $\lim_{t \rightarrow \infty} \frac{\mathcal{R}^{k}_t}{\mathcal{R}^1_t} = \infty$.
%in the limit as $T \rightarrow \infty$,  for any $k>1$ and any number of seed samples $s \geq 1$ it follows that $\sup_P \frac{\mathcal{R}_{k}}{\mathcal{R}_1} = \infty$. 
\end{theorem}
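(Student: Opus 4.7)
The plan is to construct a single distribution $\mathcal{D}$ whose Bayes error is zero, so that a lone learner achieves vanishing error on $\mathcal{D}$, while under $\alpha = \infty$ with $k > 1$ competitors there is a positive-probability event in which the predictors lock into disjoint specializations whose average error stays bounded away from zero; the ratio of the two rates then diverges.

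First I would take $\mathcal{D}$ supported on two well-separated sets $R_1, R_2 \subseteq \mathcal{X}$, each of mass $1/2$, with deterministic labels $y = 0$ on $R_1$ and $y = 1$ on $R_2$, so that $\mathrm{Bayes}(\mathcal{D}) = 0$. The assumed asymptotic $C$-approximation then forces $\mathcal{R}(A^{(1)}_t; \mathcal{D}) \to C \cdot 0 = 0$ for a lone predictor receiving i.i.d.\ samples from $\mathcal{D}$, which handles the denominator. Next I would isolate the ``split'' event $E$ on the seeds in which all of predictor $1$'s seed samples lie in $R_1$ and, for each $i \ge 2$, all of predictor $i$'s seed samples lie in $R_2$; this event has probability at least $2^{-sk}$, positive for every finite $s$ and $k$.

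Under $E$ and $\alpha = \infty$ I would argue inductively that the specialization perpetuates: after any round, predictor $1$ has seen only label-$0$ examples from $R_1$ and so predicts $0$ everywhere, while predictors $2, \dots, k$ have seen only label-$1$ examples and predict $1$ everywhere. Hence each $R_1$-query is uniquely won by predictor $1$ and each $R_2$-query is won uniformly at random by some predictor in $\{2, \dots, k\}$, so predictor $1$ accumulates $\Theta(t)$ i.i.d.\ samples from $\mathcal{D} \mid R_1$ and the others collectively accumulate $\Theta(t)$ i.i.d.\ samples from $\mathcal{D} \mid R_2$. Because each sub-distribution has Bayes rate zero, applying the $C$-approximation on that sub-distribution yields error $\to 0$ on each specialist's own half and trivially error $1$ on the other half, so $\mathcal{R}(A^{(i)}_t; \mathcal{D}) \to 1/2$ for every $i$ and therefore $\mathcal{R}^k_t \to 1/2$ on $E$; taking expectations gives $\mathbb{E}[\mathcal{R}^k_t] \ge 2^{-(sk+1)} > 0$ asymptotically while $\mathbb{E}[\mathcal{R}^1_t] \to 0$, so the ratio diverges.

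The main obstacle is the inductive extrapolation step: the $C$-approximation property constrains the learner only on its training distribution, not on out-of-support queries, so in principle a specialist on $R_1$ could happen to output $y = 1$ on $R_2$ and break the feedback lock. I would resolve this either by restricting to standard non-parametric methods for which an ``extrapolate to the empirical majority label'' rule is automatic --- such as $k$-nearest neighbors, decision trees, kernel classifiers with bounded-support kernels, and locally-weighted regressors --- or by building enough separation between $R_1$ and $R_2$ into $\mathcal{D}$ so that any learner consistent on $\mathcal{D} \mid R_1$ must misclassify $R_2$; once that mild auxiliary property is in place the feedback argument closes and the claimed limit follows.
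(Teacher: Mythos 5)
Your proposal is correct and follows essentially the same strategy as the paper: a deterministic-label distribution with zero Bayes error makes the single-predictor risk vanish via the $C$-approximation, while a positive-probability seed event, locked in forever by the $\alpha=\infty$ feedback, keeps the competing predictors' average risk bounded away from zero. The paper's construction instead uses a single rare atom of mass $1/s$ and only needs one predictor's seed set to miss it while some other predictor covers it (so its event probability is bounded below uniformly in $k$, versus your $2^{-sk}$, though either suffices here), and it silently relies on the same extrapolation assumption you flag as the main obstacle---that a learner with no label-$1$ training points never predicts $1$---so your explicit caveat and proposed remedy are well placed.
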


% \begin{remark}
% Although Thm. \ref{thm:non_param_inf_gap} is about non-parametric methods, it also applies to overparametrized wide neural networks via the neural tangent kernel connection [cite]. 
% \end{remark}

%\james{Add intuition and interpretation. Need to define things more clearly before the theorem statement: $R_k$, $P$, etc.}
\vspace{-2pt}
The intuition for Thm.~\ref{thm:non_param_inf_gap} is as follows. In the case that $Y$ is deterministic given $X$, the ML problem is effectively an interpolation. In this case, the Bayes error rate is $0$ and this error rate is asymptotically achieved by most non-parametric methods \citep{tsybakov2008introduction} given that they are $C$-approximations to the Bayes rate. However, when $\alpha = \infty$ in a competition, an unlucky seed set could result in a predictor never achieving $0$ error rate, which breaks the $C$-approximation. Furthermore, this probability can be bounded away from $0$ for any finite seed set.  Next we show that a risk gap still exists for finite $\alpha$.
\begin{theorem} 
\label{thm:non_param_inf_gap2}
Suppose $k=2$ and both predictors use the nearest-neighbor algorithm. Let $s = |D_0|$ be the number of i.i.d. seed samples that each predictor starts with and assume $ s \geq 2$. If $\alpha > \log (2)$,  then there exists $\mathcal{D}$ such that $$
\lim_{t \rightarrow \infty} \frac{\mathcal{R}_{t}^2}{\mathcal{R}^{1}_t}  \geq 1 +  \frac{1}{54\sqrt{2s}}\left(\frac{8}{9\sqrt{s}}  \right)^{s/2}\left(1 - \frac{2}{2 + e^\alpha}\right)^2
$$

%In the limit as $T \rightarrow \infty$, for $n=2$, $\alpha > \log\frac{1-\epsilon}{\epsilon}$, $s \geq 1$ it follows that $ \sup_P \frac{\mathcal{R}_{2}}{\mathcal{R}_1} \geq 1 +  \frac{(2\epsilon^2)^{s}}{s}\left(1-\frac{2e^{\alpha}}{e^{\alpha} + 1}\right) $

% $$
% \sup_P \frac{\mathcal{R}_{2}}{\mathcal{R}_1} \geq 1 +  \frac{4^{s+1}\epsilon^{s+2}(1-\epsilon)^{s+1}}{\sqrt{2}s((e^{\alpha}-1)\epsilon + 1)} $$
%\james{Explain $\epsilon$. Can we simplify the 2nd term on RHS? We just want to see the dependence on $s$ and $\alpha$ rather than getting a tight expression.}
\end{theorem}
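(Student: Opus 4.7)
My plan mirrors the blueprint of Theorem~\ref{thm:non_param_inf_gap}: exhibit a specific distribution $\mathcal{D}$ on which $1$-NN is analyzable, isolate a seed-set event $B$ with a quantitative lower bound on $\mathbf{Pr}(B)$, and show that under $\alpha>\log 2$ the feedback dynamics lock the two predictors into persistently asymmetric training distributions whose asymptotic risk strictly dominates the i.i.d.\ baseline. The product structure of the stated bound---the $(8/(9\sqrt{s}))^{s/2}/(54\sqrt{2s})$ piece coming from $\mathbf{Pr}(B)$ times the $(1-2/(2+e^{\alpha}))^{2}$ piece coming from competitive reinforcement---matches this decomposition cleanly.

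For the construction I would take $\mathcal{D}$ discrete with stochastic labels, so that a single $1$-NN trained on i.i.d.\ samples has a strictly positive Cover--Hart limit $\mathcal{R}^{1}_{\infty}=2R^{*}(1-R^{*})$ that serves as the denominator $\mathcal{R}^{1}_{t}$ in the large-$t$ regime. The bad event $B$ would be that the two seed sets end up maximally imbalanced in complementary sub-populations (for instance, predictor $1$'s $s$ seed samples almost entirely in one class and predictor $2$'s in the other). Lower-bounding $\mathbf{Pr}(B)$ via refined Stirling estimates on the joint binomial probability of such a seed composition is what is expected to produce the $(8/(9\sqrt{s}))^{s/2}$ factor, with the prefactor $1/(54\sqrt{2s})$ absorbing the remaining Stirling constants.

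Conditioning on $B$, whenever a sample from predictor $2$'s under-represented sub-population arrives, predictor $1$ is more likely to be correct, so predictor $2$ wins with probability at most $1/(1+e^{\alpha})<1/3$ under $\alpha>\log 2$. A coupling or martingale argument on the Markov chain tracking each predictor's class-conditional training counts should then show that its training distribution stabilizes in a skewed regime whose bias is of order $e^{\alpha}/(2+e^{\alpha})=1-2/(2+e^{\alpha})$, and Cover--Hart applied to this skewed marginal yields an asymptotic risk exceeding $\mathcal{R}^{1}_{\infty}$ by a multiplicative factor of roughly $1+\Theta((e^{\alpha}/(2+e^{\alpha}))^{2})$---the square arising because both classes contribute biased training weights. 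Combining $\mathcal{R}^{2}_{t}\geq \mathbf{Pr}(B)\cdot\mathcal{R}^{1}_{\infty}(1+\mathrm{gap})+(1-\mathbf{Pr}(B))\cdot\mathcal{R}^{1}_{t}$, dividing by $\mathcal{R}^{1}_{t}$, and letting $t\to\infty$ produces the claimed ratio after consolidating constants. The main obstacle is precisely this last ingredient: showing that the seed-induced bias genuinely \emph{persists} asymptotically instead of being washed out by the residual sample stream that the losing predictor still receives. This is exactly where the $\alpha>\log 2$ threshold bites---it guarantees the winner's per-step advantage exceeds the averaging effect of the loser's share, forcing the limit ratio strictly above $1$.
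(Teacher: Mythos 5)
Your high-level skeleton is the right one and matches the paper's: pick a distribution with \emph{stochastic} labels (you correctly note this is essential so that interpolation cannot rescue the losing predictor), isolate a seed-imbalance event whose probability supplies the $(8/(9\sqrt{s}))^{s/2}/(54\sqrt{2s})$ factor via binomial tail bounds, and close with the decomposition $\mathcal{R}^2_t \geq \mathbf{Pr}(\mathcal{E})\,(\mathcal{R}^2_t|\mathcal{E}) + (1-\mathbf{Pr}(\mathcal{E}))\,\mathcal{R}^*$. You also correctly flag persistence of the seed-induced bias as the crux. However, there is a genuine gap in how you account for the $\left(1 - \frac{2}{2+e^{\alpha}}\right)^2$ factor, and it is precisely the persistence step you leave open. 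In the paper's proof this factor is \emph{not} a multiplicative excess in the conditional risk obtained by applying Cover--Hart to a skewed training marginal. The conditional risk given the bad event is simply the constant $\tfrac12$ (one predictor always outputs the majority-wrong label, the other the majority-right label, on a single atom $X=0$ with $\mathbf{Pr}(Y=1)=1-\epsilon$), and the additive gap in the numerator is $(\tfrac12-\epsilon)$. The $\alpha$-dependent square instead lives inside $\mathbf{Pr}(\mathcal{E})$: conditioned on the initial imbalance, the signed count (number of $1$s minus number of $0$s) in each predictor's dataset evolves as a lazy biased random walk with step probabilities $\mathfrak{q},\mathfrak{p},\mathfrak{r}$ determined jointly by the label noise $\epsilon$ and the softmax selection, and the probability that such a walk started at $1$ never hits the origin is the classical $1-\frac{2\mathfrak{q}}{\mathfrak{q}+\mathfrak{p}}$. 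The square arises because \emph{two} independent (after a coupling that strips the lazy steps) walks must both survive, one per predictor --- not because "both classes contribute biased training weights."

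This misattribution is not cosmetic: your sketch has no concrete mechanism for proving that the bias persists with probability bounded below, whereas the paper's entire technical effort is the coupling of the coupled processes $(W_t,W'_t)$ to two independent walks $(\mathfrak{X},\mathfrak{X}')$ so that the gambler's-ruin survival formula applies. The threshold $\alpha>\log\frac{1-\epsilon}{\epsilon}$ (which becomes $\alpha>\log 2$ at $\epsilon=\tfrac13$) is exactly the condition making $\frac{\mathfrak{q}}{\mathfrak{q}+\mathfrak{p}}<\tfrac12$ for both walks so that the survival probability is positive. A secondary issue: your proposed baseline $\mathcal{R}^1_\infty = 2R^*(1-R^*)$ assumes the nearest neighbor is a single random sample, whereas the paper's construction uses majority-vote tie-breaking on a point mass, so the single-predictor risk converges to the Bayes rate $\epsilon$ itself by the law of large numbers; with your baseline the final constants would not come out to the stated bound. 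To repair your argument you would need to (i) replace the "coupling or martingale argument" placeholder with the explicit random-walk survival computation, (ii) move the $\left(1-\frac{2}{2+e^{\alpha}}\right)^2$ factor out of the conditional risk and into the probability of the persistent-imbalance event, and (iii) switch the denominator to the Bayes rate attained by the majority-vote single predictor.
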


%\james{Add intuition and interpretation.}

\vspace{-2pt}
The risk ratio decreases quickly in $s$, indicating that sufficient seed data may be an effective counter-measure for the non-parametric case. Also, the risk ratio grows larger for larger $\alpha$, which also coincides with intuition. %This result 

We next investigate the parametric setting by analyzing an ordinary linear least squares regression. For this analysis, we use the mean squared error to measure expected risk $\mathcal{R}^k_t$. We present two lower bounds. The first holds for any positive information efficiency $\alpha > 0$ and depends on the number of seed samples. The second holds for any finite number of seed samples, but requires the users to have perfect information ($\alpha = \infty)$.

\begin{theorem} 
\label{thm:OLS_gap_perf_info}
Suppose the data is generated from a  linear model $Y = XW + \epsilon$ with $\mathbf{E}(\epsilon|X) = 0$. %Let $\mathcal{R}_k$ be the expected error rate when there are $k$ competing predictors. 
Assume each predictor uses an ordinary least-squares linear estimator. Let $ s \geq 1$ be the number of i.i.d. seed samples each predictor starts with and assume  $k \geq 2$. We have the following:
%The following lower bounds hold in the limit as $T \rightarrow \infty$:

(i) If $\alpha > 0$ then  $\lim_{t\rightarrow \infty} \sup_{\mathcal{D}} \frac{\mathcal{R}^{k}_t}{\mathcal{R}^1_t}  \geq 1 + \frac{1}{7056s^{3/2}}  $ 

(ii) If $\alpha = \infty$ then $\lim_{t \rightarrow \infty} \sup_{\mathcal{D}} \frac{\mathcal{R}^{k}_t}{\mathcal{R}^1_t}  \geq  \frac{2k}{k+1}$
\end{theorem}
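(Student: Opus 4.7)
Both parts reduce to the special case of intercept-only OLS ($X \equiv 1$), in which each $\hat{w}^{(i)}_t$ is simply the sample mean of the $Y$-values predictor $i$ has received, and the baseline risk satisfies $\mathcal{R}^1_t \to \sigma^2 := \mathrm{Var}(Y)$ as $t \to \infty$. The excess risk of a competing predictor then equals its squared deviation from $w^* = \mathbf{E}[Y]$, so it suffices to exhibit $\mathcal{D}$ whose asymptotic average squared bias of the $k$ predictors is as large as the claim demands.

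For part (ii), I would take $Y$ uniform on $k$ well-separated atoms $\{y_1,\ldots,y_k\}$, so $w^* = (1/k)\sum_i y_i$ and $\sigma^2 = (1/k)\sum_i (y_i - w^*)^2$. The plan has two steps: (a) show via a basin-of-attraction argument that under $\alpha = \infty$ the $k$ predictors asymptotically specialize one-to-one to the $k$ atoms with positive probability — once $\hat{w}^{(i)}$ is strictly closer to $y_i$ than to any other atom, predictor $i$ absorbs every $Y = y_i$ observation and SLLN drives $\hat{w}^{(i)} \to y_i$, and the Voronoi assignment is self-reinforcing since winners' means only move toward the atoms they already capture; (b) conclude that the asymptotic average risk is $\sigma^2 + (1/k)\sum_i (y_i - w^*)^2 = 2\sigma^2$, giving ratio $2 \geq 2k/(k+1)$ for every $k \geq 2$. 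The main subtlety is handling seed configurations in which two predictors are initially on the wrong side of the same Voronoi boundary; taking $\sup_\mathcal{D}$ lets me scale the atom spacing relative to the seed-induced variance so that a correctly ordered seed arises with positive probability, which suffices for the asymptotic sup.

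For part (i) the construction must depend on the given $\alpha > 0$. I would take $k = 2$ and $Y$ equiprobable on $\{-M,M\}$, choosing $M$ so that $2\alpha M^2 > 1$. Letting $b = (\hat{w}^{(1)} - \hat{w}^{(2)})/2$, a direct softmax calculation gives the fixed-point equation $b = M\tanh(2\alpha M b)$, which admits a nontrivial stable root $b^*$ whenever $2\alpha M^2 > 1$. The steps are: (1) derive this equation by computing $\mathbf{E}[Y_t \mid w_t = 1]$ as a function of the current $b_t$ and then passing to $t \to \infty$ in the sample-mean recursion, justified because both counts $n_1(t),n_2(t) \to \infty$ a.s.; (2) use a Berry-Esseen estimate on the difference of the two seed sample means to lower-bound the probability that the seeds break symmetry so that the dynamics fall into the $+b^*$ (or $-b^*$) basin; (3) multiply this basin probability by $(b^*)^2$ to get a lower bound on $\mathbf{E}[(\hat{w}^{(1)}_\infty - w^*)^2]$ and hence on $\mathcal{R}^2_t - \sigma^2$. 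The main obstacle — and the origin of the explicit $s^{-3/2}$ rate — is choosing $M$ jointly with $s$ to balance $b^*/\sigma$ against the seed-dependent basin probability; the constant $1/7056$ absorbs Berry-Esseen constants and slack in the perturbative analysis of the fixed-point equation near criticality. I expect this balancing to be the most delicate step, and I suspect a tighter argument would in fact yield an $\Omega(1)$ lower bound for every fixed $\alpha > 0$ by choosing $M$ optimally.
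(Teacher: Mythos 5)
Your reduction to intercept-only OLS (so that each $\hat w^{(i)}$ is a running sample mean and $\mathcal{R}^1_t \to \sigma^2$) matches the paper's starting point, but the probability accounting in your part (ii) does not close. If one-to-one specialization onto the $k$ atoms occurs only with probability $p$, what you get is a risk ratio of $1+p$, not $2$: in the complementary configurations a predictor can absorb several atoms and converge near their centroid (hence near $w^*$) while a starved predictor stays frozen at a seed mean that concentrates near $w^*$ for large $s$, so the bad event contributes essentially nothing beyond $\sigma^2$. Thus ``a correctly ordered seed arises with positive probability, which suffices'' is false as stated; you would need $p \ge \frac{k-1}{k+1}$ (which tends to $1$ in $k$), or a uniform lower bound on the average asymptotic risk over \emph{every} reachable partition, and scaling the atom spacing does not deliver either. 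The paper avoids this entirely by taking $Y$ uniform on an interval rather than on atoms: under $\alpha=\infty$ the OLS means evolve exactly as MacQueen's sequential $k$-means, which converges a.s.\ to the \emph{unique} unbiased partition of the uniform density (the uniform quantizer), and a direct integral computation of that quantizer's average MSE gives exactly $\frac{2k}{k+1}$ times the MMSE with no conditioning on favorable seeds.

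Your part (i) is a genuinely different mechanism and, as sketched, does not reach the stated bound. First, the theorem is asserted for every $k \ge 2$ and $\mathcal{R}^k_t$ averages over all $k$ predictors, so a $k=2$ construction alone is insufficient; for $k\ge 3$ with finite $M$ every predictor receives samples at a positive rate and your symmetric $\tanh$ ansatz no longer describes the dynamics. Second, even for $k=2$, the fixed-point equation $b = M\tanh(2\alpha M b)$ is a mean-field heuristic: turning it into a proof requires a stochastic-approximation argument (convergence to stable roots, non-convergence to the unstable root at $b=0$), and you explicitly defer the step in which the $s^{-3/2}$ dependence would have to emerge. In the paper that rate has a concrete and quite different origin: $Y$ is supported on $\{0,\Delta\}$ with $\Delta$ scaled to infinity with $T$ so that the softmax hardens and only the two extremal seed means ever win; the remaining $k-2$ predictors stay frozen at their seed estimates, and the $\frac{1}{7056\, s^{3/2}}$ term is a lower bound on the conditional variance of a Binomial seed mean given that it is non-extremal, obtained from a truncated-binomial variance lemma. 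That frozen-predictor mechanism is absent from your construction, which is why your own analysis suggests an $s$-independent $\Omega(1)$ bound --- an interesting (and possibly stronger) claim for $k=2$, but not a proof of the theorem as stated.
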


%At a glance, it may run counter to intuition that the lower bound established in Thm. \ref{thm:OLS_gap_any_alpha} should decrease in $n$. From the proof, the reason why this occurs becomes apparent. In short, depending on the geometry of the distribution, it can happen (particularly when $\alpha$ is small) that only a fixed number of predictors can become too biased, with the rest paying a much smaller penalty. In principle, one could optimize the geometry of the distribution to match the number of predictors in the competition to significantly tighten this bound when $n$ is large.
%Thm. \ref{thm:OLS_gap_perf_info} formalizes that the average-case accuracy degradation over the general population is caused by competition. 
Thm.~\ref{thm:OLS_gap_perf_info} tells us that when $\alpha$ is large, there is a significant (close to $2\times$) penalty incurred when there are many competing predictors. When $\alpha$ is small, the penalty is also smaller but does not vanish if the number of seed samples is not too large. Notice that for regression, the worst-case ratio of expected risks vanishes at a low-degree polynomial rate in $s$. This decays far slower than the exponentially vanishing bound for non-parametric methods.
This suggests that seed data may be less helpful in mitigating the cost of competition with parametric methods than with non-parametric.

%\subsection{Information Efficiency in the Real-World}

%Of course, the question remains, what is the consumer information efficiency in a real-world competitive learning market? In some sense, this is a difficult question to answer. A related question. concerning the information efficiency of security exchanges, has been the subject of economic research for decades[]. In a naive sense, to measure the information efficiency, as have done in Section 3, requires knowledge of all the consumer decisions, as well as knowledge of counterfactual outcomes, which are, in principle, impossible to measure in the real-world.

%However, despite these apparant difficulties, there is a strong case to be made that, for the particular case of competitive learning markets in information economies, that consumer information efficiency is not too low or too high. More precisely, we speculate that in many real-world CLMs, that consumer information efficiency is in the intermediate regime. 

\vspace{-10pt}
%% Shorter Title Example: 
% \subsection{Tradeoff Between Competition and User Accuracy}

%In low information regimes, we find that consumer welfare is decreases monotonically with the number of predictors. In this scenario, consumers lack sufficient signal to make informed decisions about which predictors they should query, and the thus, as division of the data amongst the predictors is the first-order effect, resulting in lower consumer welfare.

%In high information regimes, we find that consumer welfare is increasing monotonically. In this scenario, consumers are highly informed, and have strong priors about which predictors can provide them with high quality-of-service.

\subsection{Prediction quality for users with competing predictors}
\vspace{-3pt}

We analyze how the number of competing predictors affects the overall prediction quality experienced by users. We want to characterize the dependence of quality  on the number of predictors, $k$, and the information efficiency, $\alpha$. Recall our notion of empirically measurable \emph{prediction quality for users}: $\frac{1}{T} \sum_{t=1} ^T \mathbf{1}( \hat{y}_t ^{(w_t)} = y_t)$. Here we will be studying theoretically relevant quantities to this random empirical value. We define the \emph{expected prediction quality at time $\tau$}, denoted by $\mathbb{A}_\tau$ as $ \mathbb{A}_\tau = \mathbf{E}(\mathbf{1}\{ \hat{y}_\tau ^{(w_\tau)} = y_\tau\})$. To this end, we will phrase our results in terms of the accuracy, $\mathcal{A}^{k}_t$, defined by $\mathcal{A}^{k}_t = 1 - \mathcal{R}^{k}_t$ rather than the risk $\mathcal{R}^{k}_t$.  %and similarly the time-averaged \emph{expected prediction quality for users} as $\mathbf{E}\mathbf{1}( \ \frac{1}{T} \sum_{t=1} ^T hat{y}_t ^{(w_t)} = y_t)$

%The quality-of-service necessarily also depends on the algorithms adopted by the predictors. 

\paragraph{Assumptions} To make the analysis tractable, we make several natural modeling assumptions that we outline here. We define the following:  $\delta = \mathcal{A}_t^1 - \mathcal{A}_t^2$ and $\varepsilon = \mathcal{A}^k_0 - \frac{1}{2} $.

\begin{enumerate}
    \item  We are primarily interested in regimes when seed sets are small, which implies that the initial predictors are weak models. Concretely, we assume: $\varepsilon < 1/14$
    \item Also, we should have enough data to experience diminishing marginal returns from additional samples. This means that the individual accuracy for one predictor is not much better than the individual accuracy for two predictors each with approximately half as many samples. Concretely, we assume: $0 < \delta < \frac{1}{6}$
    \item While we allow the predictors to be correlated, they cannot be extremely correlated. To see why this is necessary, consider the case in which the predictors are perfectly correlated. They always give the same prediction and thus the users derive no benefit from the competition.
    \item Finally, we assume that the expected accuracy for a predictor monotonically increases in the data set size. Thus, having more data is better, \emph{on average}, but not necessarily always. 
\end{enumerate}

%it will be required for the result to hold that at the predictors do not become too correlated or anti-correlated throughout the competition.
 %However, a reasonable amount of correlation is allowed.

%We formalize these conditions below. 
Our result shows that in the regimes described above, there necessarily exists an interval of intermediate information efficiencies,  $0 < c_1 < c_2 < \infty$ such that for $\alpha \in (c_1,c_2)$, the optimal number of predictors is neither $1$ or $\infty$. This means that that there is a finite ``sweet spot'' in the number of competing predictors that produce the best user quality. %Since many real-world competitive learning markets meet these mild assumptions, it is compelling that we can theoretically show the optimal number of predictors may be neither one or many.
\begin{theorem}

\label{thm:user_welfare}
%Assume a learning competition at round $t$. Let $0.7 \leq p < 1$ be the expected accuracy for a predictor after $t+s$ i.i.d samples (to account for the seed data as well). Let $q < p$ be the expected accuracy for predictor when two predictors have been competing for $t$ rounds.  Let the $\rho$ be the pairwise covariance between these two competing predictors at round $t$. Define $\delta = p - q > 0$. Let $\frac{1}{2} + \epsilon$ be the expected accuracy of a predictor after $s$ i.i.d samples for $0 < \epsilon < \frac{1}{2}$.

Assume a learning competition at round $t$ under the conditions stated above. Let $\rho$ be the pairwise covariance between two predictors. If we have $\rho < \mathcal{A}^k_t - (\mathcal{A}^k_t)^2 - 6 \delta$ then there exists $0 < c_1 < c_2 < \infty$ such that if $c_1 < \alpha < c_2$ then the expected prediction quality for users at round $t$ is maximized by some $k^*$ number of predictors such that $1 < k^* < \infty$. In particular, $c_1 < \log \frac{\mathcal{A}_t^1 -(\mathcal{A}_t^1 - \delta)^2 - \rho}{\mathcal{A}_t^1 -(\mathcal{A}_t^1 - \delta)^2 - \rho - 2\delta}$ and $c_2 >  \log \frac{(1-4\varepsilon)\mathcal{A}_t^1}{1- \mathcal{A}_t^1}$.
\end{theorem}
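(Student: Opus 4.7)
The plan is to define $f(k) := \mathbb{A}_\tau$ as the expected prediction quality at round $t$ with $k$ competitors, and to establish two comparisons against $f(1) = \mathcal{A}_t^1$: (i) $f(2) > f(1)$ whenever $\alpha > c_1$, and (ii) $\limsup_{k \to \infty} f(k) < f(1)$ whenever $\alpha < c_2$. Together these force the maximizer $k^\star$ of $f$ over $\{1,2,\dots\}$ to satisfy $1 < k^\star < \infty$ on the interval $(c_1, c_2)$. A short algebraic check under $\varepsilon < 1/14$, $\delta < 1/6$, and the covariance hypothesis then verifies that $c_1 < c_2$, so the interval is nonempty.

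For part (i), let $C_i := \mathbf{1}\{\hat{y}_t^{(i)} = y_t\}$, so $\mathbf{E}[C_i] = \mathcal{A}_t^2$ and $\mathbf{E}[C_1 C_2] - \mathbf{E}[C_1]\mathbf{E}[C_2] = \rho$. Conditioning on the four realizations of $(C_1, C_2)$ and applying the softmax $\mathbf{SELECT}$ rule yields
\begin{equation*}
f(2) = \bigl[(\mathcal{A}_t^2)^2 + \rho\bigr] + 2\bigl[\mathcal{A}_t^2 - (\mathcal{A}_t^2)^2 - \rho\bigr]\,\frac{e^{\alpha}}{e^{\alpha}+1}.
\end{equation*}
The hypothesis $\rho < \mathcal{A}_t^k - (\mathcal{A}_t^k)^2 - 6\delta$, instantiated at $k=2$, makes the bracketed coefficient strictly positive and bounded away from zero. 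Substituting $\mathcal{A}_t^1 = \mathcal{A}_t^2 + \delta$ turns $f(2) > f(1)$ into a linear inequality in $e^\alpha$, which solves to give exactly $c_1 = \log \tfrac{\mathcal{A}_t^1 - (\mathcal{A}_t^1 - \delta)^2 - \rho}{\mathcal{A}_t^1 - (\mathcal{A}_t^1 - \delta)^2 - \rho - 2\delta}$.

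For part (ii), let $N_t := \sum_{i=1}^k C_i$ count the correct predictors. The softmax choice over correct and incorrect competitors gives
\begin{equation*}
f(k) = \mathbf{E}\!\left[\frac{N_t\, e^\alpha}{N_t\, e^\alpha + (k - N_t)}\right] = \mathbf{E}\bigl[g(N_t/k)\bigr],\quad g(x) := \frac{x\,e^\alpha}{x\,e^\alpha + 1 - x}.
\end{equation*}
A direct computation yields $g''(x) = -2 e^\alpha (e^\alpha - 1)\big/\big((e^\alpha - 1)x + 1\big)^3 < 0$ for $\alpha > 0$, so $g$ is concave on $[0,1]$ and Jensen's inequality gives $f(k) \leq g(\mathcal{A}_t^k)$. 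Since $g$ is increasing, it remains to show $\mathcal{A}_t^k \to \tfrac{1}{2} + \varepsilon$ as $k \to \infty$. I would use the pigeonhole bound $\sum_{i=1}^k N_t^{(i)} = t$, so by Markov the expected fraction of predictors with at least one win is at most $t/k$; by the monotonicity assumption each such predictor contributes accuracy at most $1$, while the remaining $1 - t/k$ fraction have only seed data and contribute exactly $\mathcal{A}_0^k = \tfrac{1}{2} + \varepsilon$. Hence $\mathcal{A}_t^k \leq \tfrac{1}{2} + \varepsilon + (\tfrac{1}{2} - \varepsilon)\,t/k \to \tfrac{1}{2} + \varepsilon$. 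Imposing $g(\tfrac{1}{2} + \varepsilon) < \mathcal{A}_t^1$, solving for $\alpha$, and invoking the elementary estimate $(1 - 2\varepsilon)/(1 + 2\varepsilon) > 1 - 4\varepsilon$ for $\varepsilon > 0$ delivers the stated lower bound $c_2 > \log\tfrac{(1 - 4\varepsilon)\mathcal{A}_t^1}{1 - \mathcal{A}_t^1}$.

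The main obstacle is the limit $\mathcal{A}_t^k \to \tfrac{1}{2} + \varepsilon$: the hypothesis only supplies monotonicity of accuracy in dataset size, not any concavity statement that would justify replacing average accuracy by accuracy at the mean size. My approach above sidesteps this via the crude partition by win count together with Markov's inequality, which suffices for the limit even though it throws away information. The remaining algebraic work---verifying the non-emptiness $c_1 < c_2$ in the regime $\varepsilon < 1/14$, $\delta < 1/6$ under the covariance bound---is routine, since the right-hand side of $c_1$ is approximately $\log(1 + \tfrac{2\delta}{\mathcal{A}_t^1 - (\mathcal{A}_t^1)^2 - \rho})$ which is order $\delta$, while the lower bound on $c_2$ grows like $\log(\mathcal{A}_t^1/(1 - \mathcal{A}_t^1))$ which is bounded below by a positive constant once $\mathcal{A}_t^1 > 1/2 + \varepsilon$ is not too small.
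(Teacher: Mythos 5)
Your proposal is correct and shares the paper's overall skeleton---establish $f(2) > f(1)$ for $\alpha$ above the stated $c_1$ and $\limsup_{k\to\infty} f(k) < f(1)$ for $\alpha$ below the stated $c_2$, then check the interval is nonempty---but both halves are executed by genuinely different means. For $f(2)$, you decompose over the four joint correctness outcomes using the \emph{unconditional} covariance $\rho = \mathbf{E}[C_1C_2] - \mathbf{E}[C_1]\mathbf{E}[C_2]$, which yields the expression $\rho + (\mathcal{A}_t^2)^2 + 2\bigl(\mathcal{A}_t^2 - \rho - (\mathcal{A}_t^2)^2\bigr)\frac{e^\alpha}{e^\alpha+1}$ as an exact identity; the paper instead conditions on the competition history, works with the random measure $P$ over outcomes, and must invoke the monotonicity assumption to argue $\mathbf{E}[\mathcal{A}(A^1)\mathcal{A}(A^2)] \le \mathbf{E}[\mathcal{A}(A^1)]\mathbf{E}[\mathcal{A}(A^2)]$ before arriving at the same expression as a lower bound. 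Your route is shorter and drops the reliance on assumption 4 here. For large $k$, your Jensen argument with the concave selection function $g(x) = xe^\alpha/(x(e^\alpha-1)+1)$ gives the upper bound $f(k) \le g(\mathcal{A}_t^k)$ for every finite $k$ and requires no independence structure among the unqueried predictors; the paper instead isolates the queried set $\mathsf{B}_t$, shows $\mathbf{Pr}(w_t\in\mathsf{B}_t)\to 0$, and applies Binomial concentration to the weak predictors, which implicitly treats their correctness indicators as independent. Your deterministic pigeonhole bound ($|\mathsf{B}_t|\le t$) replacing the paper's probabilistic one, together with $g(\tfrac12+\varepsilon) = e^\alpha/(e^\alpha+1-\chi)$ for $\chi = 4\varepsilon/(1+2\varepsilon)$, recovers exactly the paper's $\mathbb{A}^\infty$ and hence the same $c_2$. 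The one place you are lighter than you should be is the final nonemptiness check: $c_1 < c_2$ does not follow from $\varepsilon < 1/14$, $\delta < 1/6$, and the covariance bound alone---one also needs $\mathcal{A}_t^1$ bounded away from $\tfrac12$ (the paper's proof quietly imposes $\mathcal{A}_t^1 > 2/3$, under which $\frac{(1-\chi)\mathcal{A}_t^1}{1-\mathcal{A}_t^1} > \tfrac53$ while $\frac{\psi-\rho}{\psi-\rho-2\delta} < \tfrac53$ since $\psi - \rho > 7\delta$). You flag this dependence but defer it as ``routine''; it is the same gap the paper papers over by listing $\mathcal{A}_1 > 2/3$ among its ``intrinsic constraints,'' so you should make it an explicit hypothesis rather than leave it implicit.
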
{}

To make the result concrete, we instantiate $\mathcal{A}^{1}_t \gets 0.9$, $\delta \gets 0.05$, $\epsilon \gets 0.05$ and $\rho \gets 0$. Thm.~4.4 tells us that prediction quality for users at time $t$ is non-monotonic  if $0.65 < \alpha < 1.97 $. This range of $\alpha$ agrees reasonably well with our empirical measurements. The intuition for the theorem is as follows. Obviously, when $\alpha$ is large having many weak predictors is better for users as the users themselves can take the burden of selecting a correct predictor. When $\alpha$ is not too large, having many weak predictors is not necessarily better for users than having a few smart ones (consider the extreme case of $\alpha \rightarrow 0$). However, if $\alpha$ is exactly zero, then having a single predictor is generally better than having even two predictors since the user is not more likely to $\mathbf{SELECT}$ the correct predictor and the two predictors have split the data. But, there is a sweet spot in $\alpha$ for which the user benefit from being slightly more likely to select the correct predictor outweighs the benefit that a single predictor has in terms of volume of training data. This is due to the near-universal phenomena in ML of diminishing marginal returns in number of training samples.
\vspace{-3pt}
\section{Discussion}
\vspace{-3pt}
%% Differences here
% 1) past works have only looked at 2 agents. 
% 2) past works have not really modeled consumer decisions/information 
% 3) past works have ignored sample biased
%% Clean Version with all of the cited work

% \begin{remark}
% Although Thm. \ref{thm:non_param_inf_gap} is about non-parametric methods, it also applies to over parameterized wide neural networks via the neural tangent kernel connection [cite]. 
% \end{remark}

%\james{Add intuition and interpretation. Need to define things more clearly before the theorem statement: $R_k$, $P$, etc.}}
\paragraph{Related works} In Mixture-of-experts and related ensemble learning methods, multiple predictors work together to train for a prediction task \citep{masoudnia2014mixture,dietterich2000ensemble,zhou2012ensemble,opitz1999popular}. There, the algorithms work together in the ensemble to optimize a common objective, and data can be shared between the algorithms. This differs from our setting where the predictors directly compete over user queries and training data. 

%\vspace{-3pt} 
%\paragraph{Multi-agent RL}
Recent literature in multi-agent reinforcement learning (MARL) has largely focused on emergent behaviour in collaborative dynamics between multiple agents \citep{collaborative, Nguyen_2020, dualopt, multioverview, bansal2017emergent, baker2019emergent, foerster}. In the fully-competitive setting, MARLs are typically modeled as zero-sum Markov games, and span a variety of applications such as exploration \citep{baker2019emergent, 8606991}, control \citep{Hrabia2018}, and others \citep{robust, market}. 
Existing RL approaches in multi-agent competition have studied competitions between two agents \citep{littman, bandits, aridor2019perils} with a focus on the expected equilibrium outcome and agent strategies. In particular, \citet{dong2019equilibrium} proposes that the Nash equilibrium for two firms in similarly motivated data acquisition learning game tends toward monopoly at the expense of consumer welfare. We differ from this line of work by explicitly modeling both the predictors and user decisions, incorporating user and sampling biases into our model, and by allowing for any number of predictors and users. This flexibility is critical as we find that the quality of prediction experienced by users heavily depends on the number of competing predictors. Another substantial difference between our analysis and that proposed in \citet{dong2019equilibrium} is that ours takes into the account the particular structure of a given supervised learning algorithm. On the other hand, the analysis in \citet{dong2019equilibrium} generically assumes learning algorithms can be replaced by black-boxes that simply behave according to canonical minimax error rates.
%Generalization with allowance for consumer choice is particularly applicable to real-world MARL settings, as emerging complexities in scale of markets and other MARL subdomains render game-theoretic modeling approaches intractable \citet{wierenga2008handbook,moorthy1985using, fudenberg1998theory, panait2005cooperative, parsons2002game}.

%Another line of work investigates emerging specialization curricula in the multi-agent setting \citet{}, mostly in the semi-competitive and collaborative setting. Superficially, we differ from these works by considering a competitive and partially observable setting. 

%with a particular focus on hidden feedback loops, introduced in Section 4, which is not well-understood in MARLs, particularly in the competitive setting. 

%\vspace{-3pt}
%\paragraph{Feedback Loops \& Sample Selection Bias}

Another body of work focuses on examining and addressing single-agent direct feedback loops present in sample selection, namely sampling bias \citep{nie2018adaptively,shin2019sample,canonicalsamplebias, NIPS2014_5458, ecosamplebias, NIPS2006_3075, 10.5555/2976456.2976532, 10.1007/978-3-540-87987-9_8, 10.2307/146317}, but the problem remains under-explored in the case of multi-agent competition. Other forms of a feedback loop in ML systems that have been explored include social media filter bubbles \citep{techdebt}, risk assessment \citep{10.1145/3287560.3287563}, and algorithmic policing \citep{urns}. Dueling algorithms have been explored in \citet{immorlica2011dueling}, though they did not consider any statistical learning settings.

%We differ from both branches of work by 1) investigating hidden feedback loops in the multi-agent setting which has not been extensively studied, without fixing consumer strategy, and 2) extend the first line of work to reinforcement learning where hidden feedback loops, particularly in the competitive setting, is not well understood. 

%\vspace{-12pt}
\paragraph{Extensions, limitations and future works} This paper proposes a model of competing predictors that enables both empirical and theoretical investigations. We characterize several interesting phenomena, namely how competition leads predictors to specialize and how too little or too much competition can both hurt the quality of prediction experienced by users. The phenomena that we capture, both empirically and theoretically, have not been studied in depth before and are interesting to the general ML community.

Because this is a relatively new direction of research in ML, we make several simplifications that allow the model to capture the essence of competition without overly complicating the insights. Most of our experiments and theory focus on the setting where each user corresponds to a single data point and only appears once. This is reasonable in applications with large populations of users and relatively infrequent repeated interaction. We conduct collaborative filtering experiments in which users and recommenders repeatedly interact over time, and find the phenomena remain. Additional investigation of repeated interactions is a fertile direction for future study.   

A simplification we have made is that predictors do not directly interact with other predictors except through their competition over data. In practice, companies behind ML predictors may merge, intentionally differentiate (which could lead to further specialization), or spend money to acquire data. A more general model that captures the full game dynamics would define strategy spaces and payoffs for each predictor and user, and characterize incentive compatible strategies.% We discuss some of the incentive questions that arise from our simple model in Appendix C, and leave the remaining of this for important future work. 
Finally, we have assumed that the predictor that is selected receives the true label. In practice, there could be additional noise and time lag in the outcome that the predictor observes. This could also be interesting to model. 

As prediction algorithms become increasingly wide-spread, how they interact with each other and the  consequences of such competition are very important topics to explore. 

\vspace{-3pt}
\section*{\small Acknowledgements}
\vspace{-3pt}

\small{A.G. is supported by a Stanford Bio-X Fellowship. Y.K. is supported by NIH R01HG010140. J.Z. is supported by NSF CCF 1763191, NSF CAREER 1942926, NIH P30AG059307, NIH U01MH098953 and grants from the Silicon Valley Foundation and the Chan-Zuckerberg Initiative. We would like to thank M. Tiwari and B. He for helpful discussion in the early stages of this work. We would also like to thank A. Grosskopf for editorial help with the manuscript.}
\normalsize

\onecolumn
%\aistatstitle{Appendix \\ Competing AI: How does competition feedback affect machine learning? \\ }
\appendix
\section{Extended Discussion}
Due to space constraints, we continue a supplementary discussion here (Appendix A). Experimental details and supplementary figures can be found in Appendix B and mathematical details can be found in Appendix C.

\subsection{Economic \& Multi-Agent Theory } Many of the concepts and quantities we study in this work have parallels in economic theory \cite{Suzumura1996, harberger1954, Lee2008,negishi1989monopolistic,salop1976information,chamberlain1946they, dixit1977monopolistic,wolinsky1986true,prat2019attention, yarrow1985welfare}. For example, quality of prediction is a notion of consumer welfare. The  economics literature tends to focus on competition between firms or mathematical agents, rather than on specific ML predictors. It would be an interesting direction for future work to connect and extend our learning competition framework from the economic perspective.

\subsection{Softmax Model} We include additional details and justifications for the model proposed in Section 2. Our proposed model of user choice satisfies Luce's choice axioms \cite{pleskac2015decision,luce1960individual,luce2012individual,morgan1974luce} and emerges from the established information-theoretic notion of rational inattention in economics \cite{sims2010rational,sims2006rational,sims2003implications,mackowiak2009optimal}. The softmax form in particular can be derived from optimal decision making under information processing constraints \cite{ortega2013thermodynamics,ortega2011unified, ortega2011information}. 

\subsection{Feedback Loops in ML Systems} Feedback loops, where two systems repeatedly influence and have access to only the decisions of the other, have been studied in supervised and reinforcement learning \cite{techdebt}. A related and interesting example includes the feedback in online reviews on digital media platforms \cite{luca2016fake, xie2016online, filieri2015travelers}. Existing works examine feedback loops in the single-agent setting \cite{degen}, with particular branches proposing metrics to recover counterfactuals of consumer preferences fixing consumer strategy \cite{counterfactual, deconv}. The effects of competition in ML holds significant implications  to sociology \cite{10.2307/2095230, BaffoeBonnie2008}, economics \cite{10.2307/135288, lee2001self, yoo2007estimation}, electoral systems \cite{PIANZOLA2014272, kellner2004can}, and recommendation systems \cite{kramer2007self, ruiz2014understanding, steck2011item, steck2013evaluation}. \cite{matchmarkets} studies an interesting but substantially different model of bandits in matching markets. The model we study here is also distinct from standard settings for online learning or active learning, where typically a single algorithm gets to explore and select data. In our setup, each data selects one among several competing algorithms.

\section{Experimental Details}

The implementation for the experiments may be found on GitHub in repository: $\texttt{tginart/competing-ai}$.

\subsection{Supervised learning}
We ran three datasets for our supervised learning simulations:  \texttt{Postures} \cite{gardner20143d, Dua:2019}, \texttt{Adult Income} \cite{Dua:2019}, and \texttt{FashionMNIST} \cite{xiao2017}. \texttt{Postures} has 5 classes. \texttt{Adult Income} has 2 classes. \texttt{FashionMNIST} has 10 classes. In the main text we reported results for NN and MLP. Our implementation is in Python \cite{van2007python} using Pytorch \cite{paszke2019pytorch} and Numpy \cite{walt2011numpy} frameworks.

 \begin{algorithm}[h]
 \caption{Competing predictors in supervised learning }
 \begin{algorithmic}
 \footnotesize
  \STATE \textbf{Input:} Set of competing predictors $\{ A^{(1)},\dots,A^{(k)} \}$, general population distribution $\mathcal{D}$, prediction quality function $q$.
  \STATE \textit{\# To initialize, all predictors warm-start with $s \in \mathbb{N}$ i.i.d. seed data}
  \FOR{$i \in [k]$}
     \STATE IID random sample $\mathcal{D}_{\mathrm{seed}} ^{(i)} := \{(x_{0,j} ^{(i)},y_{0,j} ^{(i)})\}_{j=1} ^s \sim \mathcal{D}$
     \STATE Train a predictor $A_1 ^{(i)}$ using $\mathcal{D}_{\mathrm{seed}} ^{(i)}$.
     \ENDFOR
 
 \STATE \textit{\# The competition begins}
     \FOR{$1 \leq t < T $}
     \STATE Random sample $(x_t,y_t) \sim \mathcal{D}$ \textit{\# User samples from the general population}
     \STATE For all $i \in [k]$, predictor $A_t ^{(i)}$ proposes a prediction $\hat{y}_t ^{(i)}$, \textit{i.e.}, $\hat{y}_t ^{(i)} = A_t ^{(i)}(x_t)$
     \STATE User selects a winner $w_t \in [k]$ based on $\{ q(y_t,\hat{y}_t ^{(1)}), \dots, q(y_t,\hat{y}_t ^{(k)})\}$  \textit{\# User decides which predictor to query based on the prediction quality}
     \STATE Winner updates the current model $A_{t} ^{(w_t)}$ and defines $A_{t+1} ^{(w_t)}$ by retraining with a new datum $(x_t, y_t)$
     \ENDFOR{}
 \end{algorithmic}
 \label{alg:CLM}
 \end{algorithm}

\subsubsection{Hyperparameters and Training Protocols}

\paragraph{Seed samples} For each data set we set a number of seed samples that was sufficient to train a model to perform slightly (a few percentage points) better than random guessing. For \texttt{Postures} and \texttt{Adult} we set a seed size of 3 samples. For \texttt{FashionMNIST} we set a seed size of 100. 

\paragraph{NN} For NN, we always use \emph{one} nearest-neighbor to keep things simple. Because the method is non-parametric, training the model just consists of appending each new sample to the data matrix.  Refer to \cite{friedman2001elements,altman1992introduction} for more details on the nearest-neighbor algorithm.

\paragraph{MLP} For the MLP, we use the same architecture hyper-parameters for each $\alpha$ in order to ensure consistency. All predictors used the same hyper-parameters and training protocol as well. We selected these by doing a small amount of manual tuning. We use Pytorch's Adam optimizer \cite{kingma2014adam} to train our MLPs (with default settings). For all datasets, we use 1 hidden layer. At the start of the competition, we used Pytorch's default initialization. After this, we never reinitialized the weights. Instead, we always fine-tune the weights from the previous training pass. Refer to \cite{goodfellow2016deep} for more details about MLPs.

For \texttt{Postures} the input width is 16 and the hidden width is 16. We used a learning rate of $10^{-3}$. After the initial training on the seed samples, we retrained after every 4th new data point was added to the set. When training, we used we batch size of (up to) 32. We randomly shuffled the data for each training instance. We trained until we had updated on $1,000$ data points or reached 32 epochs (whichever was met first -- this depended on the number of data samples the predictor had observed).  %We run the competition for a total of $4,000$ rounds.

For \texttt{Adult Income} the input width is 50 and the hidden width is 64. We used a learning rate of  $10^{-3}$. After the initial training on the seed samples, we retrained after every 32nd new data point was added to the set. When training, we used we batch size of (up to) 32. We randomly shuffled the data for each training instance. We trained until we had updated on $1,000$ data points or reached 32 epochs. 

For \texttt{FashionMNIST} the input width is 784 and the hidden width is 400.  We used a learning rate of  $10^{-4}$. After the initial training on the seed samples, we retrained after every 500th new data point was added to the set. When training, we used we batch size of (up to) 50. We randomly shuffled the data for each training instance. We trained until we had reached 30 epochs.

\paragraph{Number of Iterations} The simulation takes time (roughly) quadratic in the number of iterations because of the frequent retraining that takes place. For \texttt{Postures} and \texttt{Adult}, we run the competition for 2,000 rounds when using the NN and we run the competition for 4,000 rounds when using the MLP. For \texttt{FashionMNIST} we run the competition for 10,000 rounds for both algorithms.

\subsection{Collaborative Filtering}

We describe the protocol for the collaborative filtering experiments in detail. Recall that at each round $t$, one of $m$ users is sampled uniformly at random. The sampled user at time $t$, denoted $U_t$, must then query one of $k$ recommenders. The selected recommender then recommends one of $r$ items. We have an underlying preference matrix, $M \in [0,1]^{r \times m}$ such that $M_{ij}$ is the $j$-th user's preference score for item . The goal of the users is to maximize the the preference score 

\paragraph{Preference Matrix} For our simulation we set $r \gets 12$ and $m \gets 64$. We sweep over $k$ as seen in Fig. 4. We sample $W_{ij} \sim N(0,1)$ for $W \in \mathbb{R}^{m \times 3}$ and we sample $V_{ij} \sim N(0,1)$ for $V \in \mathbb{R}^{r \times 3}$. We compute $M' = VW^T$ and then affine scale $M'$ onto $[0,1]$ to produce the final preference matrix $M$. The final rank of $M$ is 4 (the rank increases by 1 due to the affine scaling). 

\paragraph{Users} Each user operates independently of the others without sharing data or otherwise communicating. Users treat each recommender as an arm in a $k$-arm bandit problem. Users operate as if the arms are stationary in time (although this is actually false since the recommenders can improve in time). The reward each user obtains from each recommendation is dictated the underlying preference matrix. Each user uses an $\epsilon$-greedy strategy to $\mathbf{SELECT}$ a recommender. When user $i$ is selecting $\epsilon \gets \tau_{i}^{-0.3}$ where $\tau_{i}$ is the total number of rounds in which user $i$ has participated thus far.

\paragraph{Recommenders} Recommenders use online matrix factorization to reconstruct the underlying preference matrix. Each recommender uses an $\epsilon$-greedy strategy to recommend items to users based on their current representation. Recommenders do not directly observe $M_{ij}$ when they recommend item $i$ to user $j$. Instead, they observe noisy feedback sampled from a Bernoulli trial with mean $M_{ij}$. Recommender $a$ has their own copy of $\hat W^{(a)} \in \mathbb{R}^{m \times 4}$ and $\hat V^{(a)} \in \mathbb{R}^{r \times 4}$ which are initialized with independent uniform random entries. See algorithm below for the psuedo-code for this implementation. The dependence on $a$ is implicit (again, each recommender solves their own instance of the problem with their own data).

 \begin{algorithm}[h]
 \caption{Online Matrix Factorization Update for Collaborative Filtering}
 \begin{algorithmic}
 \footnotesize
  \STATE \textbf{Input:} User index $i$, item index $j$, $\tilde Y \sim \mathbf{Bern}(M_{ij})$, number of observations for pair $(i,j)$, denoted $\nu_{ij}$, learning rate $\gamma$ and regularization penalty $\lambda$  
  
  \STATE

  \STATE $\hat M_{ij} \gets \frac{\nu_{ij}-1}{\nu_{ij}}\hat M_{ij} + \frac{1}{\nu_{ij}} \tilde Y $ \textit{\# Update running average pair $(i,j)$}
  \STATE $\hat V_i \gets \hat V_i - \gamma(\hat M_{ij} - V_iW^{T}_j)\hat W_j - \lambda \hat V_i $  \textit{\# Update row from user matrix}
   \STATE $\hat W_j \gets \hat W_j - \gamma(\hat M_{ij} - V_iW^{T}_j)\hat V^{(a)}_i - \lambda \hat W^{(a)}_j $  \textit{\# Update row from item matrix}

 \end{algorithmic}
 \label{alg:MF}
 \end{algorithm}

 For our simulation we use $\lambda = 10^{-4}$ and $\gamma = 0.1$. When recommender $a$ chooses an item to select for user $i$ they select $\max_{j}\{ \hat V^{(a)}_i  \hat W^{(a)T}_j \}$ with probability $1-\epsilon$ and a uniformly random item with probability $\epsilon$. For consistency with the users, we set $\epsilon \gets \tau_{a}^{-0.3}$ where $\tau_{a}$ is the number of recommendations given out by the recommender $a$ thus far.

\subsection{Additional Experiments}
We include some additional experiments and figures. In particular, we sweep wider ranges of $\alpha$ and we also include some additional simulations using logistic regression as the prediction algorithm.

In Fig.~\ref{fig:app_log_reg_experiment} we report the experiment in Fig. 3 using the logistic regression model. The trends are similar to what we saw with the MLP and NN models.
In Fig.~\ref{fig:app_larger_sweep} we report the experiment in Fig. 4 but for a wider range of $\alpha$. When $\alpha$ is large the prediction quality for users is monotone increasing and when $\alpha$ it is monotone decreasing.

\begin{figure}[h]
 \begin{center}
  \begin{subfigure}[l]{0.45\textwidth}
    \includegraphics[width=\textwidth]{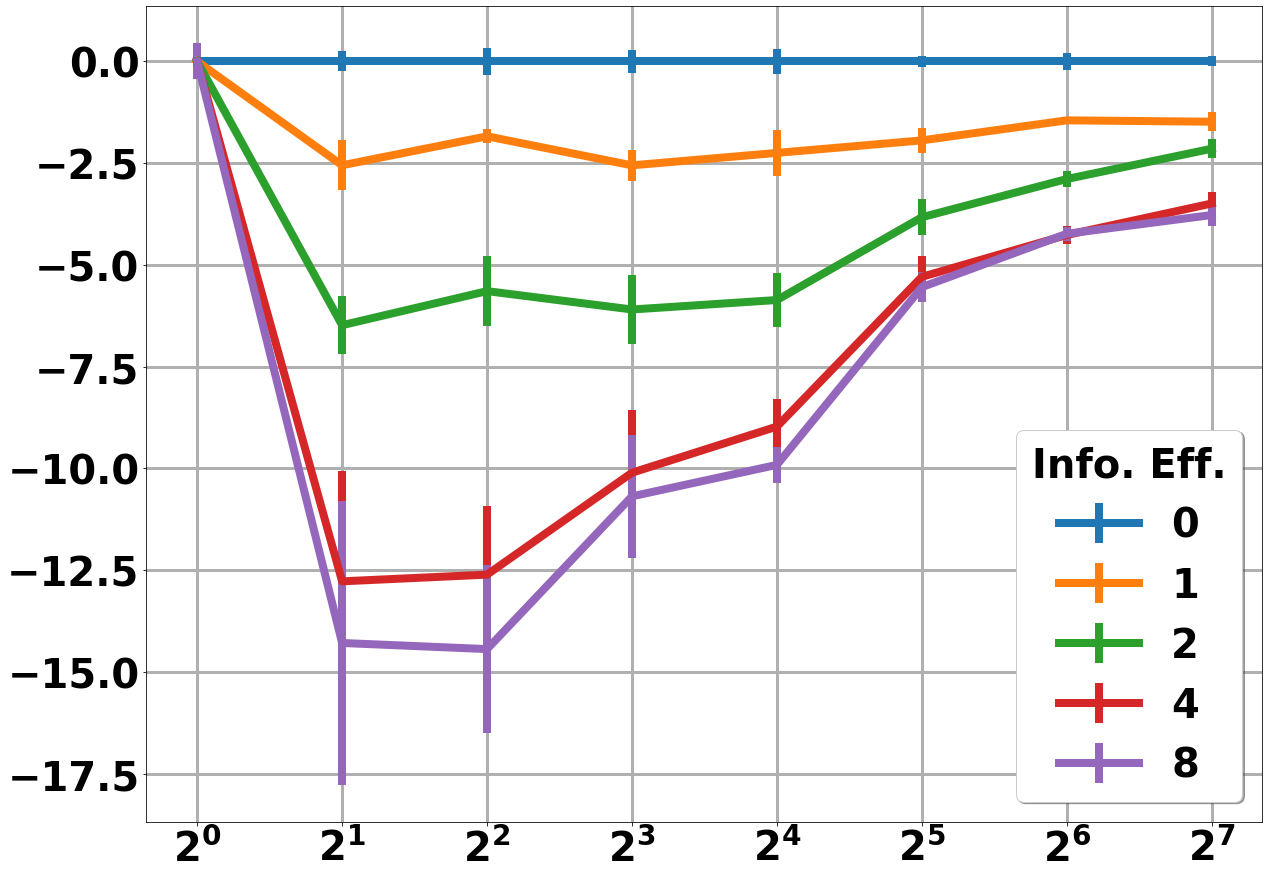}
   \caption{Adult -- Log. Reg.}
   \label{fig:exp4}
  \end{subfigure}
  \begin{subfigure}[c]{0.38\textwidth} 
   \includegraphics[width=\textwidth]{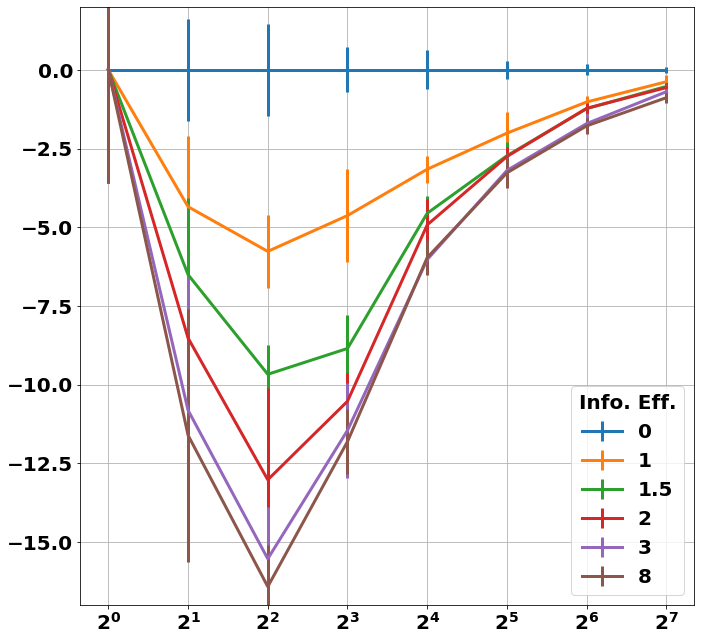}
      \caption{Postures -- Log. Reg. }
   \label{fig:exp5}
  \end{subfigure} 
  \end{center}
  \caption{Figure 3 experiment with Log. Reg.}
  \label{fig:app_log_reg_experiment}
\end{figure}

\begin{figure}[h]
 \begin{center}
  \begin{subfigure}[l]{0.32\textwidth}
    \includegraphics[width=\textwidth]{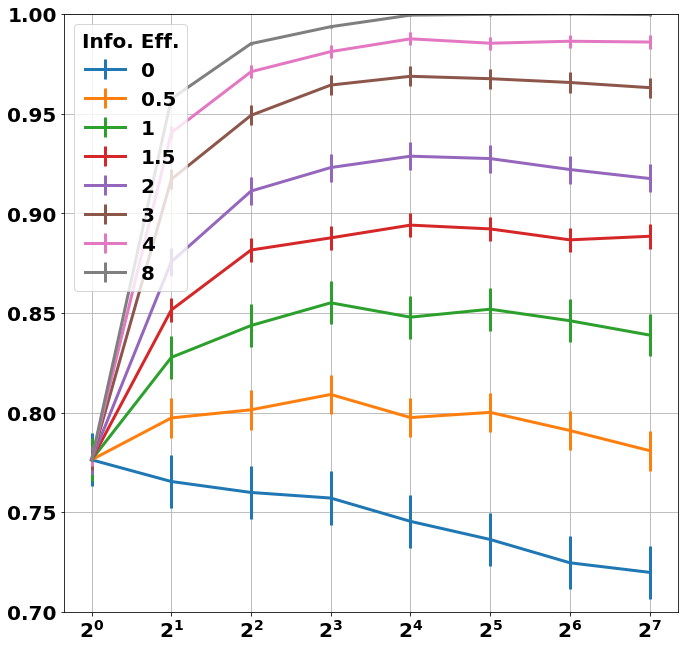}
   \caption{Adult -- NN}
   \label{fig:exp6}
  \end{subfigure}
  \begin{subfigure}[c]{0.32\textwidth} 
   \includegraphics[width=\textwidth]{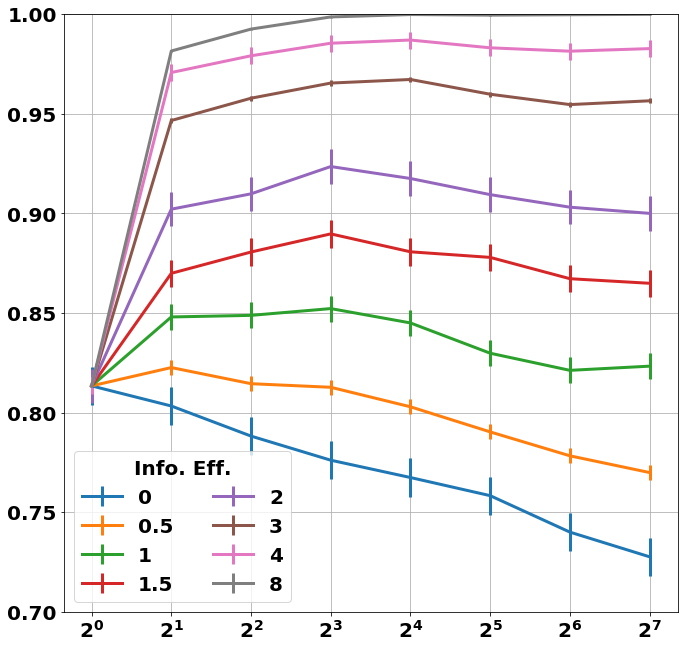}
      \caption{Adult -- Log. Reg. }
   \label{fig:exp7}
  \end{subfigure} 
  \begin{subfigure}[r]{0.32\textwidth}
\includegraphics[width=\textwidth]{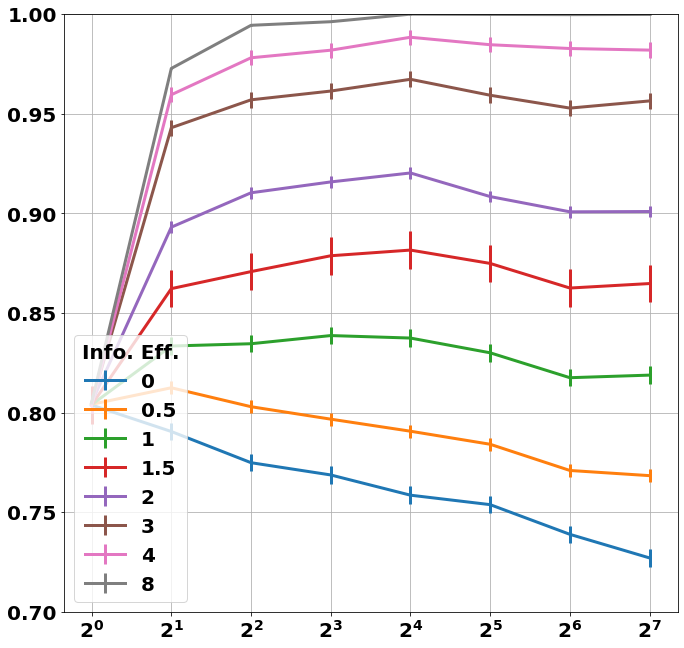}
  \caption{Adult -- MLP}
  \label{fig:exp8}
  \end{subfigure}
  
  %\begin{center}
  \begin{subfigure}[l]{0.32\textwidth}
    \includegraphics[width=\textwidth]{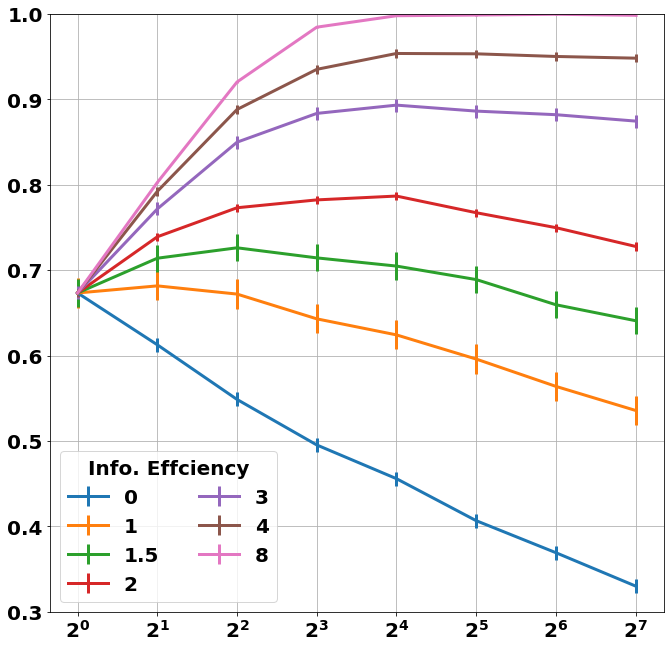}
   \caption{Postures -- NN}
   \label{fig:exp1}
  \end{subfigure}
  \begin{subfigure}[c]{0.32\textwidth} 
   \includegraphics[width=\textwidth]{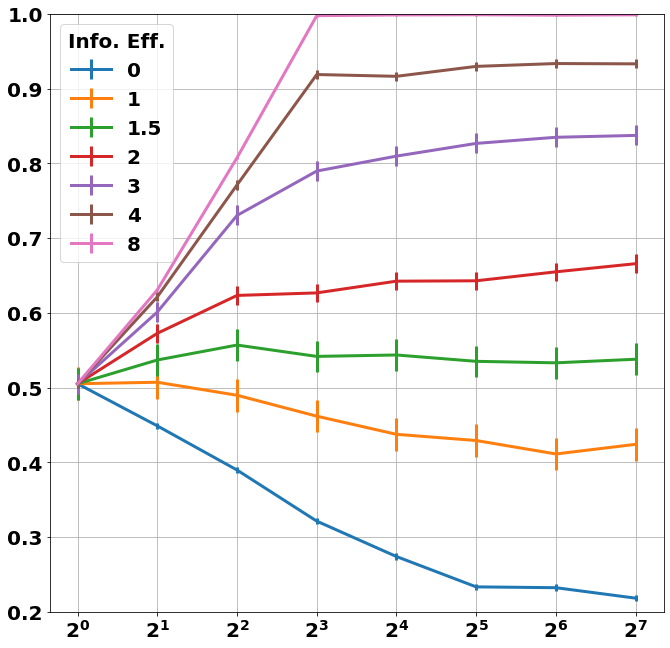}
      \caption{Postures -- Log. Reg.}
   \label{fig:exp2}
  \end{subfigure} 
  \begin{subfigure}[r]{0.32\textwidth}
\includegraphics[width=\textwidth]{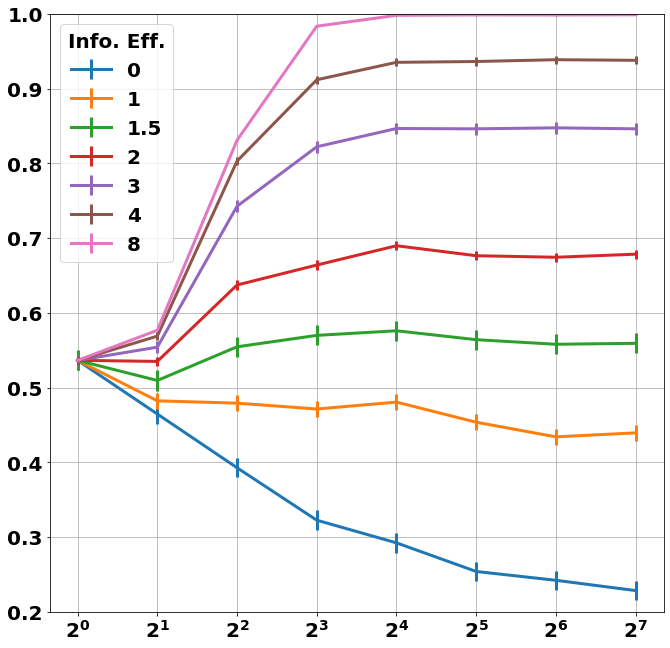}
  \caption{Postures -- MLP}
  \label{fig:exp3}
  \end{subfigure}
  \end{center}
  \caption{Figure 4 experiment with a larger sweep of $\alpha$ reported}
  \label{fig:app_larger_sweep}
\end{figure}
  
\newpage
\section{Mathematical Details}

We include mathematical and theoretical details here. This includes formal definitions for the learning competition as well as proofs of the theorems in Sec. 4.

\subsection{Definitions}

We proceed by formally defining the details of the competition dynamics  which the theorems in the main text use. This particular instance can be generalized to include a wider range of scenarios, including the collaborative filtering simulations. 

\begin{definition} (Supervised Prediction Competition)

Prediction competition  $\mathcal{G} = (\mathcal{D}, \mathbf{A}, \mathbf{SELECT},T)$ where
\begin{enumerate}
    \item $\mathcal{D}$ is a general population distribution over $\mathcal{X} \times \mathcal{Y} \subseteq \mathbb{R}^d \times \{1, \dots, |\mathcal{Y}|\}$ defining a supervised classification task
    \item  $\mathbf{A} = \{A^{(1)}, .., A^{(k)} \}$ is a set of $k$ competing predictors such that each predictor $A^{(i)}$ updates a learning algorithm over time as described in Sec. 2 of the main text.
    \item A selection rule $\mathbf{SELECT}: \mathbb{R}^k \to [k]$ is a function over $\mathbb{R}^k$, \textit{i.e.}, a space of $k$ prediction qualities, and outputs one of the $k$ predictors. 
    \item $T \in \mathbb{N}$ is the number of rounds in the competition.
\end{enumerate}
\end{definition}

When it is unambiguous we may identify a predictor $A^{(i)}$ with their index, $i \in[k]$. As mentioned in Sec. 2, we will use the softmax rule, parameterized by $0 \leq \alpha \leq \infty$ as our user choice model at each round. We use 0-1 loss for $\ell$. %We proceed to define the dynamics of the prediction competition.

\subsection{Proofs}

\subsubsection{Proof of Theorem 4.1}

We proceed to restate and prove Thm 4.1. Note that many non-parametric methods are $C$-approximations to the Bayes error rate.  For example, the nearest neighbors method is a $2$-approximation \cite{cover1967nearest}. Here, we use the standard notion of a $C$-approximation ratio in optimization (see \cite{ausiello2012complexity}).

\begin{theorem}[Theorem 4.1]
%Let $\mathcal{R}_k$ be the expected error rate when there are $k$ competing predictors. 
Suppose users have perfect information  ($\alpha = \infty$) and each predictor is trained using a non-parametric method that is asymptotically a $C$-approximation to the Bayes error rate.  Let $s = |D_0|$ be the number of i.i.d. seed samples that each predictor starts with and assume $s \geq 5$. %$\mathcal{R^*}$. 
%Assume users have perfect information:  ($\alpha = \infty$). 
Then there exists $\mathcal{D}$ such that for any $k > 1$, $\lim_{t \rightarrow \infty} \frac{\mathcal{R}^{k}_t}{\mathcal{R}^1_t} = \infty$.
%in the limit as $T \rightarrow \infty$,  for any $k>1$ and any number of seed samples $s \geq 1$ it follows that $\sup_P \frac{\mathcal{R}_{k}}{\mathcal{R}_1} = \infty$. 

\end{theorem}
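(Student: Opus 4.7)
The plan is to exhibit a single distribution $\mathcal{D}$ under which the non-competitive risk vanishes while the competitive risk stays bounded below by a positive constant independent of $t$. The natural candidate is $\mathcal{X}=\{0,1\}$ with $\mathbf{Pr}(X=0)=\mathbf{Pr}(X=1)=1/2$ and $Y=X$ deterministically, so the Bayes risk is $0$. Because the single-predictor regime trains on i.i.d.\ samples from $\mathcal{D}$, the $C$-approximation hypothesis immediately yields $\mathcal{R}^1_t \to 0$, and the theorem reduces to lower bounding $\mathcal{R}^k_t$ by a positive constant.

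To produce that lower bound, I would define the event $E$: at least one predictor's seed set consists entirely of $(0,0)$ pairs while at least one other predictor's seed set contains both $X=0$ and $X=1$ samples. Because seed sets are i.i.d.\ from $\mathcal{D}$, $\mathbf{Pr}(E)$ depends only on $s$ and $k$, and a short combinatorial count shows it is strictly positive for any $s\geq 1,\,k\geq 2$. On $E$, the ``deficient'' predictor has observed only the label $0$, so any natural non-parametric learner (nearest neighbors is the archetype) predicts $0$ for every input, including $X=1$. With $\alpha=\infty$, the user deterministically routes every $X=1$ datum to a predictor that outputs $1$, so the deficient predictor never acquires an $X=1$ sample. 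By induction on $t$ its training set stays supported on $X=0$, and its risk on $\mathcal{D}$ therefore remains $\geq 1/2$ for every $t$. Averaging over the $k$ competitors gives $\mathcal{R}^k_t \geq \mathbf{Pr}(E)/(2k)$ uniformly in $t$, which combined with $\mathcal{R}^1_t\to 0$ implies the claimed divergence.

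The main obstacle is that the $C$-approximation hypothesis is an asymptotic statement about performance on data drawn from the training distribution and does not by itself pin down what a predictor outputs on inputs that are absent from its training support. I would handle this by observing that essentially every canonical non-parametric method (nearest neighbors, kernel classifiers, tree-based methods) returns the unique observed label when trained on homogeneous data, and either adding this as a mild regularity condition or specializing the construction to nearest neighbors, which is already the canonical $2$-approximation to Bayes and suffices to showcase the phenomenon. A minor caveat is that $\mathbf{Pr}(E)$ decays in $s$; however it remains strictly positive for every fixed $s$, which is all the theorem requires.
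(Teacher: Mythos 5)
Your proposal is correct and follows essentially the same route as the paper's proof: both construct a deterministic $Y=X$ distribution, condition on the event that one predictor's seed set misses the label $1$ while another's contains it, and argue that under $\alpha=\infty$ the deficient predictor can never acquire an $X=1$ sample, so its risk stays bounded away from zero while $\mathcal{R}^1_t\to 0$. The only cosmetic difference is your choice of $\mathbf{Pr}(X=1)=1/2$ versus the paper's $\mathbf{Pr}(X=1)=1/s$ (which makes the bad-seed probability $\geq 1/4$ uniformly in $s$ rather than $2^{-s}$), and you correctly flag the same implicit regularity assumption about predictions outside the training support that the paper also relies on.
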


\begin{proof}

It will suffice to construct a distribution that can be easily analyzed. Let $P$ denote this distribution. At a high-level, the strategy will be to construct a noise-free ground truth $P$ that results in any particular predictor's expected error rate to be bounded away from $0$. This suffices to complete the claim, since any non-zero expected risk is sufficient.

We proceed to define the distribution $P$. Let $P_X$ be a distribution over $\mathbb{R}$.

\[ P_X(x) = \begin{cases} 
    \frac{1}{s} & x = 1    \\
    1 - \frac{1}{s} &  x = 0\\
      0 & \textbf{else} 
      
   \end{cases}
\]

and the marginal $P_{Y|X}$ is defined by $Y = X$.

Fix an arbitrary predictor. Let $\mathcal{E}$ be the event that this fixed predictor's seed set will lack any points labeled $1$. Then:

\begin{equation}
    \mathbf{Pr}(\mathcal{E}) = \left(1-\frac{1}{s}\right)^s \geq \frac{1}{4}
\end{equation} 

Where the inequality holds for any positive $s$. On the other hand, let $\mathcal{F}$ be the event that at least one other of the $k-1$ remaining predictors do sample a seed set that does contain a point labeled $1$. Then:

\begin{equation}
\mathbf{Pr}(\mathcal{\mathcal{F}}) = 1-\left(1-\frac{1}{s}\right)^{s(k-1)} \geq 1 - e^{-k+1}
\end{equation} 

Since the events are independent, combining them yields:

\begin{equation}
\mathbf{Pr}(\mathcal{\mathcal{F} \cap \mathcal{E}}) \geq \frac{1}{4}(1-e^{-k+1}) 
\end{equation} 

Finally, to complete the argument, we point out that in the event $\mathcal{F} \cap \mathcal{E}$, the fixed predictor will never obtain a sample labeled $1$ because they cannot predict a $1$ due to the fact that they lack any such points in their seed sets. Meanwhile, there exists another predictor who does have a seed point labeled $1$ and will correctly predict all points labeled $1$. Thus, the fixed predictor will never obtain a zero error rate. 
In particular we have that

\begin{equation}
\mathcal{R}_t^{k} \geq \mathbf{Pr}(\mathcal{F} \cap \mathcal{E} \cap \{Y=1\}) =  \frac{1}{4ks}(1-e^{-k+1}) > 0
\end{equation} 

for all $t$. 

Of course, it is unavoidable that the error rate not decay to $0$ for large $s$, but the linear rate of decay in the seed size is also not particularly fast. 

It is clear that because $P_{Y|X}$ is deterministic, a $C$-approximation should also converge to zero in large $t$, but we have shown that $\mathcal{R}_t^k$ is bounded away from $0$ for all $t$ when $k > 1$. This completes the proof.

\end{proof}{}

\subsubsection{Proof of Theorem 4.2}

It is worth clarifying that for the Thm 4.2, proved below, we will be assuming that the nearest-neighbor algorithm uses a majority vote tie-breaking procedure when the nearest-neighbor is non-unique.

\begin{theorem}[Theorem 4.2]
Suppose $k=2$ and both predictors use the nearest-neighbor algorithm. Assume that $\mathbf{Pr}(Y = f(X)) > 1- \epsilon$ for some function $f$ and $\epsilon < \frac{1}{3}$, and $\alpha > \log\frac{1 -\epsilon}{\epsilon}$. Let $s = |D_0|$ be the number of i.i.d. seed samples that each predictor starts with and assume $s \geq 2$. Then there exists $\mathcal{D}$ such that %$$\lim_{t \rightarrow \infty}

$$
\lim_{t \rightarrow \infty} \frac{\mathcal{R}_{t}^2}{\mathcal{R}^{1}_t}  \geq 1 +   \frac{\left(4\epsilon(1-\epsilon)\right)^{s/2}}{9\sqrt{2s}} \left(\frac{1}{2} - \epsilon\right) \left(1-2\frac{1-\epsilon}{1-\epsilon + \epsilon e^{\alpha}}\right)^2
$$

A slightly looser version that removes $\epsilon$ dependence holds whenever $\alpha > \log{(2)}$:

$$
\lim_{t \rightarrow \infty} \frac{\mathcal{R}_{t}^2}{\mathcal{R}^{1}_t}  \geq 1 +  \frac{1}{54\sqrt{2s}}\left(\frac{8}{9\sqrt{s}}  \right)^{s/2}\left(1 - \frac{2}{2 + e^\alpha}\right)^2
$$

%In the limit as $T \rightarrow \infty$, for $n=2$, $\alpha > \log\frac{1-\epsilon}{\epsilon}$, $s \geq 1$ it follows that $ \sup_P \frac{\mathcal{R}_{2}}{\mathcal{R}_1} \geq 1 +  \frac{(2\epsilon^2)^{s}}{s}\left(1-\frac{2e^{\alpha}}{e^{\alpha} + 1}\right) $

% $$
% \sup_P \frac{\mathcal{R}_{2}}{\mathcal{R}_1} \geq 1 +  \frac{4^{s+1}\epsilon^{s+2}(1-\epsilon)^{s+1}}{\sqrt{2}s((e^{\alpha}-1)\epsilon + 1)} $$
%\james{Explain $\epsilon$. Can we simplify the 2nd term on RHS? We just want to see the dependence on $s$ and $\alpha$ rather than getting a tight expression.}
\end{theorem}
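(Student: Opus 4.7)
The plan is to follow the same basic template as Theorem 4.1—construct a distribution on which specialization is hard to undo—but now quantify the probability and persistence of specialization under finite $\alpha$ rather than relying on an absorbing event. Concretely, I would choose $\mathcal{D}$ as a two-class distribution with well-separated supports: $X \in \{0,1\}$ with $\Pr(X=1)=1/2$ and $Y = X$ with probability $1-\epsilon$ (so the Bayes risk is exactly $\epsilon$, achievable asymptotically by a single nearest-neighbor predictor in the large-$t$ limit). Under this construction, any nearest-neighbor predictor whose training set contains at least one point of each class reduces to essentially the Bayes classifier on the supports $\{0,1\}$, while a predictor whose training set is restricted to a single class will answer that class for \emph{every} query.

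First, I would isolate an \emph{imbalance event} $\mathcal{E}$ in which, among the $2s$ seed points, predictor~1 receives $\lceil s/2\rceil$ points labeled $0$ out of $s$ and predictor~2 receives the symmetric imbalance toward class $1$ (the noise level $\epsilon$ also shifts labels). A direct Stirling-type calculation gives $\Pr(\mathcal{E}) \ge c \,(4\epsilon(1-\epsilon))^{s/2}/\sqrt{s}$ for an explicit constant, which accounts for the $(4\epsilon(1-\epsilon))^{s/2}/\sqrt{2s}$ factor in the bound. Conditioned on $\mathcal{E}$, each predictor's initial rule is biased toward its majority class: predictor~$1$ answers $0$ on more queries and predictor~$2$ answers $1$ on more queries.

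Next, I would analyze the one-step competition dynamics conditional on $\mathcal{E}$. For a user with true label $y=0$, the probability that predictor~$1$ (the one biased toward $0$) wins is at least $\tfrac{e^\alpha}{e^\alpha + 1}$ when predictor~$1$ is correct and predictor~$2$ is wrong, and symmetrically for $y=1$. The condition $\alpha > \log 2$ (or the sharper $\alpha > \log\tfrac{1-\epsilon}{\epsilon}$) ensures that, in expectation, the conditional probability of labeling with the specialized predictor exceeds $1/2$, so via a standard coupling / Pólya-urn style stochastic-dominance argument, the class imbalance of each predictor's training set is a submartingale; in the $t\to\infty$ limit, predictor~$1$'s share of class-$0$ data tends to $1$ almost surely, and predictor~$2$'s share of class-$1$ data likewise. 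The factor $\bigl(1 - 2\tfrac{1-\epsilon}{1-\epsilon + \epsilon e^\alpha}\bigr)^2$ arises naturally as the squared expected specialization drift at the start, squared because the bound involves the asymptotic risk ratio which scales quadratically in this drift through the coupling to class proportions.

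Finally, I would assemble the pieces. Asymptotically on $\mathcal{E}$, each predictor has error essentially $1/2$ on the opposite class (it has effectively no samples there) and error $\epsilon$ on its own class, so the average asymptotic risk $\mathcal{R}^2_\infty$ on $\mathcal{E}$ is bounded below by roughly $1/2 \cdot (1/2 - \epsilon) + \epsilon$, while $\mathcal{R}^1_\infty = \epsilon + o(1)$ by consistency of nearest-neighbors on this noise-free-support distribution. Combining with $\Pr(\mathcal{E})$ and the drift factor gives exactly the claimed bound, and the looser form follows by substituting $\epsilon \to 1/3$ after checking that $\log 2 > \log\tfrac{1-\epsilon}{\epsilon}$ fails at that value (so one uses a slightly different $\epsilon$ to optimize the constant $8/(9\sqrt{s})$). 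The main obstacle I expect is step three—rigorously controlling the long-run competition dynamics under finite $\alpha$. Unlike the $\alpha=\infty$ case of Theorem~4.1 where a missing class is an absorbing state, here every predictor eventually receives samples of every class, so the argument must show that the relative imbalance persists in the limit (not that one predictor is forever starved) and translate this imbalance into a gap in asymptotic error. I would manage this via a potential-function / drift argument on the difference in class counts, using $\alpha > \log 2$ to guarantee positive expected drift, together with monotonicity of the nearest-neighbor risk in the label balance of the training set.
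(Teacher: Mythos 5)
Your high-level skeleton (a rare seed-imbalance event, persistence of the induced specialization under the competition dynamics, and a risk gap conditional on persistence) matches the paper's strategy, but two of the three steps have genuine problems. First, the construction: the paper collapses $X$ to a single point mass and puts all the randomness in the label ($Y=1$ with probability $1-\epsilon$), so that the nearest-neighbor rule with majority tie-breaking becomes a global majority vote over each predictor's labels, and the event that one predictor has a minority of $1$s in its seed set is a genuine large deviation of $\mathbf{Bin}(s,1-\epsilon)$ with probability $\asymp (4\epsilon(1-\epsilon))^{s/2}/\sqrt{2s}$. In your two-point construction with $\Pr(X=1)=1/2$ and $Y\approx X$, the labels are balanced, so the imbalance event you describe has probability $\Theta(1/\sqrt{s})$, not $(4\epsilon(1-\epsilon))^{s/2}/\sqrt{s}$; moreover $1$-NN at a query $x$ only consults the training points at that same $x$, so an overall label imbalance does not make a predictor ``answer that class for every query'' --- the feature structure you introduced actually breaks the majority-vote mechanism your argument relies on.

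Second, and more importantly, the persistence step. You assert that because the imbalance has positive drift it is a submartingale and therefore the specialized predictor's class share tends to $1$ almost surely. That is false: a positively biased random walk started near the origin returns to the origin with positive probability, and once the label counts rebalance the majority-vote predictions flip and the specialization is destroyed. The correct statement --- and the actual source of the factor $\bigl(1-2\tfrac{1-\epsilon}{1-\epsilon+\epsilon e^{\alpha}}\bigr)^2$ --- is that, conditioned on the imbalance never having been broken, each predictor's signed label-count process is coupled to an independent lazy random walk with increment probabilities proportional to $\mathfrak{q}\propto\epsilon$, $\mathfrak{p}\propto(1-\epsilon)e^{\alpha}$ for one predictor and $\mathfrak{q}'\propto 1-\epsilon$, $\mathfrak{p}'\propto\epsilon e^{\alpha}$ for the other, and the probability that such a walk started at $1$ never hits $0$ is $1-2\mathfrak{q}/(\mathfrak{q}+\mathfrak{p})$. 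The square appears because \emph{both} predictors' walks must avoid the origin forever (the paper lower-bounds the product of the two never-return probabilities by the square of the smaller one), not because ``the risk ratio scales quadratically in the drift.'' The condition $\alpha>\log\tfrac{1-\epsilon}{\epsilon}$ is exactly what makes the worse of the two walks positively biased. Without this coupling and the explicit never-return probability your argument does not produce the claimed constants, and the almost-sure convergence claim would prove far too much. The final assembly (conditional average risk $1/2$ versus Bayes risk $\epsilon$, and setting $\epsilon=1/3$ for the looser bound, where $\log\tfrac{1-\epsilon}{\epsilon}=\log 2$) is essentially in the right spirit.
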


\begin{proof}
Suppose $X = 0$. Let $f$ be constant with $f(\cdot) = 1$. Let $Y = 1$ with probability $1 - \epsilon$ and $Y = 0$ with probability $\epsilon$ for some $\epsilon < \frac{1}{2}$. This joint distribution $P_{XY}$ satisfies the assumptions. The fact that $Y|X$ is non-deterministic is essential. Without this, as long as $\alpha$ is finite, an interpolating non-parametric method would asymptotically $\varepsilon$-cover the input domain, which would result in no penalty.

We will use the following proof strategy with $P$ as our example distribution. For some event $\mathcal{E}$:

\begin{equation}
    \mathcal{R}_{t}^2 \geq \mathbf{Pr}(\mathcal{E}) \mathcal{R}_t^2|\mathcal{E} + (1-\mathbf{Pr}(\mathcal{E}))\mathcal{R}^{*}
\end{equation}

which implies

\begin{equation}
  \frac{\mathcal{R}_{t}^2}{\mathcal{R}^{*}}  \geq  1 + \frac{ \mathbf{Pr}(\mathcal{E})(\mathcal{R}_t^2|\mathcal{E} - \mathcal{R}^{*}) }{\mathcal{R}^{*}}
\end{equation}

where $\mathcal{R}_t^2|\mathcal{E}$ denotes the expected error rate conditioned on event $\mathcal{E}$ and $\mathcal{R}^*$ is the Bayes error rate. We will also show that under $P$:

\begin{equation}
    \lim_{t \rightarrow \infty} \mathcal{R}_t^1 \rightarrow \mathcal{R}^* = \epsilon
\end{equation}

Combining (6) and (7) will then yield the claim. To show (7), we can simply note that the fraction of $1$s in the predictor's data set will concentrate around $1-\epsilon$ due to the law of large numbers \cite{hsu1947complete}. Since this will be the majority, the majority tie-breaker will be in effect, meaning that the predictor will always predict $1$. This prediction is the Bayes optimal prediction. This implies (7) holds.

To finish the proof, we must give expressions or bounds for $\mathbf{Pr}(\mathcal{E})$ and $\mathcal{R}_t^2|\mathcal{E}$ under a suitably defined event $\mathcal{E}$.

%Let $D_0^{(0)}$ and $D_0^{(1)}$ be the seed sets for the two predictors. 

Let $Q_0(t)$ and $Q_1(t)$ be the fraction of $1$s in the data sets for each predictor, respectively, at round $t$. Thus, $Q_i(0)$ is the fraction of $1$s in the seed set of agent $i$.

\begin{equation}
    sQ_i(0) \sim \mathbf{Bin}(s, 1-\epsilon)
\end{equation}

We can use the following bounds on the deviations of a Binomial \cite{arratia1989tutorial,matouvsek2001probabilistic,robert1990ash} to yield the following bounds:

\begin{equation}
  \exp{\left(-s D(\frac{1}{2}||1-\epsilon)   \right) } \geq  \mathbf{Pr}(Q_i(0) < \frac{1}{2} ) \geq \frac{1}{\sqrt{2s}}\exp{\left(-s D(\frac{1}{2}||1-\epsilon)   \right) }
\end{equation}

Where $i \in \{0,1\}$ and $D(a||b)$ is the binary relative entropy, given by

\begin{equation}
     D(a||b) = a \log(\frac{a}{b}) + (1-a)\log(\frac{1-a}{1-b})
\end{equation}    

Using elementary logarithmic identities we can simplify:

\begin{equation}
  \exp{\left(-s D(\frac{1}{2}||1-\epsilon)   \right) } = (4\epsilon(1-\epsilon))^{s/2}
\end{equation}

 The tail bound inequalities simplify into:

\begin{equation}
  \sqrt{2s}\mathfrak{B} \geq  \mathbf{Pr}(Q_i(0) < \frac{1}{2}) \geq \mathfrak{B}
\end{equation}

Where, as a shorthand, we let $\mathfrak{B} =  \frac{(4\epsilon(1 - \epsilon))^\frac{s}{2}}{\sqrt{2s}}$.

We proceed to define a suitable $\mathcal{E}$. In order to do so, we define event $\mathcal{F}(\tau)$, parameterized by $0 \leq \tau \leq \infty$, as follows:   

\begin{equation}
    \mathcal{F(\tau)} = \left( \bigcap_{0 \leq l < \tau} \{Q_0(l) < \frac{1}{2} < Q_1(l)\}\right) \cup \left( \bigcap_{0 \leq l < \tau} \{Q_1(l) < \frac{1}{2} < Q_0(l)\}\right)
\end{equation}

Let $\iota = \argmin_{i}\{Q_i(0)\}$ and $\bar{\iota} = \argmax_{i}\{Q_i(0)\}$. Let $\mathcal{E} = \mathcal{F}(\infty)$. Recall that  $\hat{Y}_{t}^{(j)}$ is agent $j$'s prediction at time $t$.  Observe that $\mathcal{F}(t)$ implies both  $\hat{Y}_{t}^{(\iota)} = 0$ and $\hat{Y}_{t}^{(\bar\iota)} = 1$ due to the majority rule. With $\mathcal{E}$ now defined, we proceed to concoct a bound for $\mathbf{Pr}(\mathcal{E})$.

 In event $\mathcal{F}(0)$, it must be the case that predictor $\iota$ has at least one fewer $1$s than $0$s in the seed set and predictor $\bar\iota$ has at least one more $1$ than $0$. En route for our bound on $\mathbf{Pr}(\mathcal{E})$ we derive the lower bound for $\mathbf{Pr}(\mathcal{F}(0))$ as follows. Notice that $\mathcal{F}(0)$ can be easily expressed in terms of $\{Q_i(0) \leq \frac{1}{2})\}$ which makes $\mathfrak{B}$ an ideal expression for bounding $\mathbf{Pr}(\mathcal{F}(0))$.

\begin{equation}
    \mathbf{Pr}(\mathcal{F}(0)) \geq \mathfrak{B}(1-\sqrt{2s}\mathfrak{B})
\end{equation}

In turn, this yields us the following:

\begin{equation}
    \mathbf{Pr}(\mathcal{E}) \geq \mathbf{Pr}(\mathcal{E}|\mathcal{F}(0))\mathbf{Pr}(\mathcal{F}(0)) \geq \mathbf{Pr}(\mathcal{E}|\mathcal{F}(0))\mathfrak{B}(1-\sqrt{2}\mathfrak{B})
\end{equation}
Thus, we are left with the task of bounding $\mathbf{Pr}(\mathcal{E}|\mathcal{F}(0))$ to find an expression to bound $\mathbf{Pr}(\mathcal{E})$. We proceed by using random walk theory (\cite{feller2008introduction,spitzer2013principles} are suitable references for the uninitiated).  We will study an integer-valued stochastic process over the integers. At time $\mathfrak{t} \in \mathbb{N}$, $\mathfrak{X}_{\mathfrak{t}} \in \mathbf{Z}$. Furthermore, $\mathfrak{X}_{t+1} \in \{\mathfrak{X}_{t}-1, \mathfrak{X}_{t}, \mathfrak{X}_t +1 , \mathfrak{} \} $. The distribution over the increments is defined by $(\mathfrak{q},\mathfrak{p}, \mathfrak{r})$ as follows:

\begin{equation}
    \mathbf{Pr}(\mathfrak{X}_{\mathfrak{t}+1} = \mathfrak{X}_\mathfrak{t} - 1) =  \mathfrak{q} \text{     and     }  \mathbf{Pr}(\mathfrak{X}_{\mathfrak{t}+1} = \mathfrak{X}_\mathfrak{t} + 1) = \mathfrak{p} \text{ and } \mathbf{Pr}(\mathfrak{X}_{\mathfrak{t}+1} = \mathfrak{X}_\mathfrak{t} ) = \mathfrak{r}
\end{equation}

We denote the random walk distribution by $\mathbf{RW}(q,p,r)$ and write $\{\mathfrak{X}\}_{\mathfrak{t}=0}^{\infty} \sim \mathbf{RW}(\mathfrak{q},\mathfrak{p}, \mathfrak{r})$ to associate the random variable to the distribution. We will also define and use an independent copy of the random walk, denoted by $\{\mathfrak{X}'\}_{\mathfrak{t}=0}^{\infty} \sim \mathbf{RW}(\mathfrak{q'},\mathfrak{p'}, \mathfrak{r}')$. The analysis of random walks is a rich subject and many techniques are known for computing various probabilities for events of interest. Of particular utility here is the fact that if $\mathfrak{X}_0 = 1$ and $\frac{\mathfrak{q}}{\mathfrak{q} + \mathfrak{p}} < \frac{1}{2}$, then the probability that the walk never reaches the origin is given  by

\begin{equation}
    \mathbf{Pr}(\mathfrak{X}_{\mathfrak{t}} > 0, \forall \mathfrak{t}) = 1 - \frac{2\mathfrak{q}}{\mathfrak{q} + \mathfrak{p}}
\end{equation}

This result that can be obtained by solving a recurrence relation, as in \cite{feller2008introduction,spitzer2013principles}. Notice that the probability that a random walk with bias $(q,p,r)$ ever reaches the origin is the same as a random walk with bias  $(\frac{q}{q + p},\frac{p}{q + p},0)$. We will assume that the random walks start at 1: $\mathfrak{X}_{0} = \mathfrak{X'}_0 = 1$.

For convenience, we will define events $\mathcal{H}(\tau)$ as follows.

\begin{equation}
     \mathcal{H(\tau)} = \bigcap_{0 \leq l \leq \tau} \left( \{\mathfrak{X}_l > 0 \} \cap \{\mathfrak{X'}_l > 0 \} \right)
\end{equation}

As we will see, our target $\mathbf{Pr}(\mathcal{E}|\mathcal{F}(0))$ can be bounded by the probability that the random walks in question never reach the origin:  $\mathbf{Pr}(\mathcal{H}(\infty)) = \mathbf{Pr}(\mathfrak{X}_{\mathfrak{t}} > 0, \mathfrak{X'}_{\mathfrak{t}} > 0, \forall \mathfrak{t})$.
To see how this works, we begin by considering the following choices for $(\mathfrak{q}, \mathfrak{p}, \mathfrak{r})$ and $(\mathfrak{q}, \mathfrak{p}, \mathfrak{r})$:

  \begin{equation}
     (\mathfrak{q}, \mathfrak{p}, \mathfrak{r})  = \left(  \frac{\epsilon}{1+e^\alpha}, \frac{(1-\epsilon) e^\alpha}{1+e^\alpha},  \frac{1-\epsilon + \epsilon e^\alpha}{1+e^\alpha}  \right)
 \end{equation}
 
  \begin{equation}
     (\mathfrak{q'}, \mathfrak{p'}, \mathfrak{r'})  = \left(  \frac{ 1 - \epsilon}{1+e^\alpha}, \frac{\epsilon e^\alpha}{1+e^\alpha}, \frac{(1-\epsilon)e^\alpha + \epsilon}{1+e^\alpha} \right)
 \end{equation}
 
 Also note that it is easy to give $\mathbf{Pr}(\mathcal{H}(\infty))$ in terms of $(\mathfrak{q}, \mathfrak{p}, \mathfrak{r})$ and $(\mathfrak{q}', \mathfrak{p'}, \mathfrak{r'})$  by noting that $\mathbf{Pr}(\mathcal{H}(\infty)) =  \mathbf{Pr}(\mathfrak{X}_{\mathfrak{t}} > 0, \forall \mathfrak{t})\mathbf{Pr}(\mathfrak{X'}_{\mathfrak{t}} > 0, \forall \mathfrak{t})$.
 
  \begin{equation}
   \mathbf{Pr}(\mathcal{H}(\infty)) =  \left(1 - \frac{2\mathfrak{q}}{\mathfrak{q} + \mathfrak{p}}\right)\left(1 - \frac{2\mathfrak{q'}}{\mathfrak{q'} + \mathfrak{p'}}\right)
 \end{equation}

 The reason for this choice will be made apparent later. By assumption we know that $\alpha$ is not too small and $\epsilon$ is not too large. Precisely, the conditions are: $\alpha > \log\frac{1-\epsilon}{\epsilon}$ and $\epsilon < \frac{1}{2}$. With elementary algebra one can verify these conditions are sufficient to imply that $\frac{\mathfrak{q}}{\mathfrak{q} + \mathfrak{p}} < \frac{1}{2}$ and $\frac{\mathfrak{q'}}{\mathfrak{q'} + \mathfrak{p'}} < \frac{1}{2}$.

We proceed to construct a relationship between $\mathfrak{X}_t$ and $Q_{\bar\iota}(t)$ and between $\mathfrak{X'}_t$ and $Q_{\iota}(t)$. We introduce $\mathbf{S}(t) = \mathbf{SELECT}(t)$ as a shorthand notation.   Let $ \mathbf{1}(\cdot)$ denote the indicator function. Consider $W_t$ and $W'_t$ defined as below:

\begin{equation}
    W_t  = \left(s + \sum_{l=1}^{t}\mathbf{1}(\mathbf{S}(l) = \bar\iota) \right) \left(2Q_{\bar\iota}(t) - 1\right)
\end{equation}
\begin{equation}
    W'_t  = \left(s + \sum_{l=1}^{t}\mathbf{1}(\mathbf{S}(l) = \iota) \right) \left(1 - 2Q_{\iota}(t) \right)
\end{equation}

Intuitively, $W$ is renormalizing $Q_{\bar\iota}$ from $\frac{\# \text{ of 1s}}{\text{total } \#}$ to $(\# \text{ of 1s}) - (\# \text{ of 0s})$ (and analogously for $W'_t$ and $Q_\iota$). Under this transform, we can express $\mathcal{F}(\tau)$ in terms of $W$ and $W'$ as follows:

\begin{equation}
    \mathcal{F(\tau)} =  \bigcap_{0 \leq l < \tau}  ( \{W_l > 0\} \cap \{W_l' > 0 \})
\end{equation}

Because, of course, $W_t > 0$ if and only if $Q_{\bar\iota} > \frac{1}{2}$ (and analogously for $W'_t$ and $Q_\iota$).  Recalling that $\mathbf{SELECT}$ uses the softmax rule and with Bayes rule \cite{feller2008introduction, grimmett2014probability} we can obtain the following expressions for the distribution over the increments to $W$ and $W'$.  Notice that when conditioning on $\mathcal{F}(t-1)$, we know that $\hat{Y}_{t}^{(\iota)} = 0$ because learner $\iota$'s dataset must have a majority of $0$s. Similarly, $\hat{Y}_{t}^{(\bar\iota)} = 1$.

\begin{equation}
    \mathbf{Pr}(W_{t} =  W_{t-1} - 1 |\mathcal{F}(t-1) ) =  \mathbf{Pr}(\mathbf{S}(t) = \bar\iota,  Y_t = 0|\mathcal{F}(t-1)) = \frac{\epsilon}{1+e^\alpha} = \mathfrak{q}
\end{equation}
\begin{equation}
    \mathbf{Pr}(W_{t} =  W_{t-1} + 1 |\mathcal{F}(t-1) ) =  \mathbf{Pr}(\mathbf{S}(t) = \bar\iota,  Y_t = 1|\mathcal{F}(t-1)) = \frac{(1-\epsilon) e^\alpha}{1+e^\alpha} = \mathfrak{p}
\end{equation}
\begin{equation}
    \mathbf{Pr}(W_{t} =  W_{t-1} |\mathcal{F}(t-1) ) =  \mathbf{Pr}(\mathbf{S}(t) = \iota  |\mathcal{F}(t-1)) = \frac{1-\epsilon + \epsilon e^\alpha}{1+e^\alpha} = \mathfrak{r}
\end{equation}

\begin{equation}
    \mathbf{Pr}(W'_{t} =  W'_{t-1} - 1 |\mathcal{F}(t-1) ) =  \mathbf{Pr}(\mathbf{S}(t) = \iota,  Y_t = 1|\mathcal{F}(t-1)) = \frac{ 1 - \epsilon}{1+e^\alpha} = \mathfrak{q'}
\end{equation}
\begin{equation}
    \mathbf{Pr}(W'_{t} =  W'_{t-1} + 1 |\mathcal{F}(t-1) ) =  \mathbf{Pr}(\mathbf{S}(t) = \iota,  Y_t = 0|\mathcal{F}(t-1)) = \frac{\epsilon e^\alpha}{1+e^\alpha} = \mathfrak{p'}
\end{equation}

\begin{equation}
    \mathbf{Pr}(W'_{t} =  W'_{t-1} |\mathcal{F}(t-1) ) =  \mathbf{Pr}(\mathbf{S}(t) = \bar\iota  |\mathcal{F}(t-1)) = \frac{(1-\epsilon)e^\alpha + \epsilon}{1+e^\alpha} = \mathfrak{r'}
\end{equation}

This reveals the reasoning behind the choices for  $(\mathfrak{q}, \mathfrak{p}, \mathfrak{r})$ and $(\mathfrak{q'}, \mathfrak{p'}, \mathfrak{r'})$.  We proceed to bound the increments of $W$ and $W'$ by using $\mathfrak{X}$ and $\mathfrak{X'}$,  but this requires a bit more work. Observe that $(W,W')_t$ are coupled whereas $(\mathfrak{X}$, $\mathfrak{X'})_t$ are independent. The trick will be the couple $(W,W')_t$ and $(\mathfrak{X}$, $\mathfrak{X'})_t$ in a prudent way. To this end, we will introduce a third random process, $(Z, Z')_t$. First, define $\bar\rho(t)$ and $\rho(t)$ as follows:

\begin{equation}
    \bar\rho(t) =  \min_{m \in S_t(\{\mathfrak{X}_j\}) } m 
\end{equation}

\begin{equation}
    \rho(t) =  \min_{m \in S_t(\{\mathfrak{X'}_j\}) } m 
\end{equation}

where $S_t$ is a set-valued function  defined over sequences $\{x_j\}_{j=0}^\infty$ as:

\begin{equation}
 S_t(\{x_j\}) = \{ i : i \in \mathbb{N}, \sum_{j=1}^i \mathbf{1}(x_j \neq x_{j-1}) \geq t \}
\end{equation}

where $\mathbb{N}$ is the set of natural numbers (including zero). Intuitively, $\bar\rho(t)$ is computing the index of process $\mathfrak{X}$ that corresponds to $t$-th non-zero increment (and analogously for $\rho(t)$ and $\mathfrak{X}'$). As for $Z_t$ and $Z'_t$:

\begin{equation}
    Z_t = \mathfrak{X}_{\bar\rho(t)}
\end{equation}

\begin{equation}
    Z'_t = \mathfrak{X'}_{\rho(t)}
\end{equation}

Intuitively $Z_t$ corresponds to the sequence one would obtain from $\mathfrak{X}_t$ with the zero increments deleted (and analogously for $Z'_t$ and $\mathfrak{X'}_t$). Let 

\begin{equation}
    \mathcal{Z(\tau)} =  \bigcap_{0 \leq l < \tau}  ( \{Z_l > 0\} \cap \{Z_l' > 0 \})
\end{equation}

Any $\mathfrak{X}_t$ sequence that reaches the origin in finite time will do so with the non-zero increments deleted, which implies that $Z_t$ will also reach the origin (and analogously for $Z'_t$ and $\mathfrak{X'}_t$). This implies the key fact that event $\mathcal{Z}(\infty)$ occurs  if $\mathcal{H}(\infty)$ occurs. Also, $\mathcal{H}(\infty)$ occurs almost surely if  $\mathcal{Z}(\infty)$ occurs.  Events $\mathcal{Z}(\infty)$ and $\mathcal{H}(\infty)$ are equivalent up to a set of measure zero. Namely, the measure zero event that either $\mathfrak{X}_t$ or $\mathfrak{X'}_t$ produces a finite number of non-zero increments.

\begin{equation}
    \mathbf{Pr}(\mathcal{Z(\infty)}) =     \mathbf{Pr}(\mathcal{H(\infty)})
\end{equation}

It remains to couple $W_t$ with $Z_t$ and $W'_t$ with $Z'_t$. We do so as follows by defining $\bar\psi(t)$ and $\psi(t)$

\begin{equation}
    \bar\psi(t) \sim \mathbf{Bin}\left(t, \mathfrak{r}\right)
\end{equation}

\begin{equation}
    \psi(t) = t - \bar\psi(t)
\end{equation}

\begin{equation}
   W_t = Z_{\bar\psi(t)} + W_0 - 1
\end{equation}

\begin{equation}
   W'_t = Z'_{\psi(t)} + W'_0  - 1
\end{equation}

Notice that because $\mathfrak{r} + \mathfrak{r'} = 1$, the marginals are preserved which makes this a valid coupling. Also notice that the event $\mathcal{Z}(\infty)$ implies $\mathcal{F}(\infty)|\mathcal{F}(0)$ outside of the measure zero event that either $\bar\psi(t)$ or $\psi(t)$ remain bounded as $t \rightarrow \infty$:

\begin{equation}
    \mathbf{Pr}(\mathcal{F}(\infty) | \mathcal{F}(0) ) \geq \mathbf{Pr}( \mathcal{Z}(\infty))
\end{equation}

\begin{equation}
    \mathbf{Pr}(\mathcal{E}|\mathcal{F}(0)) = \mathbf{Pr}(\mathcal{F}(\infty)|\mathcal{F}(0)) \geq \mathbf{Pr}( \mathcal{Z}(\infty)) = \mathbf{Pr}( \mathcal{H}(\infty))
\end{equation}

 We proceed to establish the bound on  $\mathbf{Pr}(\mathcal{E}|\mathcal{F}(0))$. Consider the following: 

\begin{equation}
    \mathbf{Pr}(\mathcal{E}|\mathcal{F}(0)) = \mathbf{Pr}(\mathcal{F}(\infty)|\mathcal{F}(0)) \geq  \mathbf{Pr}(\mathcal{H}(\infty)) =  \left(1 - \frac{2\mathfrak{q}}{\mathfrak{q} + \mathfrak{p}}\right)\left(1 - \frac{2\mathfrak{q'}}{\mathfrak{q'} + \mathfrak{p'}}\right)
\end{equation}

Thus, we can combine our bounds for $\mathbf{Pr}(\mathcal{F}(0))$ and $\mathbf{Pr}(\mathcal{E}|\mathcal{F}(0))$ to obtain a bound on $\mathbf{Pr}(\mathcal{E})$: 

\begin{equation}
    \mathbf{Pr}(\mathcal{E}) \geq \mathfrak{B}(1-\sqrt{2s}\mathfrak{B}) \left(1 - \frac{2\mathfrak{q}}{\mathfrak{q} + \mathfrak{p}}\right)\left(1 - \frac{2\mathfrak{q'}}{\mathfrak{q'} + \mathfrak{p'}}\right)
\end{equation}

It now remains to get a bound for $\mathcal{R}_t^2|\mathcal{E}$. Of course, this is easy now that we have ascertained that in event $\mathcal{E}$, we know that $\iota$ ($\bar\iota$)  always has a majority $0$s ($1$s) thus always predicts $0$ ($1$). From this it follows immediately by taking an average that

\begin{equation}
    \mathcal{R}_t^2|\mathcal{E} = \frac{1}{2} \text{ } \forall t \text{ } \implies  \lim_{t \rightarrow \infty}  \mathcal{R}_t^2|\mathcal{E} = \frac{1}{2}
\end{equation}

Combining the bounds gives:

\begin{equation}
    \lim_{t \rightarrow \infty} \frac{\mathcal{R}_{t}^2}{\mathcal{R}^{1}_t}  \geq  1 + \frac{ \mathbf{Pr}(\mathcal{E})(\mathcal{R}_t^2|\mathcal{E} - \mathcal{R}^{*}) }{\mathcal{R}^{*}}\geq 1+  \frac{1}{\epsilon}\mathfrak{B}(1-\sqrt{2s}\mathfrak{B}) \left(1 - \frac{2\mathfrak{q}}{\mathfrak{q} + \mathfrak{p}}\right)\left(1 - \frac{2\mathfrak{q'}}{\mathfrak{q'} + \mathfrak{p'}}\right) \left(\frac{1}{2} - \epsilon\right)
\end{equation}

At the expense of a looser bound, we can simplify the bound by a series of additional approximations in order to obtain an interpretable expression:

\begin{equation}
  \left(1 - \frac{2\mathfrak{q}}{\mathfrak{q} + \mathfrak{p}}\right)\left(1 - \frac{2\mathfrak{q'}}{\mathfrak{q'} + \mathfrak{p'}}\right) \geq \left(1 - \frac{2\mathfrak{q'}}{\mathfrak{q'} + \mathfrak{p'}}\right)^2 = \left(1-2\frac{1-\epsilon}{1-\epsilon + \epsilon e^{\alpha}}\right)^2
\end{equation}

Which follows because  $\frac{\mathfrak{q'}}{\mathfrak{q'} + \mathfrak{p'}} > \frac{\mathfrak{q}}{\mathfrak{q} + \mathfrak{p}} $.

We can also crudely simplify the $\mathfrak{B}$-term as follows. Recall:

\begin{equation}
    \mathfrak{B} = \frac{(4\epsilon(1-\epsilon))^{s/2}}{\sqrt{2s}}
\end{equation}

With the assumption that $\epsilon \leq 1/3$ we note that the following holds:

\begin{equation}
    \max_{0 \leq \epsilon \leq 1/3}\{4\epsilon(1-\epsilon) \} = 8/9
\end{equation}

As well as the following identity:

\begin{equation}
    (4\epsilon(1-\epsilon)) \geq \epsilon \text{ for } 0 \leq \epsilon \leq 1/3
\end{equation}

Together, we can bound $\mathfrak{B}$ from both above and below:

\begin{equation}
   \frac{(8/9)^{s/2}}{\sqrt{2s}} \geq  \mathfrak{B} \geq  \frac{\epsilon^{s/2}}{\sqrt{2s}} 
\end{equation}

Allowing us to conclude:

\begin{equation}
1-\sqrt{2s}\mathfrak{B} \geq 1/9
\end{equation}

And,

\begin{equation}
 \mathfrak{B}(1-\sqrt{2s}\mathfrak{B}) \geq \frac{\mathfrak{B}}{9} \geq  \frac{\epsilon^{s/2}}{9\sqrt{2s}} 
\end{equation}

%We can also crudely simplify the $\mathfrak{B}$-term under the assumption that $\epsilon \leq \frac{1}{3}$:

%\begin{equation}
% \mathfrak{B} \leq \frac{1}{\sqrt{2s}}
%\end{equation}

%In turn, this implies,
%\yc{I don't get it.... Is it from (49)?}
%\begin{equation}
% \mathfrak{B}(1-\sqrt{2s}\mathfrak{B}) \geq \frac{\mathfrak{B}}{9}
%\end{equation}

Thus, we arrive at a neater expression in terms of $\epsilon$, $s$, and $\alpha$:

\begin{equation}
\lim_{t \rightarrow \infty} \frac{\mathcal{R}_{t}^2}{\mathcal{R}^{1}_t}  \geq 1 +   \frac{\left(4\epsilon(1-\epsilon)\right)^{s/2}}{9\sqrt{2s}} \left(\frac{1}{2} - \epsilon\right) \left(1-2\frac{1-\epsilon}{1-\epsilon + \epsilon e^{\alpha}}\right)^2
\end{equation}

\begin{equation}
 \geq 1 +   \frac{\epsilon^{s/2}}{9\sqrt{2s}} \left(\frac{1}{2} - \epsilon\right) \left(1-2\frac{1-\epsilon}{1-\epsilon + \epsilon e^{\alpha}}\right)^2
\end{equation}

Finally, due to monotonicity of the expression in $\epsilon$ we can remove the dependence on $\epsilon$ by fixing  $\epsilon = \frac{1}{3}$:

\begin{equation}
\lim_{t \rightarrow \infty} \frac{\mathcal{R}_{t}^2}{\mathcal{R}^{1}_t}  \geq 1 +  \frac{1}{54\sqrt{2s}}\left(\frac{8}{9\sqrt{s}}  \right)^{s/2}\left(1 - \frac{2}{2 + e^\alpha}\right)^2
\end{equation}

\end{proof}

\subsubsection{Proof of Theorem 4.3}

It  will be helpful to separately prove a lemma for use in Theorem 4.3's proof. This lemma upper bounds the variance of a symmetrical truncated Binomial with the variance of a usual Binomial with the same number of trials as support left in the truncated Binomial. To clarify notion in the proof of the lemma, note that we define $f(x)$ $\propto g(x)$ to mean:

\begin{equation}
    f(x) = C g(x) \text{ } \forall x \in \mathcal{X}
\end{equation}

where $C$ is some fixed constant independent of $x$ and the equation holds over all choices of $x$ in some set $\mathcal{X}$ which can be inferred from context.

\begin{lemma}
\label{lemma:binvar}
If $X \sim \mathbf{Bin}(2n, \frac{1}{2})$, then  for any integer $c$ such that $n >  c > 0$:
%Let $X_{\mathbf{tr}}$ be a symmetrically truncated Binomial with $p=1/2$ and $2n$ trials. If $X \sim \mathbf{Bin}(2n, \frac{1}{2})$, then $X_{\mathbf{tr}} \sim X|\{n - c \leq X \leq n + c \}$ for some integer $c$ such that $n >  c > 0$. Then, 

$$\mathbf{Var}(X|n-c \leq X \leq n+c) \geq \mathbf{Var}(\mathbf{Bin}(2c,1/2)) = \frac{c}{2}$$

\end{lemma}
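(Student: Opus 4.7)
The plan is to reduce the claim to a correlation inequality between two symmetric distributions on the common support $\{n-c,\dots,n+c\}$. Write $p_1(j) \propto \binom{2n}{n+j}$ for the conditional law of $X-n$ given $|X-n|\le c$, and write $p_2(j) \propto \binom{2c}{c+j}$ for the centered law of $Y - c$ where $Y \sim \mathbf{Bin}(2c,1/2)$, both viewed as probability mass functions on $j \in \{-c,\dots,c\}$. Both distributions are symmetric about $0$, so each of the two variances in the lemma equals $\mathbb{E}_{p_i}[j^2]$, and the claim reduces to showing $\mathbb{E}_{p_1}[j^2] \geq \mathbb{E}_{p_2}[j^2] = c/2$.

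The core step is to control the likelihood ratio $r(j) = p_1(j)/p_2(j)$, which is proportional to $\binom{2n}{n+j}/\binom{2c}{c+j}$. I would first note $r(-j)=r(j)$ from the symmetry of binomial coefficients, and then compute $r(j+1)/r(j) = \frac{(c+j+1)(n-j)}{(c-j)(n+j+1)}$ by telescoping factorials. The cross-product identity $(c+j+1)(n-j) - (c-j)(n+j+1) = (n-c)(2j+1)$ is strictly positive for $0 \leq j < c$ since $n > c$, so $r$ is strictly increasing in $|j|$. Intuitively, conditioning the wider binomial $\mathbf{Bin}(2n,1/2)$ on the window $\{n-c,\dots,n+c\}$ leaves relatively more mass near the boundary than the flatter binomial $\mathbf{Bin}(2c,1/2)$ does. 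This monotonicity is the main obstacle; everything else is routine.

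The conclusion then follows from a standard positive-correlation argument. Both $r(j)$ and $j^2$ are nondecreasing functions of $|j|$, so under the symmetric distribution $p_2$ they are positively correlated by Chebyshev's sum inequality: $\mathbb{E}_{p_2}[r(j)\, j^2] \geq \mathbb{E}_{p_2}[r(j)]\,\mathbb{E}_{p_2}[j^2]$. Normalization forces $\mathbb{E}_{p_2}[r(j)] = \sum_j p_1(j) = 1$, so the left-hand side equals $\mathbb{E}_{p_1}[j^2]$, and I recover $\mathbb{E}_{p_1}[j^2] \geq \mathbb{E}_{p_2}[j^2]$. The proof closes with the elementary identity $\mathbb{E}_{p_2}[j^2] = \mathbf{Var}(\mathbf{Bin}(2c,1/2)) = c/2$. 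As a sanity check, the limiting cases $n \to \infty$ (the conditional distribution tends to uniform on the window, with variance $\sim c^2/3$) and $n = c$ (equality) are both consistent with the bound.
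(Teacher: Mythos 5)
Your proof is correct, and its core is the same as the paper's: both arguments reduce the claim to showing that the likelihood ratio between the conditioned $\mathbf{Bin}(2n,\tfrac12)$ and the reference $\mathbf{Bin}(2c,\tfrac12)$ grows with distance from the common center. The paper establishes this monotonicity by comparing modal ratios, writing $\prod_{i=1}^{j}\frac{n-i}{n+i} \geq \prod_{i=1}^{j}\frac{c-i}{c+i}$ and verifying the term-by-term inequality inductively in $c$; you get the same fact in one step from the increment ratio $r(j+1)/r(j) = \frac{(c+j+1)(n-j)}{(c-j)(n+j+1)}$ and the cross-product identity $(n-c)(2j+1)>0$. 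The more substantive difference is how each argument closes. The paper asserts that the ratio inequality makes the truncated variable ``strictly less concentrated'' and that, given common support, symmetry, and unimodality, this implies larger variance --- a true statement, but one left without an explicit derivation. Your Chebyshev-sum (positive-correlation) step, using that $r(j)$ and $j^2$ are both nondecreasing in $|j|$ together with the normalization $\mathbb{E}_{p_2}[r]=1$, supplies exactly the missing justification and turns that assertion into a two-line computation. So your write-up is not a different route so much as a tighter, fully rigorous version of the same one; the sanity checks at $n=c$ and $n\to\infty$ are a nice touch the paper omits.
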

\begin{proof}
As a notional shorthand, we will use $X_{\mathbf{tr}}$ to denote the truncated version of $X$: 

\begin{equation}
    X_{\mathbf{tr}} \sim X|\{n - c \leq X \leq n + c \}
\end{equation}

By the definition of truncation, $\mathbf{Pr}(X_{\mathbf{tr}} = x) \propto \mathbf{Pr}(X = x)$ for all $x$ in the truncated support. From this, it follows that $X_{\mathbf{tr}}$ inherits symmetry and unimodality from $X$.  Furthermore, from the Binomial pmf it also directly follows that $\mathbf{Pr}(X = x) \propto \binom{n}{x}$ because $p = \frac{1}{2}$. Let $X_{\mathbf{sup}}$ denote the usual Binomial defined over the truncated support: 

\begin{equation}
    X_{\mathbf{sup}} \sim \mathbf{Bin}(2c,1/2) + n -c 
\end{equation}

The constant translation is just an aesthetic to keep the supports of $X_{\mathbf{sup}}$ and $X_{\mathbf{tr}}$ identical. Observe that for both $X_{\mathbf{sup}}$ and $X_{\mathbf{tr}}$, $n$ is both the mean and the mode outcome (keep in mind the symmetry and unimodality of both). We will complete the proof by showing that:

\begin{equation}
    1 \geq \frac{\mathbf{Pr}(X_{\mathbf{tr}} = n + j)}{\mathbf{Pr}(X_{\mathbf{tr}} = n)} \geq  \frac{\mathbf{Pr}(X_{\mathbf{sup}} = n + j)}{\mathbf{Pr}(X_{\mathbf{sup}} = n)} \text{  for all $j$ in the support}
\end{equation}

Notice that this inequality would imply that $X_{\mathbf{tr}}$ is strictly less concentrated that $X_{\mathbf{sup}}$, and since they have the same support, are unimodal, and are symmetric, it follows that $X_{\mathbf{tr}}$ has higher variance. We proceed to demonstrate this key inequality.

Based on our established proportionality rules we can express the inequality in terms of factorials:

\begin{equation}
    \frac{\mathbf{Pr}(X_{\mathbf{tr}} = n + j)}{\mathbf{Pr}(X_{\mathbf{tr}} = n)} = \frac{\binom{2n}{n+j}}{\binom{2n}{n}} = \frac{(n!)^2}{(n+j)!(n-j)!} = \prod_{i=1}^j \frac{n-i}{n+i}
\end{equation}

\begin{equation}
    \frac{\mathbf{Pr}(X_{\mathbf{sup}} = n + j)}{\mathbf{Pr}(X_{\mathbf{sup}} = n)} = \frac{\binom{2c}{c+j}}{\binom{2c}{c}}  \frac{(c!)^2}{(c+j)!(c-j)!} = \prod_{i=1}^j \frac{c-i}{c+i}
\end{equation}

In order to compare these quantities, consider the following:

\begin{equation}
   \prod_{i=1}^j \frac{c+1-i}{c+1+i} > \prod_{i=1}^j \frac{c-i}{c+i}
\end{equation}

In order to see the correctness of the above inequality, consider any particular term in the product:

\begin{equation}
   \frac{c+1-i}{c+1+i} > \frac{c-i}{c+i}
\end{equation}

\begin{equation}
  (c+1-i)(c+i) > (c+1+i)(c-i)
\end{equation}

\begin{equation}
  2i > 0
\end{equation}

From which we can conclude that the inequality over the entire product:  $ \prod_{i=1}^j \frac{c+1-i}{c+1+i} > \prod_{i=1}^j \frac{c-i}{c+i}$ must hold since it holds over each term and all are positive. By inductively applying the product inequality, we can then conclude the key inequality because $n > c$:

\begin{equation}
    \prod_{i=1}^j \frac{n-i}{n+i} \geq \prod_{i=1}^j \frac{c-i}{c+i}
\end{equation}

\end{proof}

\begin{theorem}[Theorem 4.3]
Suppose the data is generated from a  linear model $Y = XW + \epsilon$ with $\mathbf{E}(\epsilon|X) = 0$. %Let $\mathcal{R}_k$ be the expected error rate when there are $k$ competing predictors. 
Assume each predictor is uses an ordinary least-squares linear regression.  Let $ s \geq 1$ be the number of i.i.d. seed samples each predictor starts with. We have the following:
%The following lower bounds hold in the limit as $T \rightarrow \infty$:

(i) If $\alpha > 0$ and $k \geq 2$ then  $\lim_{t\rightarrow \infty} \sup_{\mathcal{D}} \frac{\mathcal{R}^{k}_t}{\mathcal{R}^1_t}  \geq 1 + \frac{1}{3567s^{3/2}}  $ 

(ii) If $\alpha = \infty$ then $\lim_{t \rightarrow \infty} \sup_{\mathcal{D}} \frac{\mathcal{R}^{k}_t}{\mathcal{R}^1_t}  \geq  \frac{2 k}{k+1}$
\end{theorem}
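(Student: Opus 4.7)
The plan is to treat the two parts separately. In both, the strategy is to exhibit a tractable linear-Gaussian distribution $\mathcal{D}$ for which the competition-induced biased sampling inflates each predictor's OLS variance while leaving the estimator unbiased, and then reduce the risk comparison to a comparison of effective sample counts via the standard identity $\mathcal{R}(A;\mathcal{D}) - \sigma^2 = \mathbf{E}\bigl[(\hat W - W)^\top \mathbf{E}[XX^\top] (\hat W - W)\bigr]$. Unbiasedness is maintained because the modeling assumption $\mathbf{E}[\epsilon \mid X] = 0$ is preserved under conditioning on the (sample-dependent) selection events that determine which predictor receives a given $(x,y)$, so the only quantity to track is the variance of $\hat W^{(i)}$.

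For part (ii), I take a symmetric one-dimensional instance, say $X \sim N(0,1)$ and $Y = WX + \epsilon$. With $\alpha = \infty$ the winner at each round is deterministic: writing the winning condition in terms of the current estimates $\hat W^{(i)}$ shows that predictor $i$ wins on a halfspace of $(x,y)$. Once estimates have stabilized near $W$, these halfspaces asymptotically partition sample space into $k$ cells of probability $\approx 1/k$ each. On each cell the OLS estimator is unbiased with variance $\Theta(\sigma^2/(s+n_i))$, where $n_i$ is predictor $i$'s sample count. Averaging the resulting excess risks across the $k$ predictors and applying the convexity bound $\tfrac{1}{k}\sum_i 1/(s+n_i) \geq k/(ks+t)$, together with the single-predictor excess risk $\Theta(1/(s+t))$, yields the ratio lower bound $k(s+t)/(ks+t)$. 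Since this quantity already equals $\tfrac{2k}{k+1}$ at $t = s$ and tends to $k$ as $t \to \infty$, taking $\sup_{\mathcal{D}}$ and then $\lim_{t \to \infty}$ produces at least $\tfrac{2k}{k+1}$, as claimed.

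Part (i) is harder because $\alpha$ is only assumed positive, so the dynamics are stochastic and the bound must depend on $s$. Following the template of Theorem 4.2, I would take a discrete design $X \in \{-1,+1\}$ with $Y = WX + \epsilon$ and condition on a favorable seed event: the two predictors' initial OLS estimates $\hat W_0^{(1)}, \hat W_0^{(2)}$ land on opposite sides of $W$ at a separation of order $1/\sqrt{s}$. A local central limit estimate for the OLS estimator makes this event probability of order $s^{-1/2}$ per predictor, and the conditional expected excess risk gap is of order $1/s$; combining these yields the $s^{-3/2}$ factor in the stated bound. A martingale or random-walk coupling in the spirit of the one used for Theorem 4.2 then shows that the softmax bias with $\alpha > 0$ preserves the asymmetry with constant probability for all $t$, so the conditional risk gap survives the limit $t \to \infty$. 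The principal obstacle is precisely this persistence step: unlike the nearest-neighbor setting, OLS estimates are continuous-valued, so ``predictor $i$'s advantage'' must be tracked as a real-valued process and controlled via concentration inequalities, rather than reduced to a discrete biased random walk with explicit ruin probabilities.
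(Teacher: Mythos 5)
Your central premise for part (ii) --- that the OLS estimators stay unbiased because $\mathbf{E}[\epsilon\mid X]=0$ is ``preserved under conditioning on the selection events'' --- is false, and this breaks the whole argument. The winner at round $t$ is chosen as a function of the prediction qualities, which depend on $y_t$; with $\alpha=\infty$ the predictor whose current estimate is closest to $y_t$ wins. Conditioning on ``predictor $i$ won'' therefore tilts the law of $\epsilon$ given $X$, and each predictor's dataset becomes a biased subsample of the noise. This selection-induced \emph{bias}, not variance inflation, is the entire mechanism: the paper's proof takes $P_X$ a point mass, $\epsilon\sim\mathbf{Unif}(-\delta,\delta)$, observes that the $\alpha=\infty$ dynamics are exactly MacQueen's sequential $K$-means on $\mathbf{Unif}(1-\delta,1+\delta)$, and invokes the convergence of the centroids to the uniform quantization points $\frac{2i\delta}{k+1}-\delta+1$; averaging the MSE over these quanta against the MMSE $\delta^2/3$ gives precisely $\frac{2k}{k+1}$. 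Your variance-only accounting cannot produce a limit bounded away from $1$: as $t\to\infty$ each predictor still receives infinitely many samples, so under your (incorrect) unbiasedness assumption every $\mathcal{R}(A^{(i)}_t;\mathcal{D})$ converges to the irreducible MSE and the ratio of risks tends to $1$. Your observation that $k(s+t)/(ks+t)$ ``equals $2k/(k+1)$ at $t=s$'' does not help, since the theorem concerns $\lim_{t\to\infty}$.

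For part (i) your heuristic ($s^{-1/2}$ seed-event probability times a $1/s$ conditional gap) lands on the right exponent, but the step you flag as the ``principal obstacle'' --- showing the asymmetry persists for all $t$ under a fixed distribution with only $\alpha>0$ --- is exactly the step you have not supplied, and it is not a routine completion: with $\alpha$ finite and a bounded design, losing predictors still win a constant fraction of rounds, accumulate infinitely many samples, and their estimates converge, so whether any gap survives is the whole difficulty. The paper sidesteps this entirely by a different device: it puts $Y\in\{0,\Delta\}$ and lets $\Delta=\sqrt{T}$ grow, so the softmax ratios $\exp(\alpha\cdot 2\Delta(\mu_{i^+}-\mu_j))$ blow up and only the two extremal predictors ever receive data (with probability $(1-e^{-\varphi\sqrt{T}})^T\to 1$). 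The $k-2$ non-extremal predictors are then frozen at their seed estimates, and the $s^{-3/2}$ factor comes from lower-bounding the variance of a seed-based binomial estimate conditioned on being non-extremal, via a truncated-binomial variance lemma (Lemma C.2). If you want to salvage your route, you either need an analogous scale-separation trick or a genuinely new persistence argument for the real-valued advantage process; as written, the proposal does not prove either part.
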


\begin{proof}

For both parts it will be helpful to recall the seminal result of White which shows the consistency of the OLS estimate for linear models \cite{white1980heteroskedasticity}. Thus, $\mathcal{R}^1_t$ converges to the minimum mean square error.

\paragraph{Proof of (ii)} We first prove part (ii). It is sufficient to construct a $P_{XY}$ satisfying the linear model assumptions that can be easily analyzed. In this proof, we construct $P_{XY}$ such that with perfect information and OLS updates, the learning dynamics reduce into the \emph{sequential K-means} dynamics studied in \cite{macqueen1967some}. To be more precise, the resultant random process over the tuple of each predictor's OLS weight estimates is almost the same as that over the tuple of centroids in sequential $K$-means, in a sense that will be made formal in this proof. First we restate the notion of a sequential $K$-means process, originally defined in \cite{macqueen1967some}.

\begin{definition} (Sequential $K$-means process)

Let $P$ be a non-atomic distribution over $\mathbb{R}$ with bounded first and second moments. Let $V(t) \in \mathbb{R}^K$ be a vector-valued random process. At time $t = 0$, define $V_\kappa(0) \sim Q$ under the product coupling (i.e. independently sampled) and $N(t) = 0 \in  \mathbb{N}^K$

We define $V_\kappa(t)$ for $t >  0 $ recursively as follows. $Z(t) \sim Q$ is an i.i.d sample from the distribution. Let $i(t) = \argmin_{\kappa \in \{1,...,k\}}|V_\kappa(t) - Z(t)|$. Then, $N(t) = N(t-1 )+ \mathbf{1}_{i(t)}$ where $\mathbf{1}_{i(t)}$ is the one-hot vector at index $i(t)$ and 

\[ V_j(t) = \begin{cases} 
    \frac{1}{N_j(t)} \left( N_j(t-1) V_j(t-1) + Z(t) \right) & j = i(t)    \\
    V_j(t-1) &  j \neq  i(t)
   \end{cases}
\]

\end{definition}

For sequential $K$-means processes, the following holds \cite{macqueen1967some}:

\begin{lemma} (J. MacQueen, 1967)

Let $V \in \mathbb{R}^d$ be a a sequential $k$-means process. Then $V$ converges a.s. to an \emph{unbiased partition} jointly satisfying: 

$$\textbf{(i)} V_j(\infty) = \argmin_{v \in \mathcal{S}_j}    \int_{z \in \mathcal{S}_j} (v - z)^2 dQ(z) $$

and 

$$\textbf{(ii)   } \mathcal{S}_j = \{z : j = \argmin_{[K]} | V_j(\infty) - z|  \} \subset \mathbb{R} $$

Equivalently, these conditions state that $V_j$ is the centroid of its Voronoi interval \cite{fortune1992voronoi, aurenhammer2000voronoi} with respect to the vector $V$.

\end{lemma}

We proceed to give a reduction from the competitive learning market with OLS predictors with a particular choice of $P_{XY}$. In turn, this will enable us to concretely characterize the weight estimates asymptotically via MacQueen's lemma.

Let $W = 1$. Let $P_X$ be a point mass at $1$. Let $\epsilon_i \sim \mathbf{Unif}(-\delta,\delta)$ for $0 < \delta$. In this construction, $P_Y \sim  \mathbf{Unif}(1-\delta,1+\delta)$ since $Y = 1 + \epsilon$. In the scalar case, the OLS rule is $\hat{W} = \frac{X^n \bullet Y^n}{||X^n||^2}$. For the assumed distribution, the weight estimate is determined by the empirical mean: 

\begin{equation}
    \hat{W} = \frac{\sum_i y_i}{n} = 1 + \Bar{\epsilon} = \hat Y
\end{equation}

Let $\hat{W}_t(i)$ denote the $i$-th predictor's weight estimate at round $t$, with $\hat{W}_0(i)$ being the initial estimate based on the the seed samples. Let $\nu_t(a) = s + \sum_{j=0}^t \mathbf{1}(\mathbf{SELECT}((j) = a) $ denote the total number of observations made by agent $a$ by round $t$ (where $\mathbf{1}()$ is the indicator function ). When there is no ambiguity, we will write $a_t = \mathbf{SELECT}(t)$ as shorthand.

At round $t$, the OLS update is given by:

\begin{equation}
    \hat{W}_t(a_t) \gets \frac{\nu_t(i) - 1}{\nu_t(i)} \hat{W}_{t-1}(a_t) + \frac{1}{\nu_t(i)} y_t
\end{equation}

Under the assumption of perfect information and rational consumers ($\alpha = \infty$) we have that:

\begin{equation}
    a_t = \argmin_{a \in \mathcal{A}} |\hat{W}_t(a) - y_t|
\end{equation}

In prose, at time $t$, the sample $Y_t$ is averaged into the closest of the OLS estimates at time $t$. This is precisely the update rule use in \cite{macqueen1967some} to describe the sequential $K$-means. However, while the update rule is the same, there remains one blemish that we must smooth over before we can apply MacQueen's lemma. Namely, that the intializations between our case and the sequential $K$-means are not the same, since here the seed dataset can consist of more than one example. It turns out, however, that this issue is easily remedied by noting that the tail of any sequential $K$-means process is conditionally independent of $Y(t)$ given only $V(t)$ and $N(t)$. In other words, for $\tau >0$:

\begin{equation}
    V(t+\tau) \indep (Y(0),..,Y(t)) | (V(t), N(t))
\end{equation}

Therefore, when we have $s$ seed points per $k$ agents, we may construct an equivalence relation between the our process at time $0$ and the sequential $K$-means process at time $(s-1)k$ via:

\begin{equation}
\forall i :   N_i((s-1)k) \gets  \nu_i(0) = s 
\end{equation} and
\begin{equation}
    V((s-1)k) \gets  \hat W(0)
\end{equation}

Thus, we can treat our process $\hat W_t$ as a sequential $K$-means process conditioned on the sequence head as just stated. From the definition of the sequential $K$-means process, it is clear that conditioning on this measure zero event remains well-posed. Furthermore, we can conclude that it does not alter the convergence of the tail because for any $I \subset \mathbb{R}^K$ we have that 

\begin{equation}
    \mathbf{Pr}(\hat W(0) \in I) > 0 
\end{equation}
     if and only if 
\begin{equation}
        \mathbf{Pr}( \{ V((s-1)k) \in I \}\cap \{N_i((s-1)k) = s, \forall i\}) > 0
\end{equation}
  
  which establishes absolute continuity \cite{billingsley2008probability} of the corresponding probability measures over the possible initialization. From this follows the a.s. convergence of $\hat W$ \cite{kallenberg2017random}, and we may dispense with the issue of the initialization and conclude that $\hat Y$  converge a.s. to an unbiased partition (recalling that $\hat Y = \hat W$ for our chosen $P_{XY}$).

It remains to make use of MacQueen's lemma to finish our claim. To do so, we point out the well-known result that there is only one unbiased partition of the uniform distribution over an interval $(1-\delta,1+\delta)$ \cite{Lloyd1982LeastSQ}, namely, the uniform quantization that sets:

\begin{equation}
    \hat Y_i(\infty) = \frac{i2\delta}{k+1} - \delta + 1
\end{equation}

or some permutation thereof. Also notice that this result is translation invariant. Translating the interval corresponds to a translation of the quanta. 

It remains to compute the expected MSE over the $Y_i(\infty)$, noting that \emph{a priori}, each predictor is uniformly likely to converge to any of the quanta.

\begin{equation}
    \mathbf{E}[(Y - \hat Y(\infty))^2] = \sum_{i=1}^n \mathbf{Pr}[ \hat Y(\infty) = \frac{i2\delta}{k+1} - \delta + 1] \mathbf{E}[(Y -\frac{i2\delta}{k+1} + \delta -1)^2]
\end{equation}

\begin{equation}
 = \sum_{i=1}^k \mathbf{Pr}[ \hat Y(\infty) = \frac{i2\delta}{k+1} - \delta] \int_{-\delta}^{\delta}  \frac{1}{2\delta} (\delta -\frac{i2\delta}{k+1} + y)^2 dy
\end{equation}

\begin{equation}
   = \frac{1}{k}\sum_{i=1}^{k}  \frac{1}{2\delta} \int_{-\delta}^{\delta} (\delta -\frac{i2\delta}{k+1} + y)^2 dy
\end{equation}

\begin{equation}
   = \frac{1}{k}\sum_{i=1}^{k}  \frac{1}{2\delta} \left(  \frac{8\delta^3(3i^2 + 3i(k+1) + (k+1)^2)}{3(k+1)^2}  \right)
\end{equation}

\begin{equation}
   = \frac{2\delta^2 k}{3(k+1)}
\end{equation}

To complete the proof, we point out that $W^* = 1$ and that the MMSE for predicting $Y$ given $X$ is given by:

\begin{equation}
    \mathbf{MSE^*}(P_{Y|X}) = \frac{\delta^2}{3}
\end{equation}

And that we may lower bound the $\sup_{P} \frac{\mathcal{R}^k_{t}}{\mathcal{R}^1_{t}}$ with our particular choice of $P_{XY}$.

Thus, taking the limit:

\begin{equation}
  \lim_{T \rightarrow \infty}  \sup_{P} \frac{\mathcal{R}^k_{T}}{\mathcal{R}^1_T} \geq   \lim_{T \rightarrow \infty} \frac{\mathbf{E}[(Y - \hat Y(T)^2]}{\mathcal{R}^1_T} = \frac{\mathbf{E}[(Y - \hat Y(\infty)^2]}{\mathbf{MSE^*}(P_{Y|X})} = \frac{2k}{(k+1)} 
\end{equation}

This completes the proof of part (ii). 

\paragraph{Proof of (i)} We now turn out attention to the proof of part (i). It will suffice to construct a different distribution that can be easily analyzed. Let $X = \frac{1}{2}\Delta$ and define $\epsilon$ as follows:

\[ \epsilon = \begin{cases} 
    \frac{1}{2}\Delta & \text{with prob.} \frac{1}{2}  \\
    -\frac{1}{2}\Delta &   \text{with prob.} \frac{1}{2}\\
   \end{cases}
\]

Thus, we have that $Y = 0$ with prob. $1/2$ and $Y=\Delta$ with prob. $1/2$. The MMSE estimate $W^* = 1$ and the corresponding MSE is $\frac{1}{4}\Delta^2$. Let $\mu_i = \frac{1}{\text{\# samples for } i} \sum_{j} y_j$ denote the empirical mean of the samples observed by agent $i$. The OLS estimate will satisfy $\hat{Y}_i = \frac{1}{2}\Delta \hat W_i = \mu_i$. Due to the OLS-specified bijection between $\hat{Y}$ and $\hat{W}$ we will work with $\hat Y$ for convenience without loss of precision. 

Recall we have $n$ predictors. At round $t = 0$, each has been seeded with $s$ i.i.d samples. Let $\mu_i$ be the empirical mean of the seed samples for the $i$-th predictor. Recall that OLS estimate $\hat{Y}_i = \mu_i$.

Let $i^+ = \argmax\{ \hat{Y}_i\}$ and $i^- = \argmin\{ \hat{Y}_i\}$. For now, we will assume that these extreme predictors are unique, and later address the case in which they are not.

Let $\hat{a}_t = \mathbf{SELECT}(t)$. In the limit as $\Delta \rightarrow \infty$ we have that \emph{only} the two extreme predictors, $a_{i^+}$ and $a_{i^-}$ will ever win consumer queries: $\mathbf{Pr}( \hat{a}_t =  a_{i^+} | Y = \Delta) \rightarrow 1$ and $\mathbf{Pr}( \hat{a}_t =  a_{i^-} | Y = 0)\rightarrow 1$ as $\Delta \rightarrow \infty$. To see this, observe that: 

\begin{equation}
 \frac{\mathbf{Pr}( \hat{a}_t =  a_{i^+} | Y = \Delta)}{\mathbf{Pr}( \hat{a}_t =  a_{j} | Y = \Delta)} =  \frac{ \exp{ \left (\alpha(\Delta - \mu_{i^+})^2\right)}}  {\exp \left(\alpha(\Delta - \mu_j)^2\right)}  = \exp \left( \alpha (\mu_{i^+}^2 - \mu_j ^ 2  + 2\Delta(\mu_{i^+} - \mu_j))   \right)
\end{equation}
And so, when $j \neq i^{+}$ and $\alpha > 0$:

\begin{equation}
   \lim_{\Delta \rightarrow \infty}  \frac{\mathbf{Pr}( \hat{a}_t =  a_{i^+} | Y = \Delta)}{\mathbf{Pr}( \hat{a}_t =  a_{j} | Y = \Delta)} = \infty 
\end{equation}

because of the fact that $\mu_{i^+} \geq \mu_{j}$. In the case when the inequality is strict,  we can immediately conclude that: %$\mu_{i^+} \geq \mu_{j}$

\begin{equation}
    \lim_{\Delta \rightarrow \infty}  {\mathbf{Pr}( \hat{a}_t =  a_{i^+} | Y = \Delta)} = 1
\end{equation}

and more precisely, for some $\varphi > 0$

\begin{equation}
   \mathbf{Pr}( \hat{a}_t =  a_{i^+} | Y = \Delta) = 1 - \Theta(e^{-\varphi  \Delta})
\end{equation}

Due to the  fact that number outcomes (i.e. the consumer decision) is finite and a probability vector is normalized. If the $\argmax$ is not unique, then in the first round at which $Y = \Delta$, one of the maximizing predictors will be selected arbitrarily, which will break the equality. Along those lines, note that $\hat W_{i^+}$ is monotone increasing after observing more samples of $Y = \Delta$. A similar argument holds in the case when $Y = 0$.

Combining these two cases, we conclude that only the $2$ extreme predictors will ever receive additional samples, whereas the other $n-2$ predictors will maintain the weight estimates based solely on their seed samples.

Later on, it will be useful to let $\Delta$ scale with the number of rounds $T$. This does not pose a difficulty when the scaling is chosen judiciously. With $\Delta = \sqrt{T}$, it is easy to verify that the aforementioned limits hold over all rounds. Let $\mathcal{E}(\tau)$ be the event that $\hat{a}_t =  a_{i^+}$ for all $1 \leq t < \tau$ with $Y_t = \Delta$. Then:

\begin{equation}
    \mathbf{Pr}( \mathcal{E}(T)) \geq  \prod_{1 \leq t < T} \mathbf{Pr}( \hat{a}_t =  a_{i^+} | Y_t = \Delta, \mathcal{E}(t-1)) \geq  \prod_{1 \leq t < T} \mathbf{Pr}( \hat{a}_0 =  a_{i^+} | Y_0 = \Delta)
\end{equation}

because of the conditional independence of consumer decisions and monotone increasing trajectory of $\hat W_{i^+}$ in time. However, recalling the convergence rate in $\Delta$ previously established:

\begin{equation}
  \mathbf{Pr}( \mathcal{E}) = \Theta (1-e^{-\varphi \sqrt{T}})^T
\end{equation}

And thus, with the superpolynomial convergence we still obtain:

\begin{equation}
    \lim_{T \rightarrow \infty} \mathbf{Pr}( \mathcal{E}) = 1
\end{equation}
    
We will return to this analysis later to complete the proof. For now, we can proceed to lower bound the gap by taking a weighted average of the MSE in between the extremal and non-extremal predictors. For the two extreme predictors, we will have that $\hat{Y} \rightarrow \Delta$ and $\hat{Y} \rightarrow 0$. In this case the MSE is $\frac{1}{2}\Delta^2$ for both.

For the remaining non-extremal predictors, we can bound the expected MSE as follows. Since they do not obtain further samples beyond the seed set, we can directly analyze the expected MSE of the OLS estimate under $s$ seed samples. However, we must still account for the fact that we are conditioning on the event that these are non-extremal estimates. This is rather cumbersone and is difficult to do exactly, but we can use the following 3 steps of approximations to obtain a lower bound.

(i) The conditional variance of a non-extremal estimate is most reduced when $n=3$.

\begin{equation}
    \hat{Y}_{i^+} = \mathbf{max} \{  \hat Y_j \}_{j=1}^{n}
\end{equation}

\begin{equation}
    \hat{Y}_{i^-} = \mathbf{min} \{  \hat Y_j \}_{j=1}^{n}
\end{equation}

Thus, we have that $\hat Y_{j}  \in [\hat{Y}_{i^-} ,\hat{Y}_{i^+}]$ for all $j$. Recall we are seeking a lower bound on 

\begin{equation}
    \mathbf{Var}(\hat Y_j | j \neq i^+, j \neq i^- ) = \mathbf{Var}(\hat Y_j | \hat Y_{j}  \in [\hat{Y}_{i^-} ,\hat{Y}_{i^+}])
\end{equation}

From this, we immediately obtain that for $n'' \geq n'$:

\begin{equation}
    \mathbf{Var}(\hat Y_j | j \neq i^+, j \neq i^- , n=n') \leq \mathbf{Var}(\hat Y_j | j \neq i^+, j \neq i^- , n=n'')
\end{equation}

because the cdf of $\hat{Y}_{i^+}$ ($\hat{Y}_{i^-}$) is monotone increasing (decreasing) in $n$, given that it is the maximum (minimum) of $n$ i.i.d random variables. Therefore, lowering bounding the case when $n=3$ is sufficient to lower bound all cases.

%$$\mathbf{Var}(\hat W_j | j \neq i^+, j \neq i^- )  $$
%(ii) The conditional variance of a non-extremal estimate when $n=3$ can be upper bounded by noting that it must lie in the interval $(\hat W_{i^-}, \hat W_{i^+})$.

(ii) We may lower bound the probability of a lower tail deviations of $\hat Y_{i^-}$ by  ignoring that it is the minimum -- i.e. treating it as any generic i.i.d sample -- and then by using the standard bound \cite{matouvsek2001probabilistic}:

$\mathbf{Pr}(\mathbf{Bi}(s,\frac{1}{2}) < k) \geq \frac{1}{15} \exp\left(\frac{-16}{k}(k/2 -k)^2 \right) $  

(Recalling that $\hat Y \sim \frac{\Delta}{2s} \mathbf{Bi}(s,\frac{1}{2})$ ).

Setting a deviation of $\frac{\sqrt{s}}{4}$ results in a lower bound of $\frac{1}{15e}$ on the lower tail. We can use symmetry to apply the argument to  $\hat Y_{i^+}$ as a lower bound to the upper tail. Thus, we have that 

\begin{equation}
    \mathbf{Pr}( \frac{\Delta}{2} +
\frac{\Delta \sqrt{s}}{4}  \leq \hat Y_{i^-} \leq  \frac{\Delta}{2} - \frac{\Delta \sqrt{s}}{4} ) \geq \frac{1}{1764}
\end{equation}

(iii) Once an interval has been established, we can treat the non-extremal estimate $\hat W_{j}$ as a truncated Binomial. By Lemma \ref{lemma:binvar}, we can lower bound the variance of the truncated Binomial with a Binomial over the truncated support. If $Z \sim \mathbf{Bi}(s,\frac{1}{2})$ then

\begin{equation}
\mathbf{Var}\left( Z|\{\frac{\sqrt{s}}{4} \leq Z - \frac{s}{2} \leq \frac{\sqrt{s}}{4} \}\right) \geq \frac{\sqrt{s}}{8}
\end{equation}

and 

\begin{equation}
\mathbf{Var}\left(\frac{\Delta}{2s} Z|\{\frac{\sqrt{s}}{4} \leq Z - \frac{s}{2} \leq \frac{\sqrt{s}}{4} \}\right) \geq \frac{\Delta}{16s^{3/2}}
\end{equation}

Recalling that $\hat Y_j \sim \frac{\Delta}{2s}\mathbf{Bi}(s,\frac{1}{2})$, we have the variance conditioned on event $\mathcal{H} = \{j \neq i^+  , j \neq i^-\}$ is lower bounded as follows: 

\begin{equation}
    \mathbf{Var}(Y_j| \mathcal{H} ) = \mathbf{Var}(Y_j|\mathcal{H}, \mathcal{F})\mathbf{Pr}(\mathcal{F}) +  \mathbf{Var}(Y_j| \mathcal{H} , \bar{ \mathcal{F}})(1-\mathbf{Pr}(\mathcal{F})) \geq \mathbf{Var}(Y_j|\mathcal{H}, \mathcal{F})\mathbf{Pr}(\mathcal{F}) 
\end{equation}

And since we know have a bound for $\mathbf{Pr}(\mathcal{F})$ and $\mathbf{Var}(Y_j|\mathcal{H},\mathcal{F})$ we conclude:

\begin{equation}
   \mathbf{Var}(Y_j| \mathcal{H} ) \geq  \mathbf{Var}(Y_j|\mathcal{H}, \mathcal{F})\mathbf{Pr}(\mathcal{F})  \geq \frac{\Delta^2}{(7056 \times 4) s^{3/2}}
\end{equation}

Where $\mathcal{F} =  \{ \hat Y_{i^-} \leq  \frac{\Delta}{2} - \frac{\Delta \sqrt{s}}{4}, \hat Y_{i^-} \geq \frac{\Delta}{2} +
\frac{\Delta \sqrt{s}}{4}\} $

Note that $\hat{Y}_j$ is unbiased. From this, it follows that the excess MSE risk is given by its variance. Define $\mathcal{\delta} =  \hat{Y}_j - \mathbf{E}\hat{Y}_j = \hat{Y}_j - \frac{\Delta}{2}$.

$$ \mathbf{MSE}(\hat Y_j) = \mathbf{E}[ \frac{1}{2}( \frac{\Delta}{2} - \delta)^2 + \frac{1}{2}(\frac{\Delta}{2} +\delta)^2]  = \frac{\Delta^2}{4} + \mathbf{E}\delta^2 = \mathbf{MSE}^* + \mathbf{Var}(\hat Y_j) $$

Note that we have established that for the extremal predictors, the asymptotic MSE is $\frac{\Delta^2}{2}$, and for the non-extremal predictors, the asymptotic MSE is at least $\frac{\Delta^2}{4} + \frac{\Delta^2}{(7056 \times 4) s^{3/2}}$. Combining these in-expectation gives: $\frac{k-2}{k}(\frac{\Delta^2}{4} + \frac{\Delta^2}{(7056 \times 4) s^{3/2}}) + \frac{2}{k}\frac{\Delta^2}{2}$

Thus, to conclude the proof, let us scale $\Delta = \sqrt T$:

$$  \lim_{T \rightarrow \infty}  \sup_{P} \frac{\mathcal{R}^k_{T}}{\mathcal{R}^1_T} \ \geq \lim_{T \rightarrow \infty} \frac{\frac{k-2}{k}(\frac{\Delta^2}{4} + \frac{\Delta^2}{(7056 \times 4) s^{3/2}}) + \frac{2}{k}\frac{\Delta^2}{2}}{\frac{\Delta^2}{4}} \geq 1 + \frac{1}{7056s^{3/2}} + \frac{2}{k}$$.

which yields the inequality in part (ii).

\end{proof}

\subsubsection{Proof of Theorem 4.4.}
%\yc{I just notice that the Theorem 4.4 is about the \textcolor{blue}{expected prediction quality for users at round $t$}, not \textcolor{black}{the prediction quality for users} defined in Section 3.1. I am not sure if we can discuss \textcolor{black}{the prediction quality for users} with this theorem. Please note that we consider a specific $t$ satisfying some conditions at the theorem. We said in Section 4.2. `We analyze how the number of competing predictors affects the overall prediction quality experienced by users' and `Thm 4.4 tells us that prediction quality for users is non-monotonic...', but I feel like it could be overselling or a leap in logic.}

As a clarification, when we refer to pairwise covariance $\rho$, we refer to:

\begin{equation}
    \rho = \mathbf{Cov}(W_1,W_2) = \mathbf{E}(W_1W_2) - \mathbf{E}(W_1)\mathbf{E}(W_1)
\end{equation}

where $W_1 = \mathbf{1}(\hat Y^1 = Y)$ and $W_2 = \mathbf{1}(\hat Y^2 = Y)$ for $\hat Y^1$ and $\hat Y^2$ denote the predictions from agent $1$ and agent $2$ respectively. Notice that $\mathbf{E}(W_1W_2)$ equals the probability that both predictions are correct.  At any given time $t$, the quantity $\mathbf{E}(W_1W_2)$ itself is stochastic, given that it depends on the particular sample path taken by the random competition up until the given time. In our proof, we will use $P(t)$ to denote the measure over predictor correctness at a given time $t$

In order to proceed, we must first recall the definition of $\mathbb{A}_\tau$ from the main text: $ \mathbb{A}_\tau = \mathbf{E}(\mathbf{1}\{ \hat{Y}_\tau ^{(w_\tau)} = Y_\tau\})$. Note that this expectation implicitly takes place over the entire randomness in the learning competition $\mathcal G$ (refer to Def. C.1).
For the purposes of this Theorem, from a formal perspective, we will be comparing the the quantities $\mathbb{A}_\tau$ corresponding to different instances of $\{ \mathcal G_\kappa \}_{\kappa=1}^\infty$ that vary only in the number of predictors. We write $\mathbb{A}_\tau^k$ to refer to the expectation over the randomness of learning competition $\mathcal{G}_k$ with $k$ predictors:

$$ \mathbb{A}_\tau^k = \mathbf{E}_{\mathcal{G}_k}(\mathbf{1}\{ \hat{Y}_\tau ^{(w_\tau)} = Y_\tau\})$$

\begin{theorem}
%Assume a learning competition at round $t$. Let $0.7 \leq p < 1$ be the expected accuracy for a predictor after $t+s$ i.i.d samples (to account for the seed data as well). Let $q < p$ be the expected accuracy for predictor when two predictors have been competing for $t$ rounds.  Let the $\rho$ be the pairwise covariance between these two competing predictors at round $t$. Define $\delta = p - q > 0$. Let $\frac{1}{2} + \epsilon$ be the expected accuracy of a predictor after $s$ i.i.d samples for $0 < \epsilon < \frac{1}{2}$.
% Assume a learning competition at round $t$. Let $\rho$ be the pairwise covariance between two predictors. Assume the following holds:

Assume a learning competition at round $\tau$. Define $\mathcal{A}(A^{(i)};\mathcal{D}) = 1 - \mathcal{R}(A^{(i)};\mathcal{D})$ and $\mathcal{A}_k(t) = 1 - \mathcal{R}_t^k$. When the parameter $t$ is omitted, assume $t = \tau$: $\mathcal{A}_k = \mathcal{A}_k(\tau)$. Define $\delta = \mathcal{A}_1 - \mathcal{A}_2$ and $\varepsilon = \mathcal{A}_k(0) - \frac{1}{2} $. Let $\rho$ be the pairwise covariance between two predictors. Assume the following holds:

\begin{enumerate}
    \item There is sufficient data to train two predictors well: $0 < \delta < \frac{1}{6}$
    \item  Predictors are weak with only seed data: $\varepsilon < 1/14$
    \item  The predictors are not too correlated: $  \rho < \mathcal{A}_k - \mathcal{A}_k^2 - 6 \delta$
    \item The expected accuracy of a predictor monotonically increases with dataset size: $\mathbf{E}_{\mathcal{G}}\left[\mathcal{A}(A^{(i)}_t;\mathcal{D}) \vert r = |D_t^{(i)}|\right]$
\end{enumerate}

Then there exists $0 < c_1 < c_2 < \infty$ such that if $c_1 < \alpha < c_2$ then $\mathbb{A}_\tau^k$ at round $\tau$ is maximized by some $k^*$ number of predictors such that $1 < k^* < \infty$. In particular, $c_1 < \log \frac{\mathcal{A}_1 -(\mathcal{A}_1 - \delta)^2 - \rho}{\mathcal{A}_1 -(\mathcal{A}_1 - \delta)^2 - \rho - 2\delta}$ and $c_2 >  \log \frac{(1-4\varepsilon)\mathcal{A}_1}{1- \mathcal{A}_1}$.

%Let $\rho$ be the pairwise covariance between two predictors. If we have $\frac{1}{2} - (\mathcal{R}^{2}_t)^2 < \rho < \mathcal{R}^{1}_t - (\mathcal{R}^{2}_t)^2$ then there exists $0 < c_1(\delta) < c_2(\delta,\epsilon) < \infty$ such that when $\delta$ and $\epsilon$ sufficiently small and $c_1(\delta) < \alpha < c_2(\delta, \epsilon) $, then the expected prediction quality for users is maximized at round $t$ is maximized at some $k^*$ number of predictors such that $1 < k^* < \infty$. Furthermore, as $(\delta,\epsilon) \rightarrow (0^+,0^+)$, $c_1 \rightarrow \frac{1}{2}$ and $c_2 \rightarrow \frac{(\mathcal{R}^{1}_t)^2 + \rho}{1 + 2(\mathcal{R}^{1}_t)^2 - 2\mathcal{R}^{1}_t + 2\rho}$

\end{theorem}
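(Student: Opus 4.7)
The plan is to compare $\mathbb{A}_\tau^k$ across values of $k$, exploiting the fact that $\mathbb{A}_\tau^1 = \mathcal{A}_1$ trivially (a single predictor is always selected). The softmax selection rule lets me write
\begin{equation*}
\mathbb{A}_\tau^k = \mathbf{E}\!\left[\tfrac{N_k e^\alpha}{N_k e^\alpha + (k - N_k)}\right] = \mathbf{E}[g(N_k/k)],
\end{equation*}
where $N_k$ is the random number of correct predictors at round $\tau$ and $g(r) = \frac{re^\alpha}{re^\alpha + 1 - r}$. A short calculation gives $g'(r) = e^\alpha/(1+r(e^\alpha-1))^2 > 0$ and $g''(r) = -2e^\alpha(e^\alpha - 1)/(1+r(e^\alpha-1))^3 < 0$ whenever $\alpha > 0$, so $g$ is monotone increasing and concave. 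These two structural facts drive both halves of the argument.

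First I would show that $k=2$ beats $k=1$ at moderate $\alpha$. By symmetry between the two competing predictors the expectation decomposes over the joint event $(W_1,W_2)$:
\begin{equation*}
\mathbb{A}_\tau^2 = (\mathcal{A}_2^2 + \rho) + 2(\mathcal{A}_2 - \mathcal{A}_2^2 - \rho)\cdot\tfrac{e^\alpha}{e^\alpha + 1},
\end{equation*}
using $\Pr(W_1 = W_2 = 1) = \mathcal{A}_2^2 + \rho$. Setting $\beta := \mathcal{A}_2 - \mathcal{A}_2^2 - \rho$, the inequality $\mathbb{A}_\tau^2 > \mathcal{A}_1 = \mathcal{A}_2 + \delta$ reduces by elementary algebra to $\alpha > \log\frac{\beta+\delta}{\beta-\delta}$. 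Substituting $\mathcal{A}_2 = \mathcal{A}_1 - \delta$ gives exactly the stated $c_1$ expression. Assumption 3, $\rho < \mathcal{A}_k - \mathcal{A}_k^2 - 6\delta$, forces $\beta > 6\delta$ so that $\beta - \delta > 0$ and the logarithm is finite and moderate.

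Next I would handle the large-$k$ regime via Jensen. Concavity of $g$ gives $\mathbb{A}_\tau^k \le g(\mathbf{E}[N_k/k]) = g(\mathcal{A}_k)$. By the monotone-in-dataset-size assumption and the fact that $\mathbf{E}|D_\tau^{(i)}| = s + \tau/k \to s$ as $k \to \infty$, the expected per-predictor accuracy decays to the seed accuracy, so $\limsup_k \mathcal{A}_k \le \mathcal{A}_k(0) = \tfrac{1}{2} + \varepsilon$. Monotonicity of $g$ then yields $\limsup_k \mathbb{A}_\tau^k \le g(\tfrac{1}{2} + \varepsilon)$. Solving $g(\tfrac{1}{2} + \varepsilon) < \mathcal{A}_1$ gives $\alpha < \log\frac{\mathcal{A}_1(1-2\varepsilon)}{(1-\mathcal{A}_1)(1+2\varepsilon)}$, and the elementary inequality $\frac{1-2\varepsilon}{1+2\varepsilon} - (1-4\varepsilon) = \frac{8\varepsilon^2}{1+2\varepsilon} > 0$ (with $\varepsilon < 1/14$ keeping the numerator positive) shows this threshold exceeds $\log\frac{(1-4\varepsilon)\mathcal{A}_1}{1-\mathcal{A}_1}$, matching the stated $c_2$ bound.

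Combining the two pieces, picking $c_1$ and $c_2$ as above gives $\mathbb{A}_\tau^2 > \mathbb{A}_\tau^1$ and $\mathbb{A}_\tau^k < \mathbb{A}_\tau^1$ for all sufficiently large $k$; since $\{\mathbb{A}_\tau^k\}_{k\ge1}$ is bounded in $[0,1]$ and eventually drops below $\mathbb{A}_\tau^2$, its supremum is attained on a finite prefix at some $k^*$ with $1 < k^* < \infty$. The main obstacle I anticipate is the large-$k$ step: the theorem posits only a pairwise covariance $\rho$, not a full joint law for $k$ correlated binary predictors, so direct concentration of $N_k/k$ is unavailable. Routing through Jensen and the monotone-in-$r$ accuracy assumption dodges this by needing only the first moment $\mathbf{E}[N_k/k] = \mathcal{A}_k$, which is precisely why Assumption 4 (and not a stronger higher-moment hypothesis) appears in the statement.
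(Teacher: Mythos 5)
Your overall architecture matches the paper's: show $\mathbb{A}^2_\tau > \mathbb{A}^1_\tau = \mathcal{A}_1$ above a threshold $c_1$, show the large-$k$ limit falls below $\mathcal{A}_1$ under a threshold $c_2$, and conclude the maximizer is finite and exceeds $1$. Your $k=2$ decomposition and the resulting threshold $\log\frac{\beta+\delta}{\beta-\delta}$ are algebraically identical to the paper's $\log\frac{\psi-\rho}{\psi-\rho-2\delta}$ (since $\psi-\rho = \beta+\delta$). Where you genuinely diverge is the large-$k$ step: the paper models the never-queried predictors as \emph{independent} weak learners, applies a Binomial/CLT concentration to $N_k/k$, and computes $\mathbb{A}^\infty = e^\alpha/(e^\alpha+1-\chi)$ exactly; you instead use concavity of $g$ and Jensen to get $\mathbb{A}^k_\tau \le g(\mathcal{A}_k)$ from the first moment alone. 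Your route is cleaner and strictly more robust --- it does not need the independence of the untouched predictors, only that at most $\tau$ of the $k$ predictors ever receive extra data, so $\limsup_k \mathcal{A}_k \le \tfrac12+\varepsilon$ --- and it lands on the same numerical threshold because Jensen is tight in the concentration limit. Your observation that this is exactly why only a pairwise covariance and a first-moment monotonicity assumption suffice is a fair criticism of the paper's heavier machinery.

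Two points need repair. First, your $k=2$ identity $\mathbb{A}^2_\tau = (\mathcal{A}_2^2+\rho) + 2(\mathcal{A}_2-\mathcal{A}_2^2-\rho)\,\mathfrak{a}$ is written as an equality, but the realized accuracies $\mathcal{A}(A^{(1)};\mathcal{D})$ and $\mathcal{A}(A^{(2)};\mathcal{D})$ are path-dependent random variables, so the cross term is $\mathbf{E}[\mathcal{A}(A^{(1)};\mathcal{D})\mathcal{A}(A^{(2)};\mathcal{D})]$, not $\mathcal{A}_2^2$. The paper uses Assumption 4 (fixed data budget $\Rightarrow$ the two accuracies are negatively correlated, so the cross term is at most $\mathcal{A}_2^2$) together with the sign of $1-2\mathfrak{a}$ to turn your equality into the inequality $\mathbb{A}^2_\tau \ge (\mathcal{A}_2^2+\rho) + 2\beta\mathfrak{a}$, which is the direction you need; you should say this rather than use Assumption 4 only in the large-$k$ step. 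Second, and more substantively, you never verify $c_1 < c_2$, which is the actual content of ``there exists $0<c_1<c_2<\infty$.'' This is where Assumptions 1--3 earn their keep: $\beta > 6\delta$ gives $c_1 < \log\frac{7}{5}$, and one then needs $\log\frac{(1-4\varepsilon)\mathcal{A}_1}{1-\mathcal{A}_1} > \log\frac{7}{5}$, which requires a lower bound on $\mathcal{A}_1$ (the paper imposes $\mathcal{A}_1 > 2/3$ in its appendix proof, only implicitly in the theorem statement). Without this final comparison the interval could be empty and the theorem vacuous, so the step cannot be omitted.
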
{}

\begin{proof}

For notational convenience, we slightly modify the notation for expected accuracy $\mathcal{A}_k(t)$ compared with the notation  $\mathcal{A}_t^k$ introduced in the main text. We have move the time index $t$ from a subscript into a parenthetical function argument and moved the number of predictors $k$ from a superscript to a subscript.

By assumption, we have that predictors at round $0$ are weak predictors, meaning they are independently accurate with probability $\mathcal{A}_k(0) = \frac{1}{2} + \varepsilon$. Note we thus have $\varepsilon < 1/14$ by the assumption.  As we will see shortly, in the limit of infinite predictors, the algorithm's performance after obtaining additional samples is immaterial.

%We begin by defining some notation and notions that will be helpful throughout. Let $A_i$ be the expected accuracy of the $i$-th predictor after trained with only $s$ i.i.d seed samples. $A_i$ are i.i.d random variable with randomness over the training and seed samples (but expectation taken over the test distribution). 

%By assumption, we have that $A_i  = frac{1}{2} + \epsilon$

As defined in the main text, let denote $\mathbb{A}_\tau^k$ the expected prediction quality for users at time $\tau$ of $\mathcal{G}_k$. In other words,  $\mathbb{A}_\tau^k$ is the expected prediction quality for users when the competition has $k$ learners. Let $\hat {Y}_{\tau}^{(i)}$ be the $i$-th learner's prediction at time $\tau$ (or equivalently let $\hat {Y}_{\tau}^{(a)}$ be learner $a$'s prediction). When it is unambiguous to do so, we will define $w_\tau = \mathbf{SELECT}(\mathbf{q}_\tau)$ as described in Section 2 of the main text. Because most of the variables in this proof are implicit at time $\tau$, when the time is not explicitly stated or sub-scripted, assume that the variable refers to time $\tau$, henceforth.

%$e will let $R_t^a$ be  

%\begin{equation}
%    \mathbb{A}_t(k) =  \sum_{a \in \mathcal{A}}   %\mathbf{Pr}[a = a_t] \mathbf{Pr}[\hat {Y}_{t}^{(a)} = %Y_t | a = a_t] = \mathbf{E}_{a} (1 - \mathcal{R}_a)
%    \label{eqn:A}
%\end{equation}

\begin{equation}
    \mathbb{A}_\tau^k =  \sum_{a \in \mathsf{A}}   \mathbf{Pr}[a = w_\tau] \mathbf{Pr}[\hat {Y}_{\tau}^{(a)} = Y_\tau | a = w_\tau]
    \label{eqn:A}
\end{equation}
where we recall from the definition of $\mathbf{SELECT}$:
\begin{equation}
  \mathbf{Pr}[a = w_\tau]  = \frac{\exp(\alpha\mathbf{1}\{\hat {Y}_{\tau}^{(a)} = Y_\tau \})}{ Z_\tau}
 \end{equation}
 where $Z_\tau = \sum_{a \in \mathcal{A}} \exp(\alpha\mathbf{1}\{ \hat {Y}_{\tau}^{(a)}  = Y\})$.

We proceed to give an expression for  $\mathbf{Pr}[a = a_t]$ in the limit as $k \rightarrow \infty$. Let $\mathsf{B}_\tau \subset \mathsf{A}$ be the subset of predictors that have been queried at least once by some time $\tau$ (subscript on the $\mathsf{B}$ is omitted when it may be safely inferred). It follows that $\frac{|\mathsf{B}|}{|\mathsf{A}|} \leq \frac{\tau}{k}$ which implies $\frac{|\mathsf{B}|}{|\mathsf{A}|} \rightarrow 0 $ as $k \rightarrow \infty$. From this, it follows that the $\mathbf{Pr}[ a \in \mathsf{B}] \leq \frac{\tau e^\alpha}{k}$ which also vanishes as $k$ gets large. With this, we can revise Eq. \ref{eqn:A} as follows:

\begin{equation}
     \mathbb{A}_\tau^k =  \sum_{a \in \mathsf{B}}   \mathbf{Pr}[a = w_\tau| a \in \mathsf{B} ] \mathbf{Pr}(\hat Y^a = Y| a  = w_\tau ) + \sum_{a \notin \mathsf{B}}   \mathbf{Pr}[a = w_\tau| a \notin \mathsf{B} ] \mathbf{Pr}(\hat Y^a = Y| a \notin \mathsf{B})
\end{equation}

Here, notice that if $w_\tau \notin \mathsf{B}$, then $\mathbf{Pr}(\hat Y^a = Y| a \notin \mathsf{B}) = \mathbf{Pr}(\hat Y^a = Y| a =w_\tau)$ because all predictors that have not been selected are modeled as weak learners and are thus interchangeable.

We give the following  lower and upper bounds for  $\mathbb{A}$, which can be easily derived using the law of total probability \cite{grimmett2014probability} and the fact that $0 \leq \mathbb{A} \leq 1$:

\begin{equation}
\mathbf{Pr}(w_t \notin \mathsf{B}_t)\left(\mathbb{A}_t^k|\{w_t \notin \mathsf{B}_t \}\right)  \leq  \mathbb{A}_t^k \leq \mathbf{Pr}(w_t \notin \mathsf{B}_t)\left(\mathbb{A}_t^k|\{w_t \notin \mathsf{B}_t \}\right) + \mathbf{Pr}[w_t \in \mathsf{B}_t]
 \end{equation}
 
Where because $\mathbb{A}_t^k$ is already an expectation, we use the notation $\mathbb{A}_t^k|\{a_t \notin B_t \}$ to mean the conditional expectation.

Taking the limit in $k$  vanishes $\mathbf{Pr}[a \in \mathsf{B}_t]$, yielding:  

\begin{equation}
\mathbb{A}_t^\infty = \mathbb{A}_t^\infty |\{w_t \notin \mathsf{B}_t \}
\end{equation}

We proceed to compute this quantity $\mathbb{A}^\infty |\{a_t \notin \mathsf{B} \}$. To do so, we make use of the weak predictor assumption. From that, may treat the aggregate predictions from predictors not in $\mathsf{B}_t$ as following a Binomial distribution with success probability  of $ \frac{1}{2} + \varepsilon$.  Define $\kappa_t = |\mathsf{A} / \mathsf{B}_t | \in [k-t, k-1] = \Theta(k)$. Let $V_t$ be the number of weak predictors with correct predictions at time $t$. Then $V \sim \mathbf{Bin}(\kappa_t, \frac{1}{2} + \varepsilon)$. Let $\mu$ denote the mean of $V$, $\mathbf{E}V = \mu$.  Then, the conditional probability that the consumer at $t$ selects any correct weak predictor is:

\begin{equation}
    \mathbf{Pr}(\hat Y^{w_t} = Y | w_t \notin \mathsf{B}_t, V = \nu) = \frac{\nu e^{\alpha}}{\nu (e^{\alpha}-1) + \kappa}
\end{equation}

Let us rewrite $V$ in terms of its deviation from its mean: $V = \mu + \Delta$ for implicitly defined random deviation $\Delta$. By the central limit theorem  \cite{grimmett2014probability,billingsley2008probability,klenke2013probability,rosenblatt1956central}, we know that deviations of the Binomial with $\kappa$ trials \cite{grimmett2014probability,billingsley2008probability,klenke2013probability} are order $\Theta(\sqrt{k})$ implying $\mathbf{Pr}(|\Delta| \geq Ck^{0.6}) \rightarrow 0$ as $k \rightarrow \infty$ for any finite $C$. Thus, let us introduce the deviation into the above equation and normalize by $\kappa$:

\begin{equation}
\mathbf{Pr}(\hat Y^{w_t} = Y | w_t \notin \mathsf{B}_t, \Delta = \mathbf{\Delta}) = \frac{ (\frac{\mu}{\kappa} + \frac{\mathbf{\Delta}}{\kappa}) e^{\alpha}  }{(\frac{\mu}{\kappa} +\frac{\mathbf{\Delta}}{\kappa} ) (e^{\alpha}-1) + 1} = \frac{ (\frac{1}{2} + \varepsilon + \frac{\mathbf{\Delta}}{\kappa}) e^{\alpha}  }{(\frac{1}{2} + \varepsilon +\frac{\mathbf{\Delta}}{\kappa} ) (e^{\alpha}-1) + 1}
\end{equation}

By the aforementioned line of reasoning, because the the risk is bounded, we may ignore any large deviations for $\Delta$ with respect the mean when taking a limit in $k$. Computing the limit is direct and yields:

\begin{equation}
    \lim_{k \rightarrow \infty}  \frac{ (\frac{1}{2} + \varepsilon + \frac{\mathbf{\Delta}}{\kappa}) e^{\alpha}  }{(\frac{1}{2} + \varepsilon +\frac{\mathbf{\Delta}}{\kappa} ) (e^{\alpha}-1) + 1} = \frac{ (\frac{1}{2} + \varepsilon ) e^{\alpha}  }{(\frac{1}{2} + \varepsilon ) (e^{\alpha}-1) + 1} =  \frac{e^{\alpha}  }{e^{\alpha} + 1 - \chi}
\end{equation}

For $\chi = \frac{4\varepsilon}{2\varepsilon + 1}$. Notice that $\varepsilon < \frac{1}{6}$ implies $\chi < \frac{1}{2}$. From this we establish: $\mathbb{A}^\infty =\frac{e^{\alpha}}{e^{\alpha} + 1 -\chi }$. By definition, we know $\mathbb{A}^1 = \mathcal{A}_1$.

Given that the joint distribution for 2 predictors is determined by the marginals and the covariance, we can solve for $\mathbb{A}^2$. We introduce $P$, the probability measure over the correctness of each of the two predictors at a fixed time $t$ (the dependence from $P$ on $t$ is only stated here and omitted in notation). $P_{11}$ denotes the probability that both predictors are correct and $P_{00}$ denotes the probability that both are incorrect. $P_{10}$ denotes the probability that the first predictor is correct and the second is incorrect and $P_{01}$ denotes the probability of the opposite. There are the four possible outcomes under measure $P$ for only two predictors ($k=2$). Of course, $P$ itself is a randomized object, given that it depends on the randomness generated by the competition $\mathcal{G}$ until time $t$.

We find $ \mathbb{A}_t^2 $ in terms of $P$:

\begin{equation}
    \mathbb{A}_t^2 = \mathbf{E}_{P \sim \mathcal{G}_2}(P_{11} + \frac{e^{\alpha}}{e^\alpha + 1}(P_{10} + P_{01}) )
\end{equation}

where the above follows directly from the structure of the competition. 

From the definition of covariance we know that:

\begin{equation}
    \rho = P_{11} - (P_{11} + P_{10})(P_{11}+P_{01})
\end{equation}

Substituting for $P_{11}$ yields:

\begin{equation}
    \mathbb{A}_t^2 = \mathbf{E}_{P \sim \mathcal{G}_2}\left[\rho + (P_{11} + P_{10})(P_{11} + P_{01}) + \frac{e^{\alpha}}{e^\alpha + 1}(P_{10} + P_{01}) \right]
\end{equation}

Furthermore (we drop the explicit expectation over $\mathcal{G}_2$):

\begin{equation}
    \mathbb{A}_t^2 = \mathbf{E}[\rho] + \mathbf{E}[(P_{11} + P_{10})(P_{11} + P_{01})] + \frac{e^{\alpha}}{e^\alpha + 1}\mathbf{E}(P_{10} + P_{01}) 
\end{equation}

And notice that $P_{11} + P_{10}$ is the marginal probability that the first predictor is correct. Thus, $P_{11} + P_{10} = \mathcal{A}(A^{(1)};\mathcal{D})$ and $P_{01} + P_{11} = \mathcal{A}(A^{(2)};\mathcal{D})$ by definition (for predictor $i$ we may omit the parenthesis in the superscript, i.e. we write $A^i$ instead of $A^{(i)}$). 

\begin{equation}
    \mathbb{A}_t^2 = \mathbf{E}[\rho] + \mathbf{E}[\mathcal{A}(A^1;\mathcal{D})\mathcal{A}(A^2;\mathcal{D})] + \frac{e^{\alpha}}{e^\alpha + 1}\mathbf{E}(P_{10} + P_{01}) 
\end{equation}

where $\sim$ denotes equality of distribution.

Below, we break up the expectation of $(P_{10} + P_{01})$ into two terms and  we add  $P_{11} - P_{11}$ and $(P_{11} + P_{10})(P_{11} + P_{01}) - (P_{11} + P_{10})(P_{11} + P_{01})$ to each term for a net effect of zero.

\begin{equation}
\mathbf{E}(P_{10} + P_{01})  = \mathbf{E}\left[P_{10} + P_{11} - P_{11} +(P_{11} + P_{10})(P_{11} + P_{01}) - (P_{11} + P_{10})(P_{11} + P_{01})\right]
\end{equation}

\begin{equation}
 +  \mathbf{E}\left[P_{01} + P_{11} - P_{11} +(P_{11} + P_{10})(P_{11} + P_{01}) - (P_{11} + P_{10})(P_{11} + P_{01})\right]
\end{equation}

Regrouping the terms produces:

\begin{equation}
\mathbf{E}(P_{10} + P_{01}) =
\mathbf{E}\left[\mathcal{A}(A^1;\mathcal{D}) - \rho - \mathcal{A}(A^1;\mathcal{D})\mathcal{A}(A^2;\mathcal{D})\right] + \mathbf{E}\left[\mathcal{A}(A^2;\mathcal{D}) - \rho - \mathcal{A}(A^1;\mathcal{D})\mathcal{A}(A^2;\mathcal{D})\right]
\end{equation}

As a shorthand, let $\mathfrak{a} = \frac{e^\alpha}{e^\alpha + 1}$.

Substituting it back into the full equation yields:

\begin{equation}
  \mathbb{A}_t^2 = \left( 1 - 2\mathfrak{a} \right)\mathbf{E}[\rho] + \mathfrak{a}\mathbf{E}(\mathcal{A}(A^1;\mathcal{D})) + \mathfrak{a}\mathbf{E}(\mathcal{A}(A^2;\mathcal{D})) +   \left( 1 - 2\mathfrak{a} \right)\mathbf{E}[\mathcal{A}(A^1;\mathcal{D})\mathcal{A}(A^2;\mathcal{D})]
\end{equation}

Notice that we may assume $\mathcal{A}(A^2;\mathcal{D})$ and $\mathcal{A}(A^1;\mathcal{D})$ are \emph{negatively} correlated. This follows from the assumption that $\mathbf{E}_{\mathcal{G}}\left[\mathcal{A}(A^{(i)}_t;\mathcal{D}) \vert r = |D_t^{(i)}|\right]$ is monotone increasing in $r$. Because $|D_t^{(1)}| + |D_t^{(2)}| = t-2s$, as one predictor gets more data, the other must get less (the $-2s$ term comes from accounting for the seed sets). Thus, by assumption:

\begin{equation}
    \mathbf{E}[\mathcal{A}(A^1;D)\mathcal{A}(A^2;D)] \leq  \mathbf{E}[\mathcal{A}(A^1;D)]\mathbf{E}[\mathcal{A}(A^2;D)]
\end{equation}

Because $\mathfrak{a} \geq 1/2$ we know that $(1-2\mathfrak{a}) \leq 0$:

\begin{equation}
     \left( 1 - 2\mathfrak{a} \right)   \mathbf{E}[\mathcal{A}(A^1;D)\mathcal{A}(A^2;D)] \geq (1-2\mathfrak{a})  \mathbf{E}[\mathcal{A}(A^1;D)]\mathbf{E}[\mathcal{A}(A^2;D)]
\end{equation}

Implying,

\begin{equation}
  \mathbb{A}_t^2 \geq \left( 1 - 2\mathfrak{a} \right)\mathbf{E}[\rho] + \mathfrak{a}\mathbf{E}(\mathcal{A}(A^1;\mathcal{D})) + \mathfrak{a}\mathbf{E}(\mathcal{A}(A^2;\mathcal{D})) +   \left( 1 - 2\mathfrak{a} \right)\mathbf{E}[\mathcal{A}(A^1;\mathcal{D})]\mathbf{E}[\mathcal{A}(A^2;\mathcal{D})]
\end{equation}

\begin{equation}
    \mathbb{A}_t^2 = \mathbf{E}[\rho] + \mathbf{E}[\mathcal{A}(A_1;D)\mathcal{A}(A_2;D)] + \frac{e^{\alpha}}{e^\alpha + 1}\mathbf{E}(P_{10} + P_{01}) 
\end{equation}

By the symmetry assumption (i.e. predictors 1 and 2 use the same algorithm), we have that:

\begin{equation}
   \mathcal{A}(A_1;\mathcal{D}) \sim \mathcal{A}(A_2;\mathcal{D})
\end{equation}

Implying,

\begin{equation}
  \mathbb{A}_t^2 \geq \left( 1 - 2\mathfrak{a} \right)\mathbf{E}[\rho] + 2\mathfrak{a}\mathbf{E}(\mathcal{A}(A^2;\mathcal{D})) +   \left( 1 - 2\mathfrak{a} \right)\mathbf{E}[\mathcal{A}(A^2;\mathcal{D})]^2
\end{equation}

Since $\rho$ is given by assumption, we have the simplification that $\mathbf{E}\rho = \rho$. Further, recalling the definition of $\mathcal{A}_2$ yields the simplification that  $\mathbf{E}\mathcal{A}(A^2;\mathcal{D})= \mathcal{A}_2(t)$. Simplifying and rearranging yields:

\begin{equation}
    \mathbb{A}_t^2 \geq \rho + \mathcal{A}_2^2 + \frac{2e^\alpha(\mathcal{A}_2 - \rho -\mathcal{A}_2^2)}{e^\alpha + 1}
\end{equation}

This lower bound on  $\mathbb{A}^2$ will suffice to get us through the rest of the argument. Note that if both $\mathbb{A}^2 > \mathbb{A}^1$ and $\mathbb{A}^2 > \mathbb{A}^\infty$ hold, then the theorem stands. For expediency, we will prove a sufficient condition instead. Namely, we will show that given the assumed initial conditions, we have that $\mathbb{A}^2 > \mathbb{A}^1 > \mathbb{A}^\infty$. The reason for choosing this particular order is not fundamental since the potential order $\mathbb{A}^2 > \mathbb{A}^\infty > \mathbb{A}^1$ would also be sufficient. However, the since the quantity $\mathbb{A}^1$ is trivial, it is simpler to tame the pairwise comparisons $\mathbb{A}^2 > \mathbb{A}^1$ and $\mathbb{A}^1 > \mathbb{A}^\infty$ than the comparison between $\mathbb{A}^2 > \mathbb{A}^\infty$.

Before proceeding, we briefly pause to provide the reader with a bit of intuition. In a sense, we can see that $\mathbb{A}^2$ is larger than $\mathbb{A}^1$ when the boost gained from having the user selection outweighs the penalty due to the competition between the predictors. Thus, we need a \emph{lower bound} on $\alpha$ in order for $\mathbb{A}^2 > \mathbb{A}^1$. On the other hand, if $\alpha$ is too large, then we get too much of a boost from user selection in the $k \rightarrow \infty$ limit making $\mathbb{A}^\infty$ too large. If we want $\mathbb{A}^2 >  \mathbb{A}^\infty$ we also need an \emph{upper bound} on $\alpha$. We proceed to quantify these bounds and show that under the assumed conditions, in between these bounds remains the \emph{sweet spot} interval for $\alpha$ in which both $\mathbb{A}^2 >  \mathbb{A}^\infty$ and $\mathbb{A}^2 > \mathbb{A}^1$ which implies the non-monotonic phenomena.

Let us examine the constraints placed on key quantities $\mathcal{A}_1$, $\rho$, $\delta$, $\varepsilon$, and $\alpha$ based on the two inequalities. Recall $\mathcal{A}_2 = \mathcal{A}_1 - \delta$. Of course, there are an intrinsic set of constraints on these five quantities by assumption or definition:

\begin{equation}
    1 > \mathcal{A}_1 > \frac{2}{3}  
\end{equation}
\begin{equation}
    \frac{1}{6} >  \delta > 0
\end{equation}

\begin{equation}
    \alpha \geq 0
\end{equation}

\begin{equation}
 1 \geq \rho \geq -1
\end{equation}

\begin{equation}
    \frac{1}{4} > \chi > 0
\end{equation}

As discussed previously, there is a 1-to-1 mapping between $\chi$ to $\varepsilon$ given by$\chi = \frac{4\varepsilon}{2\varepsilon + 1}$. This mapping happens to be concave in $\varepsilon$ to $\chi$, so any tangent line to the curve serves as on overestimate from $\varepsilon$ to $\chi$. Noting that $(\chi,\varepsilon) = (\frac{1}{4}, \frac{1}{14})$ lies on the curve lets us conclude that $\chi(\varepsilon) < 4 \varepsilon$. This will later on helpful when transforming bounds into terms of $\varepsilon$.

The question remains what subset of this region remains after imposing that $\mathbb{A}^2 > \mathbb{A}^1 > \mathbb{A}^\infty$. In other words, how should we refine this subset in order to accomplish the desired relationships in $\mathbb{A}$? Let us begin by analyzing $\mathbb{A}^2 > \mathbb{A}^1$:

\begin{equation}
 \rho + (\mathcal{A}_1 - \delta)^2 + \frac{2e^\alpha(\mathcal{A}_1 - \delta - \rho -(\mathcal{A}_1 - \delta)^2)}{e^\alpha + 1} > \mathcal{A}_1
\end{equation}

It is not difficult to check (perhaps by using a computer algebra system such as Mathematica\footnote{For example, using the following one line Mathematica command: \texttt{ Reduce[\{1 > rho > -1, expalpha > 1, 1 > A1 > 2/3, 1/5 > delta > 0, (-delta + A1)\string^2 + (2 expalpha (-delta + A1 - (-delta + A1)\string^2 - rho))/(1 + expalpha) + rho > A1\}, \{expalpha\}]}}) that this inequality is satisfied if the additional constraints hold:

\begin{equation}
   \alpha > \log \frac{\psi - \rho}{\psi - \rho - 2\delta } \text{ and }     \psi - 2\delta > \rho  \text{ where } \psi = \mathcal{A}_1 -(\mathcal{A}_1 - \delta)^2
\end{equation}

Loosely, these conditions say that $\rho$ cannot be too large. This is natural since the users gain no benefit from two learners when they are perfectly correlated. 

%And, interestingly, they say that $\mathcal{R}^1$ cannot be too large either (notice that here we use the usual notion of asymptotic little-omega with respect to the limit as $\delta \rightarrow 0$).  This is also fairly natural because if $\mathcal{R}_t ^1$ is too close to perfect, then there is little space for improvement based on two models that achieve an individual accuracy of $ \mathcal{R}_t ^2 = \mathcal{R}_t ^1 - \delta$. In particular, for a little bit of slackness, we can simplify the upper bound on $\mathcal{R}_t ^1$  since 

%\begin{equation}
%    1 - 5\delta^{19/10} < \frac{1 + 2\delta + \sqrt{1-4\delta}}{2} 
%\end{equation}

%for all $0 < \delta < \frac{1}{5}$. This can also be easily verified.\footnote{For example, using the following one line Mathematica command: \texttt{ Reduce[1/2 + delta + Sqrt[1 - 4 delta]/2 > 1 - 5 delta\string^(19/10), delta] }} Of course, both of these restrictions vanish at least linearly in $\delta$. We can now proceed to resolve the additional constraints implied by the second key inequality,  $\mathbb{A}_t(2) > \mathbb{A}_t(\infty)$:

We can proceed to look at the second pairing: $\mathbb{A}^1 > \mathbb{A}^\infty$:

\begin{equation}
\mathcal{A}_1 > \frac{e^\alpha}{ e^\alpha + 1 - \chi}
\end{equation}

Elementary algebra yields:

\begin{equation}
\log \frac{(1-\chi)\mathcal{A}_1}{1- \mathcal{A}_1} > \alpha
\end{equation}

It remains to show that this interval $ \log \frac{(1-\chi)\mathcal{A}_1}{1- \mathcal{A}_1} >  \alpha > \log \frac{\psi - \rho}{\psi - \rho - 2\delta } $ is guaranteed to exist.

To do this, note that since $ 0 < \chi < \frac{1}{6}$ and $ 1  > \mathcal{A}_1 > 2/3$ we have that:

\begin{equation}
\frac{(1-\chi)\mathcal{A}_1}{1- \mathcal{A}_1} > \min_{0 < x < \frac{1}{6} , \frac{2}{3} < a < 1} \frac{(1-x)a}{1- a} =  \frac{5}{3}
\end{equation}

Thus we are left to ponder the inequality: $5/3 > \frac{\psi - \rho}{\psi - \rho - 2\delta }$. As before, with the aid of computer algebra\footnote{ For example, using the following one line Mathematica command: \texttt{Reduce[\{A1-A1\string^2 -2delta + 2A1 delta - delta\string^2 > rho >-1 ,1/5>delta>0, 1 > A1 > 2/3, (A1 - A1\string^2 + 2A1 delta - delta\string^2 - rho)/(A1 - A1\string^2 - 2 delta +2 A1 delta - delta\string^2 - rho) <5/3\},\{rho\}]}} we can translate the above inequality into constraints on the quantities of interest. The following additional constraint on $\rho$ is sufficient to imply $\mathbb{A}^2 > \mathbb{A}^\infty$:

% As before, with the aid of computer algebra\footnote{ For example, using the following one line Mathematica command: \texttt{Reduce[\{R2 - R2\string^2 > rho > -R2, expalpha > 1, 1 > R2 > 2/3, rho + (2 expalpha (R2 - R2\string^2 - rho))/(1 + expalpha) + R2\string^2 > expalpha/(1/2 + expalpha)\},\{rho\}]}}, we can translate the above inequality into constraints on the quantities of interest. The following additional constraints are sufficient to imply $\mathbb{A}_t(2) > \mathbb{A}_t(\infty)$: 

\begin{equation}
    \rho < \psi -6\delta
\end{equation}

%Thus, in contrast to before, we can see that we now have an upper bound on $\alpha$ as well. Finally, we can verify\footnote{ For example, using the following one line Mathematica command: \texttt{ Reduce[\{(R2 - R2\string^2)/(2 delta + R2 - R2\string^2) < (delta + R2)/(2 (1 - delta - R2)), 1 > R2 > 2/3, 1/5 > delta > 0\}, \{R2\}] } } that $\frac{\psi}{\psi + 2\delta} <  \frac{\mathcal{R}_t ^2 + \delta}{2(1 - \mathcal{R}_t ^2 - \delta)}$ holds if $\mathcal{R}_t ^2 < 1 - \delta$ which trivially holds by definition. This implies that given the assumptions on $\mathcal{R}_t ^1$, $\varepsilon$, $\delta$, and $\rho$ there always exists an interval for $\alpha$ that will allow the two key inequalities, $\mathbb{A}_t(2) > \mathbb{A}_t(\infty)$ and  $\mathbb{A}_t(2) > \mathbb{A}_t(1)$ to simultaneously hold. To reiterate, if the following assumptions hold:

To conclude, then there exists an non-empty interval for $\alpha$ given by:

\begin{equation}
      \log \frac{(1-\chi)\mathcal{A}_1}{1- \mathcal{A}_1} >  \alpha > \log \frac{\psi - \rho}{\psi - \rho - 2\delta } 
\end{equation}

such that we can guarantee:

\begin{equation}
   1 < k^* < \infty
\end{equation}

if the following also hold:

\begin{equation}
   \mathcal{A}_1 > \frac{2}{3}  
\end{equation}
\begin{equation}
    \frac{1}{6} >  \delta
\end{equation}

\begin{equation}
    \frac{1}{4} > \chi
\end{equation}

\begin{equation}
    \mathcal{A}_2 - \mathcal{A}_2^2 - 6\delta > \rho
\end{equation}

Finally, we translate back to $\varepsilon$ from $\chi$ by nothing that since $\chi < 4\varepsilon$ it follows: $\log \frac{(1-\chi)\mathcal{A}_1}{1- \mathcal{A}_1} > \log \frac{(1-4\varepsilon)\mathcal{A}_1}{1- \mathcal{A}_1}$. Transforming the constraints then yield:

\begin{equation}
    \frac{1}{14} > \varepsilon
\end{equation}
\begin{equation}
      \log \frac{(1-4\varepsilon)\mathcal{A}_1}{1- \mathcal{A}_1} >  \alpha > \log \frac{\psi - \rho}{\psi - \rho - 2\delta } 
\end{equation}

\end{proof}{}

\bibliographystyle{apalike}
\bibliography{competing_ai}

\end{document}